%% file: main.tex
\documentclass{article}

\PassOptionsToPackage{table}{xcolor}

 \usepackage[main, final]{neurips_2026}

\usepackage[utf8]{inputenc} 
\usepackage[T1]{fontenc}    
\usepackage{nameref,zref-xr}
\usepackage{hyperref}       
\usepackage{url}            
\usepackage{booktabs}       
\usepackage{amsfonts}       
\usepackage{nicefrac}       
\usepackage{xcolor}         
\usepackage{amsmath,amssymb,amsthm}

\usepackage[most]{tcolorbox}
\usepackage{amssymb}
\usepackage{mathtools}
\usepackage{amsthm}
\usepackage{wrapfig}
\usepackage{caption} 
\usepackage{import}
\usepackage{xspace}
\usepackage[inline]{enumitem}
\usepackage{algorithm}
\usepackage{nicefrac}       
\usepackage{microtype} 
\usepackage{multirow}
\usepackage{graphicx}
\usepackage{adjustbox}
\usepackage{tabularx}
\usepackage{hhline}
\usepackage{makecell}
\usepackage{arydshln}
\usepackage{array}
\usepackage{xcolor}
\usepackage{color, colortbl}
\usepackage{algpseudocode}
\usepackage{wrapfig}
\definecolor{Gray}{gray}{0.9}
\definecolor{light_purple}{RGB}{235,236,242}
\definecolor{light_blue}{RGB}{244,249,254}
\definecolor{light_origin}{RGB}{207, 117, 58}
\definecolor{cell_purple}{RGB}{76, 141, 187}
\definecolor{small_blue}{RGB}{27, 63, 125}
\definecolor{small_origin}{RGB}{94, 41, 16}
\definecolor{mypink}{HTML}{D92879}
\definecolor{bgcolor}{RGB}{242, 243, 245}
\definecolor{customcolor}{RGB}{76, 141, 187}
\definecolor{customcolor2}{RGB}{207, 117, 58}

\newtheorem{theorem}{Theorem}
\makeatletter
\def\adl@drawiv#1#2#3{%
        \hskip.5\tabcolsep
        \xleaders#3{#2.5\@tempdimb #1{1}#2.5\@tempdimb}%
                #2\z@ plus1fil minus1fil\relax
        \hskip.5\tabcolsep}
\newcommand{\cdashlinelr}[1]{%
  \noalign{\vskip\aboverulesep
            \global\let\@dashdrawstore\adl@draw
            \global\let\adl@draw\adl@drawiv}
  \cdashline{#1}
  \noalign{\global\let\adl@draw\@dashdrawstore
            \vskip\belowrulesep}}
\makeatother

\usepackage{babel}
\providecommand{\lemmaname}{Lemma}
\providecommand{\theoremname}{Theorem}

\theoremstyle{plain}

\theoremstyle{plain}

\newcolumntype{a}{>{\columncolor{Gray}}c}
\newcolumntype{B}{>{\columncolor{blue!4}}l}
\newcolumntype{d}{>{\columncolor{brown!4}}l}
\newcolumntype{q}{>{\columncolor{green!2}}c}
\newcolumntype{H}{>{\setbox0=\hbox\bgroup}c<{\egroup}@{}}
\newcolumntype{R}{>{\columncolor{red!4}}l}
\newcolumntype{P}{>{\columncolor{purple!4}}c}
\newcolumntype{Y}{>{\columncolor{yellow!4}}c}
\newcolumntype{Z}{>{\setbox0=\hbox\bgroup}c<{\egroup}@{\hspace*{-\tabcolsep}}}

\makeatletter
\@ifundefined{proposition}{\newtheorem{proposition}[theorem]{Proposition}}{}
\@ifundefined{corollary}{\newtheorem{corollary}[theorem]{Corollary}}{}
\@ifundefined{lemma}{\newtheorem{lemma}[theorem]{Lemma}}{}
\makeatother

\usepackage{babel}
\providecommand{\lemmaname}{Lemma}
\providecommand{\theoremname}{Theorem}

\title{Rationale-Guided Policy Optimization: Learning to Reason with Adaptive Rationale Scaffolding}

\newcommand{\RGPO}{\textsc{RGPO}}

\tcbuselibrary{theorems,skins,breakable}

\newtcbtheorem{Example}{\bfseries Example}{
  enhanced,
  breakable,
  drop shadow={black!50!white},
  coltitle=black,
  top=0.3in,
  attach boxed title to top left={
    xshift=1.5em,
    yshift=-\tcboxedtitleheight/2
  },
  varwidth boxed title=\dimexpr\linewidth-3em\relax,
  boxed title style={
    size=small,
    colback=pink,
    halign=left
  }
}{Example}
\definecolor{originalquestion}{HTML}{8C564B} 
\definecolor{partialsolution}{HTML}{FF7F0E}  
\definecolor{previousresponse}{HTML}{1F77B4}  
\definecolor{judgefeedback}{HTML}{D62728}     
\definecolor{instruction}{HTML}{9467BD}       
\definecolor{refinedanswer}{HTML}{2CA02C}   

\author{%
  Hoang Phan \\
  New York University\\
  \texttt{hvp2011@nyu.edu} \\
  \And
  Minh Pham \\
  New York University\\
  \texttt{mnpham@nyu.edu} \\
  \And
  Chau Pham \\
  University at Buffalo\\
  \texttt{haichaup@buffalo.edu} \\
  \AND
  Chinmay Hegde \\
  New York University\\
  \texttt{chinmay.h@nyu.edu} \\
  \And
  Trung Le \\
  Monash University\\
  \texttt{trunglm@monash.edu} \\
  \And  
  Qi Lei \\
  New York University\\
  \texttt{ql518@nyu.edu} 
}

\begin{document}

\maketitle

\input{sec/0_abstract}

\input{sec/1_intro}

\input{sec/2_related_work}
\input{sec/3_background}
\input{sec/4_method}

\input{sec/5_experiment}
\input{sec/6_conclusion}


\section*{Acknowledgments}
QL acknowledges support of NSF DMS-2523382 and DOE Office of Science under Award \#DESC0024721. 

\bibliographystyle{unsrt}
\bibliography{ref}

\newpage
\appendix
\input{sec/7_supp}
\end{document}

%% file: sec/0_abstract.tex
\begin{abstract}
On-policy reinforcement learning has become a central paradigm for improving the reasoning abilities of large language models. However, its effectiveness is often limited by reward sparsity: when a model fails to discover correct trajectories for difficult problems, the optimization process receives little useful signal and may stagnate. Existing approaches mitigate this issue by incorporating off-policy demonstrations, expert traces, or model-generated solutions, but they typically require the auxiliary data to match the format of the reinforcement-learning task, often relying on rejection sampling from stronger models to obtain suitable training trajectories. We introduce \textbf{Rationale-Guided Policy Optimization (RGPO)}, a framework that adaptively leverages ground-truth rationale information according to the model's current capability while preserving its freedom to explore. Rather than treating reference solutions as fixed imitation targets, RGPO uses them as temporary scaffolds: rationales help the model generate improved responses, after which only higher-reward, model-generated solutions are transferred back to the original unguided setting. This design allows training to exploit available ground-truth information without requiring off-policy data to follow the same format as the RL task. Across both language-only and vision-language reasoning settings, RGPO consistently improves performance over RLVR baselines, and ablation studies show that adaptive rationale guidance is a key contributor to these gains. These results suggest that RGPO offers a practical and general approach for reducing reward sparsity, stabilizing reinforcement learning, and improving reasoning performance in both text-only and multimodal models. Our implementation is available at \url{https://github.com/VietHoang1512/rgpo}.
\end{abstract}

%% file: sec/1_intro.tex
\section{Introduction}

Recent advances in large language model reasoning have shown that reinforcement learning, particularly reinforcement learning with verifiable rewards (RLVR), can elicit long-horizon reasoning  behaviors from pretrained models. Current frontier models, including GPT-6 Astra~\citep{openai2026gpt6astra},
Gemini 4 Argon~\citep{google2026gemini4argon},
Claude Opus 5.5~\citep{anthropic2026opus55},
DeepSeek-V4.1-Flash~\citep{deepseek2026v41flash}, and
Kimi K3~\citep{moonshot2026kimik3}, have continued to advance
performance in mathematical reasoning, coding, tool use, and long-horizon
agentic tasks. Reinforcement learning has become an important component of
post-training for frontier reasoning systems, enabling models to improve
beyond supervised demonstrations through task-level feedback
\citep{schulman2017ppo,shao2024deepseekmath,openai2026gpt6astra,
deepseek2026v41flash}. Despite its effectiveness, on-policy reinforcement learning remains fundamentally constrained by the current capability of the policy model. When the model is trained on problems beyond its evolving reasoning ability, all sampled rollouts for a prompt may be incorrect. In such cases, rewards become sparse or uniform, and the resulting learning signal provides little information about how to reach a correct solution. This capacity--difficulty mismatch is especially severe for weak or small models, where standard on-policy RL may spend substantial computation sampling failed trajectories without obtaining positive examples for improvement \citep{yan2025luffy,liu2025ghpo,zhang2025bread}. Consequently, the training process can stagnate precisely on the hard examples that are most important for improving reasoning. A natural direction for mitigating reward sparsity is to incorporate external guidance, such as reference solutions or expert traces \citep{yan2025luffy,wang2025hint}. Prior work has explored several forms of such guidance. However, directly using external rationales also introduces important challenges. Maximizing the likelihood of those reference solutions may force the model to imitate trajectories that are far from its own policy distribution, causing distribution mismatch, memorization, and limited generalization \cite{chu2025sft}. Off-policy expert traces can provide successful solutions, but they may not reflect the reasoning style or token-level decisions that the current model can naturally reproduce.  

Human learning research suggests that effective reasoning instruction requires a careful balance between independent problem solving and guided assistance. Since initial struggle can prepare learners to better recognize, organize, and internalize later guidance \citep{kapur2008productive,schwartz1998time}, they can benefit from attempting a problem before receiving explicit instruction. At the same time, intelligent tutoring research identifies an “assistance dilemma”: too little help may leave learners stuck, whereas too much help may reduce effort, encourage shallow processing, and weaken transfer \citep{koedinger2007exploring}. Motivated by this perspective, we propose \textbf{Rationale-Guided Policy Optimization (RGPO)}, a simple yet effective framework that builds on the standard RLVR training loop for improving reasoning by using reference rationales as temporary scaffolds for exploration rather than as direct imitation targets.  For each prompt, the model first generates ordinary on-policy rollouts and receives rewards from the verifier. Then, for examples where guidance may be useful, RGPO constructs a second-round refinement prompt that includes the original problem together with a rationale, solution hint, or feedback derived from the training example. The model is asked to generate a new standalone solution to the original problem. If this yields a better response than the original rollout, RGPO applies a supervised update to maximize the likelihood of the refined response conditioned only on the original prompt, rather than on the hint-augmented prompt.

\begin{figure}[t] 
    \centering
     \includegraphics[width=.95\columnwidth]{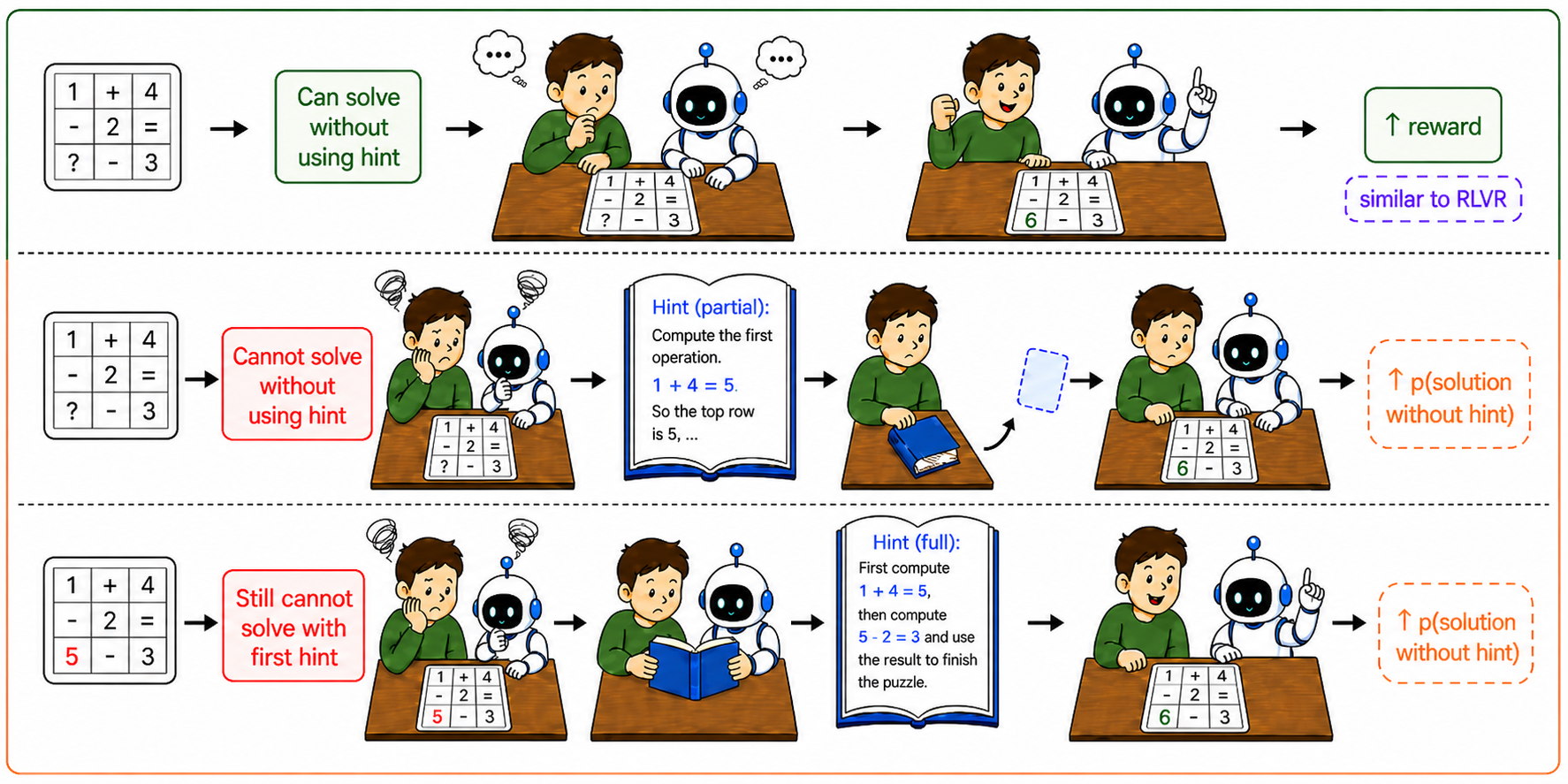}
    \vspace{-.2cm}
    \caption{\textbf{Overview of the proposed method:} Inspired by how humans approach a problem, the model first tries to solve each problem without access to hints. When it succeeds, the correct response is directly reinforced. When it fails, progressively stronger rationales are introduced as temporary scaffolds to guide the model toward a correct solution.}
    \label{fig:overview}
    \vspace{-.6cm}
\end{figure}

To further balance guidance and exploration, we introduce an adaptive hinting mechanism. Rather than always revealing the full reference solution, the scheduler adjusts the amount of guidance according to the model's current success on each problem (see Figure \ref{fig:overview}). If the model repeatedly fails to solve a problem, RGPO reveals more of the rationale to help it reach a successful trajectory. If the model can already solve the problem, the scheduler reduces or removes the hint, encouraging independent reasoning and avoiding unnecessary simplification. This adaptive mechanism allows RGPO to concentrate guidance on difficult examples while preserving on-policy exploration for problems within the model's current capability. Overall, our contributions are summarized as follows:
\begin{itemize}
    \item We propose \textbf{RGPO}, a rationale-guided optimization framework that augments RLVR with hint-assisted refinement. In particular, we show how reference rationales can be used as \emph{temporary reasoning scaffolds} rather than direct imitation targets, reducing reward sparsity without requiring the rationale to match the final-answer format used in RL rollouts.
    \item We introduce an adaptive hint mechanism that dynamically increases or decreases the amount of revealed solution guidance according to the model's current ability.
    \item We conduct experiments on both language models and vision-language models, where RGPO consistently shows superior training sample efficiency over standard RLVR and outperforms other hybrid baselines that combine both on-policy and off-policy training.
\end{itemize}

%% file: sec/2_related_work.tex
\section{Related Work}


\textbf{RL with verifiable rewards.}
PPO optimizes a clipped policy-gradient objective and is widely used for language-model post-training \citep{schulman2017ppo}. GRPO replaces value estimation with group-relative reward normalization, reducing training complexity for mathematical reasoning \citep{shao2024deepseekmath}. DeepSeek-R1 further demonstrated that large-scale RLVR can elicit sophisticated reasoning behaviors without relying entirely on supervised reasoning traces \citep{deepseekai2025deepseekr1}. \RGPO{} is compatible with either PPO or GRPO: it leaves the first-round RL update intact and adds a second-round reward-gated distillation term.

\textbf{Off-policy guidance and mixed SFT--RL training.}
A major limitation of pure on-policy RL is that the policy can only learn from trajectories it already samples. LUFFY addresses this by incorporating off-policy reasoning traces while using policy shaping to avoid rigid imitation \citep{yan2025luffy}. CHORD reframes SFT as a dynamically weighted auxiliary objective within RL \citep{zhang2025chord}. Minimalist rejection-sampling and reinforce-style methods also show that filtering positive trajectories can be a strong baseline \citep{xiong2025minimalist}. \RGPO{} differs in that its auxiliary examples are generated online by the current policy under rationale-guided refinement, then trained under the original prompt distribution. GHPO adjusts the amount of guidance according to the model's ability to solve the problem \citep{liu2025ghpo}. QuestA augments hard questions with partial solutions \citep{li2025questa}. BREAD inserts expert prefixes and branches rollouts from them \citep{zhang2025bread}. HINT supplies heuristic hints and argues that external guidance should preserve training affinity \citep{wang2025hint}. 


\textbf{Failure-conditioned guidance.} The design of RGPO is primarily motivated by cognitive studies on when and how guidance should be provided. Effective reasoning instruction depends on both the timing and form of guidance, as learners often benefit from attempting problems before receiving instruction, because initial struggle can help them notice relevant structures, activate prior knowledge, and better internalize subsequent explanations \citep{kapur2008productive,schwartz1998time}. Besides, active recall and testing improve long-term retention more effectively than passive study, motivating an initial phase of independent answer generation \citep{roediger2006test}. The ``assistance dilemma'' \citep{koedinger2007exploring} states that too little help may leave learners stuck, while too much help may reduce effort, encourage shallow processing, and weaken transfer. Studies of help seeking also show that learners may engage in hint abuse, using assistance to obtain answers rather than to understand the underlying reasoning process \citep{aleven2004toward}. Complementary work on worked examples demonstrates that examples can support schema acquisition, but their benefits depend on active engagement, such as self-explanation, rather than copying solution steps \citep{sweller1985use,chi1989self}. Finally, fading worked-out steps helps learners transition from guided examples to independent problem solving, suggesting that explanations and hints are most useful as temporary scaffolds rather than permanent supports \citep{atkinson2003transitioning}. 

%% file: sec/3_background.tex
\section{Background}
\label{sec:background}

\subsection{Supervised Fine-Tuning}
\label{sec:background_sft}

Supervised fine-tuning (SFT) is one of the most widely used post-training
procedures for adapting a pretrained language model to a target task, domain, or
response format. Given a dataset of prompt--response pairs
$\mathcal{D}_{\mathrm{sft}}=\{(x_i,y_i)\}_{i=1}^{N}$, SFT optimizes the model by
maximizing the conditional likelihood of the reference response:
\begin{equation}
\mathcal{L}_{\mathrm{sft}}(\theta)
=
-\mathbb{E}_{(x,y)\sim \mathcal{D}_{\mathrm{sft}}}
\left[
\sum_{t=1}^{|y|}
\log \pi_{\theta}(y_t \mid x, y_{<t})
\right].
\end{equation}
In instruction-following pipelines, SFT is commonly used to initialize the policy
before preference-based or reward-based optimization. For example,
Ouyang et al. \citep{ouyang2022training} first fine-tuned a language model on human-written
demonstrations and then further optimized it using reinforcement learning from
human feedback.

In reasoning tasks, SFT can teach the model answer formatting, step-by-step
reasoning style, and task-specific conventions. However, human-written or oracle solutions may be concise or stylistically different from model-generated reasoning (i.e. outside the
model's current reasoning distribution). As a result, simply maximizing the
likelihood of such demonstrations can lead to superficial imitation or
memorization \citep{chu2025sft}, rather than improving the model's ability to independently discover
correct reasoning trajectories. This limitation can be particularly pronounced for smaller models when
expert traces impose a reasoning or output format that is difficult for
the current policy to reproduce. Appendix \ref{sec:prompt_format} shows that the 2B model struggles to optimize under the Mulberry-style \citep{yao2024mulberry} template.

\subsection{Policy Optimization for Language Model Reasoning}
\label{sec:background_rl}

Reinforcement learning (RL) provides an alternative post-training paradigm in
which the model is optimized according to a reward signal rather than fixed
demonstrations. For reasoning tasks, the reward can often be automatically
verified, for example by checking whether the final answer matches the ground
truth. Given a prompt $x$, the policy $\pi_{\theta}$ samples a response $y$, and
a reward function $R(x,y)$ assigns a scalar score. The goal is to maximize
expected reward:
\begin{equation}
J(\theta)
=
\mathbb{E}_{x\sim \mathcal{D},\, y\sim \pi_{\theta}(\cdot \mid x)}
\left[
R(x,y)
\right].
\end{equation}




\textbf{Group Relative Policy Optimization (GRPO)} 
\cite{shao2024deepseekmath} estimates the
baseline using a group of responses sampled for the same prompt. For each prompt
$x$, the old policy samples a group of $G$ responses:
\begin{equation}
\{y_1,\ldots,y_G\}
\sim
\pi_{\theta_{\mathrm{old}}}(\cdot \mid x).
\end{equation}
Each response receives a reward $r_i=R(x,y_i)$. The group-relative advantage is
then computed by normalizing rewards within the group:
\begin{equation}
\hat{A}_i
=
\frac{
r_i-\mathrm{mean}(\{r_j\}_{j=1}^{G})
}{
\mathrm{std}(\{r_j\}_{j=1}^{G})
}.
\end{equation}
Using this relative advantage, GRPO optimizes a PPO-style clipped objective:
\begin{equation}
-\mathbb{E}
\left[
\frac{1}{G}
\sum_{i=1}^{G}
\frac{1}{|y_i|}
\sum_{t=1}^{|y_i|}
\min
\left(
\rho_{i,t}(\theta)\hat{A}_i,\,
\mathrm{clip}(\rho_{i,t}(\theta),1-\epsilon,1+\epsilon)\hat{A}_i
\right)
\right]
+
\beta
D_{\mathrm{KL}}
\left(
\pi_{\theta} \,\|\, \pi_{\mathrm{ref}}
\right),
\end{equation}
where we have defined
\begin{equation}
\rho_{i,t}(\theta)
=
\frac{
\pi_{\theta}(y_{i,t}\mid x,y_{i,<t})
}{
\pi_{\theta_{\mathrm{old}}}(y_{i,t}\mid x,y_{i,<t})
}.
\end{equation}



\textbf{REINFORCE++} emphasizes global advantage normalization rather than prompt-local
normalization \citep{hu2025reinforcepp}. If $\mathcal{B}$ denotes a global batch of sampled responses, the
normalized advantage can be written as
\begin{equation}
\hat{A}(x,y)
=
\frac{
R(x,y)-\mu_{\mathcal{B}}
}{
\sigma_{\mathcal{B}}
},
\end{equation}
where $\mu_{\mathcal{B}}$ and $\sigma_{\mathcal{B}}$ are the mean and standard
deviation of rewards or advantages over the global batch. This design aims to
produce more stable advantage estimates than small prompt-level groups. The resulting objective can be viewed as a critic-free policy-gradient objective:
\begin{equation}
-\mathbb{E}_{(x,y)\sim \mathcal{B}}
\left[
\sum_{t=1}^{|y|}
\hat{A}(x,y)
\log \pi_{\theta}(y_t \mid x,y_{<t})
\right]
+
\beta
D_{\mathrm{KL}}
\left(
\pi_{\theta} \,\|\, \pi_{\mathrm{ref}}
\right).
\end{equation}

We follow recent reasoning-focused RL pipelines that reduce or remove KL ($\beta=0$) both to encourage exploration \citep{hu2025open,entropy2025rethink} and for simplicity.

%% file: sec/4_method.tex
\section{Methodology: Rationale-Guided Policy Optimization}



\subsection{Problem setup}

Let $\mathcal{D}=\{(x_i,s_i,a_i)\}_{i=1}^{M}$ denote the training set, where $x_i$ is the original prompt, $s_i$ is a reference rationale, and $a_i$ is the is the ground-truth answer.  We assume access to a verifiable reward function $r(x,y,a)\in\mathbb{R}$, which assigns reward to a model response $y$ for prompt $x$ given answer $a$.  

We train the model for $N+1$ epochs with the first epoch fully unguided to estimate the model's initial capability on each training example. The following $N$ epochs adaptively reveal or hide increments of size $1/N$ of the reference solution according to whether the current model can already solve the example. For each example, we partition the reference solution into $N$ ordered segments,
\[
s_i = [s_{i,1};s_{i,2};\ldots;s_{i,N}],
\]
where each segment contains approximately $1/N$ of the solution. The partition may be defined by token count, sentence boundaries, reasoning steps, or another deterministic segmentation rule. We employ raw character counts for simplicity in our experiments. Let $g_i^{(t)}\in\{0,1,\ldots,N\}$ denote the number of revealed segments for example $i$ at epoch $t$. The corresponding hint is the prefix
\[
h_i^{(t)} = [s_{i,1};\ldots;s_{i,g_i^{(t)}}],
\]
with $h_i^{(t)}=\emptyset$ when $g_i^{(t)}=0$. We initialize $g_i^{(0)}=0$ for all examples, so the first epoch uses no hint. After each epoch, the hint level is updated adaptively: if the model produces no correct rollout for an example, then the next epoch reveals one additional segment; if the model produces at least one correct rollout, then the next epoch removes one segment. Formally, if $C_i^{(t)}$ indicates whether any rollout for example $i$ is correct at epoch $t$, then
\[
g_i^{(t+1)} =
\begin{cases}
\min(g_i^{(t)}+1,N), & \text{if } C_i^{(t)}=0,\\
\max(g_i^{(t)}-1,0), & \text{if } C_i^{(t)}=1.
\end{cases}
\]
This update rule creates a curriculum that automatically moves each example toward the smallest amount of hinting needed to keep the problem learnable. Hard examples gradually receive more of the reference solution until the model can discover a correct trajectory, while easier examples are annealed back toward the original no-hint setting. In this way, RGPO avoids a fixed hint schedule that may over-assist easy problems or under-assist hard ones.

\subsection{Adaptive rationale-guided rollout}

The first epoch is a standard on-policy RL epoch without any solution hint. For each prompt $x_i$, we sample $K$ rollouts from the current policy,
\[
y_{i,1}^{(0)},\ldots,y_{i,K}^{(0)} \sim \pi_\theta(\cdot\mid x_i),
\]
and compute their rewards $r_{i,k}^{(0)}=r(x_i,y_{i,k}^{(0)},a_i)$. We estimate the initial solvability of each example under the current policy. We define
\[
C_i^{(0)}=\mathbb{I}[\max_k r_{i,k}^{(0)}\geq \tau_{\mathrm{correct}}]
\]
 If $C_i^{(0)}=0$, then none of the sampled unguided responses solved the problem, and RGPO reveals the first $1/N$ of the reference solution for this example in the next epoch. If $C_i^{(0)}=1$, then the example remains unguided in the next epoch because the model has already demonstrated that the problem is learnable without assistance.

For epoch $t\in\{1,\ldots,N\}$, each example is assigned a hint prefix $h_i^{(t)}$ according to the previous epoch's correctness. We then perform two rollout passes. The first pass is the ordinary unguided rollout from the original prompt,
\[
y_{i,k}^{(t)} \sim \pi_\theta(\cdot\mid x_i),
\]
which is used for the main on-policy RL objective. The second pass is a guided revision rollout, used only when a nonempty hint is available or when feedback from the first pass is included. The guided prompt is constructed as
\[
\tilde{x}_i^{(t)} = \mathrm{Reprompt}(x_i,h_i^{(t)},f_i^{(t)}),
\]
where $f_i^{(t)}$ includes verifier feedback and answer-level correctness, ... Note that  $\tilde{x}_i^{(t)}$ is used only to generate candidate improved trajectories.  The guided rollout is sampled as
\[
\tilde{y}_{i,k}^{(t)} \sim \pi_\theta(\cdot\mid \tilde{x}_i^{(t)}),
\]
and evaluated using the same reward function,
\[
\tilde{r}_{i,k}^{(t)} = r(x_i,\tilde{y}_{i,k}^{(t)},a_i).
\]
We compare the guided candidate to the unguided rollout distribution from the same epoch. A guided response is accepted into an auxiliary  buffer only if it improves over the unguided attempts:
\[
\tilde{y}_{i,k}^{(t)} \in \mathcal{B}_{\mathrm{RGPO}}
\quad \Longleftrightarrow \quad
\tilde{r}_{i,k}^{(t)} > \max_{j} r_{i,j}^{(t)} + \delta,
\]
where $\delta\geq 0$ is a margin. In the simplest exact-reward setting, $\delta=0$, so a guided response is accepted whenever it achieves a higher verifier reward than all unguided rollouts for the same example. This reward gate is crucial because the reference solution prefix may not always help the model reason correctly. Instead of assuming that every hint-conditioned response is useful, RGPO treats the guided pass as a proposal distribution and keeps only proposals that improve measured task reward.

\subsection{Policy optimization objective}

RGPO combines a main on-policy RL objective with an auxiliary reward-gated supervised objective. The RL objective is computed on the unguided rollouts sampled from $\pi_\theta(\cdot\mid x_i)$, so the policy is directly optimized for the deployment condition. 
The auxiliary supervised objective is computed on accepted guided responses, but conditioned on the original prompt:
\[
\mathcal{L}_{\mathrm{SFT}}(\theta)
=
-\mathbb{E}_{(x_i,\tilde{y})\in \mathcal{B}_{\mathrm{RGPO}}}
\left[
\log \pi_\theta(\tilde{y}\mid x_i)
\right].
\]
This objective intentionally maximizes $p_\theta(\tilde{y}\mid x_i)$ rather than $p_\theta(\tilde{y}\mid \tilde{x}_i^{(t)})$. The distinction is central to RGPO, because training on $\tilde{x}_i^{(t)}$ would teach the model to rely on rationales at inference time. Training on $x_i$ instead transfers the benefit of the guided revision back into the original task distribution. The full objective for each epoch is
\[
\mathcal{L}_{\mathrm{total}}(\theta)
=
\mathcal{L}_{\mathrm{RL}}(\theta)
+
\lambda_t \mathcal{L}_{\mathrm{SFT}}(\theta)
\]
where $\mathcal{L}_{\mathrm{RL}}$ is the main RLVR objective (e.g. with GRPO or REINFORCE++), and $\lambda_t$ controls the weight of the auxiliary term. Algorithm \ref{alg:rgpo} gives the complete training procedure.
\subsection{Why RGPO helps under sparse rewards}
\label{sec:theory}
Consider binary verifier reward and
let $p \in [0,1]$ be the success probability of one unguided rollout.  RGPO draws $K$
unguided responses and, when guidance is active, $K_g$ guided revisions.  If
all unguided responses fail, let $s$ denote the success probability of one
guided revision.

\begin{proposition}[]
\label{prop:main-recovery}
At equal response budget $K+K_g$, the probabilities of discovering at least
one successful trajectory are
\begin{equation}
P_{\rm RGPO}=1-(1-p)^K(1-s)^{K_g},\qquad
P_{\rm plain}=1-(1-p)^{K+K_g}.
\label{eq:main-recovery}
\end{equation}
Hence $P_{\rm RGPO}>P_{\rm plain}$ iff $s>p$.  In particular, on the
all-failure event---where group-centered binary-reward GRPO has zero
reward-advantage signal---RGPO obtains an accepted positive target with
probability $(1-p)^K[1-(1-s)^{K_g}]$.
\end{proposition}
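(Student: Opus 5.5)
The argument is a direct computation with independent Bernoulli trials; the only real content is fixing the probabilistic model. I would assume the $K$ unguided rollouts are i.i.d.\ with success probability $p$. I would also assume that, conditional on the all-failure event $F=\{\text{all } K \text{ unguided rollouts fail}\}$, the $K_g$ guided revisions are conditionally i.i.d.\ with success probability $s$. Note that $s$ is defined only on $F$, so the computation should condition on $F$ rather than assume unconditional independence between the two passes.

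\textbf{Computing the two probabilities.} For RGPO, the complement of ``at least one success'' is the event that all unguided rollouts fail and then all guided revisions fail. Hence
\[
1-P_{\rm RGPO}=\Pr[F]\cdot\Pr[\text{all guided fail}\mid F]=(1-p)^K(1-s)^{K_g}.
\]
For the plain baseline, all $K+K_g$ responses are unguided i.i.d.\ draws, so $1-P_{\rm plain}=(1-p)^{K+K_g}$. This gives (\ref{eq:main-recovery}).

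\textbf{Comparison.} Subtracting,
\[
P_{\rm RGPO}-P_{\rm plain}=(1-p)^K\bigl[(1-p)^{K_g}-(1-s)^{K_g}\bigr].
\]
For $p<1$ and $K_g\ge1$, the prefactor is positive, and $u\mapsto u^{K_g}$ is strictly increasing on $[0,1]$. So the bracket is positive exactly when $1-p>1-s$, i.e.\ $s>p$. I would flag the degenerate cases explicitly: the equivalence needs $p<1$ and $K_g\ge 1$, since otherwise both sides are equal. I would state these as implicit hypotheses of the proposition.

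\textbf{The all-failure claim.} On $F$, every reward in the GRPO group is $0$, so the group mean is $0$ and every centered advantage $r_i-\mathrm{mean}$ vanishes; the normalized form is $0/0$ and is conventionally set to zero. This gives zero reward-advantage signal. Under the acceptance rule with $\delta=0$ and binary rewards, on $F$ we have $\max_j r_{i,j}=0$, so a guided response is accepted iff its reward is $1$. Hence an accepted positive target exists iff at least one guided revision succeeds, and
\[
\Pr[F\cap\{\text{some guided success}\}]=(1-p)^K\bigl[1-(1-s)^{K_g}\bigr].
\]

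\textbf{Main obstacle.} The main subtlety is not computational but the modeling step: making precise that $s$ is a conditional success probability given $F$. I would handle it by writing every probability as $\Pr[F]$ times a conditional probability, so that no independence across the two passes is needed.
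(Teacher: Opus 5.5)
Your proof is correct and follows the paper's argument. The paper's appendix notes that any unguided success lifts the gate threshold to at least $1$, while on $F$ the threshold $\delta<1$ makes acceptance equivalent to success, giving $N_{\rm acc}\sim\operatorname{Binomial}(K_g,s)$ given $F$; it then reduces the comparison to $(1-s)^{K_g}<(1-p)^{K_g}$ with $K_g\ge 1$, exactly as you do. Two small remarks: your gate argument works for any margin $0\le\delta<1$, not only $\delta=0$, and the caveat $p<1$ is unnecessary, since when $p=1$ both $P_{\rm RGPO}>P_{\rm plain}$ and $s>p$ are false.
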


Hence, at a fixed rollout budget, guidance improves the probability of finding a successful trajectory exactly when the guided success probability exceeds the unguided one.  The scheduler reduces the hint level as the model becomes more likely to solve the problem without guidance. For $0<g_t<N$,
\begin{equation}
\mathbb{E}[g_{t+1}-g_t\mid\mathcal H_t]
=2(1-p_t)^K-1,
\label{eq:main-drift}
\end{equation}
so hints have negative drift once
$p_t>p_\star:=1-2^{-1/K}$ 

\begin{theorem}[]
\label{thm:main-rare-success}
Under the assumptions of the two-response model, suppose sufficiently strong hints
produce a guided success with probability bounded below by a constant $c>0$
while $p_t<\tau$.   Holding the model and hint-quality constants fixed, as
$p_0\downarrow0$,
\begin{equation}
\mathbb{E}[T_\tau^{\rm RGPO}]
=O\!\left(\log\frac1{p_0}\right),\qquad
\mathbb{E}[T_\tau^{\rm plain}]
=\Omega\!\left(\frac1{p_0}\right),
\label{eq:main-rates}
\end{equation}
where plain GRPO uses the same total number of responses per visit.  
\end{theorem}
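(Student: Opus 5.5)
The plan is to make the "two-response model" explicit and then treat the success probability $p_t$ as a Markov chain whose transitions are driven by whether a successful trajectory is found on each visit. I take the model to be as follows. On each visit to the example, the learner either finds a successful trajectory (an unguided success, or an accepted guided success under RGPO) or finds none. A found success triggers an update that raises the log-odds, or multiplies $p_t$, by a fixed factor $\gamma>1$, i.e. $p_{t+1}\ge \min(\gamma p_t,\tau)$. A failed visit leaves $p_t$ unchanged, since on the all-failure event GRPO has zero advantage and RGPO has no accepted target. $T_\tau$ is the first visit at which $p_t\ge\tau$. With this reading, Proposition~\ref{prop:main-recovery} gives the per-visit discovery probabilities directly:
\[
P_{\rm plain}(p)=1-(1-p)^{K+K_g}\le (K+K_g)p,
\]
\[
P_{\rm RGPO}(p)\ge 1-(1-c)^{K_g}=:c'>0
\]
whenever the hint is strong enough.

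For the RGPO upper bound I proceed in two phases. \emph{Phase 1} is the hint ramp. While no success is found, the scheduler increments $g_t$ by one per visit; this is the $C=0$ branch of the update rule. So after at most $N$ visits the hint is at the "sufficiently strong" level, and from then on every visit with $p_t<\tau$ succeeds with probability at least $c'$. Each success decrements $g$ by one, but a subsequent failure raises it again. The hint level therefore stays within a bounded window near the strong level, and each excursion costs $O(1)$ expected visits, by a geometric bound using $c'$ together with the drift formula \eqref{eq:main-drift}.

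\emph{Phase 2} counts successes. Reaching $\tau$ from $p_0$ requires $m=\lceil\log_\gamma(\tau/p_0)\rceil=O(\log(1/p_0))$ successes. The waiting time between successes is stochastically dominated by $N$ plus a geometric variable with parameter $c'$. Wald's identity then gives
\[
\mathbb{E}[T_\tau^{\rm RGPO}]\le m\,(N+1/c')=O\!\left(\log\tfrac1{p_0}\right).
\]

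For the plain lower bound, note that $p_t$ can only increase on success visits, and each success multiplies $p$ by at most $\gamma$. Consider the first success. Before it, $p_t\equiv p_0$, so the waiting time is geometric with parameter at most $(K+K_g)p_0$. Its mean is therefore at least $1/((K+K_g)p_0)$, which already gives $\mathbb{E}[T_\tau^{\rm plain}]=\Omega(1/p_0)$ as soon as $p_0<\tau$. One could sharpen this by summing over the $m$ geometric stages with parameters $(K+K_g)\gamma^j p_0$, but the first stage dominates and suffices.

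The main obstacle is Phase 1 together with the bookkeeping for the hint level. The scheduler decrements hints after every success, so the guided success probability is not uniformly at least $c$. I would handle this by defining "sufficiently strong" as $g\ge g^\star$ and showing that the excess $g_t-g^\star$ behaves like a reflected random walk. A success pushes it down by one; once $g<g^\star$ the guidance may be useless, but then each failure pushes it back up. The time spent below $g^\star$ between successes is bounded by $g^\star\le N$ failures, so the constant $N+1/c'$ absorbs this. The constants depend on $N$, $K$, $K_g$, $c$, $\gamma$ and $\tau$, but not on $p_0$, which is all the asymptotic statement requires.
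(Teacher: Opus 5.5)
Your skeleton matches the paper's. You need $O(\log(1/p_0))$ progress events, and each costs $O(1)$ expected visits: a run of at most $g_*\le N$ failures ramps the hint up, and a strong hint then yields a success with constant probability. For plain GRPO, nothing moves before a geometric wait of mean at least $1/((K+K_g)p_0)$, and that lower bound is fine as written. The gap is in what you count as progress for RGPO.

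You posit that every found success, unguided ones included, multiplies $p_t$ by $\gamma$. In the paper's two-response model the RL term is group-centered GRPO. A visit on which all $K$ unguided responses are correct ($n_t=K$) therefore has zero advantage, for exactly the reason you invoke on the all-failure event ($b_{K,\epsilon}(K)=0$). Nothing can be accepted on such a visit either, because the gate threshold is then $1+\delta$. So it leaves $p_t$ unchanged, yet it sets $C_t=1$, which lowers any positive hint level. Mixed groups ($0<n_t<K$) do give a positive increment, so the problem is exactly these all-success visits. Hence ``$m$ successes suffice'' is false as stated. Your $N+\mathrm{Geom}(c')$ domination bounds the time to the next success, not the time to the next update. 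Your argument is consistent only inside your own, more generous, stylized model.

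Separately, your scheduler bookkeeping is off. $C_t$ is computed from the unguided rollouts alone, and an acceptance requires all of them to fail, so a guided success never lowers $g_t$. Only unguided successes lower it, and the drift formula plays no role.

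The paper sidesteps this by counting only accepted guided responses as progress. It uses merely that the GRPO increment is nonnegative, so $p_t$ is nondecreasing. While $p_t<\tau$, an acceptance raises the logit by $\eta\lambda_t(1-p_t)\ge\eta\lambda_{\min}(1-\tau)$. This derives your factor $\gamma$ instead of assuming it, and gives $m=\lceil(\operatorname{logit}\tau-\operatorname{logit}p_0)/(\eta\lambda_{\min}(1-\tau))\rceil=O(\log(1/p_0))$.

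The ingredient you are missing is the $p_0$-free bound $(1-p_t)^K\ge(1-\tau)^K$ on the all-failure probability. In any block of $\ell=g_*+1$ visits before $T_\tau$, all $\ell$ unguided groups fail with probability at least $(1-\tau)^{K\ell}$. That alone forces $g_t\ge g_*$ at the last visit, where acceptance then has probability at least $c$. Each block thus progresses with probability at least $\beta=(1-\tau)^{K\ell}c$, and a negative-binomial comparison gives $\mathbb{E}[T_\tau]\le\ell m/\beta$.

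If you prefer to keep your inter-success domination, there is an alternative repair. Define $m$ using $\min\bigl(\min_{0<n<K}b_{K,\epsilon}(n),\,\lambda_{\min}(1-\tau)\bigr)>0$, so that mixed groups also count as progress. Then add that, while $p_t<\tau$, a success is a progress event with conditional probability at least $1-\tau^K/(1-(1-\tau)^K)>0$ for $K\ge2$, and thin accordingly. Either way, some such argument is needed to get from successes to actual updates.
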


%% file: sec/5_experiment.tex
\section{Experiments}
\label{sec:experiments}
We begin with a controlled simulation designed to isolate two challenges that motivate RGPO: learning from hard problems for which on-policy sampling rarely discovers a correct solution, and using external rationales that are helpful but not perfectly reliable. We then evaluate RGPO in language-model and vision-language-model post-training to study whether these advantages translate to realistic reasoning tasks. All language- and vision-language-model experiments are conducted using the verl framework \citep{sheng2024hybridflow}.

\subsection{Controlled Simulation}
\label{sec:sim}

\paragraph{Setup.}
Each synthetic problem requires the policy to make a short sequence of reasoning decisions before producing the final answer. Most decision types are \emph{easy}: the initial policy already assigns high probability to a correct choice. A smaller set is \emph{hard}: the initial policy has no preference for the correct choice. Easy problems contain only easy decisions, whereas hard problems contain four difficult decisions. The same decision types recur across different problems, so learning how to handle a difficult decision on one problem can transfer to other problems that require it. Each problem is paired with a reference rationale that can be revealed progressively. To model imperfect supervision, the reference is systematically incorrect for three of the eight hard decision types. Revealing the full rationale also exposes the final answer, creating a potential shortcut. We compare RGPO with unguided GRPO, RL performed in the hinted context, RL with supervised learning on the reference, and always-on full-rationale training, together with ablations of RGPO's reward gate and adaptive hinting. All methods are evaluated on the original problems without hints. 

\begin{figure}[t]
\centering
\includegraphics[width=\columnwidth]{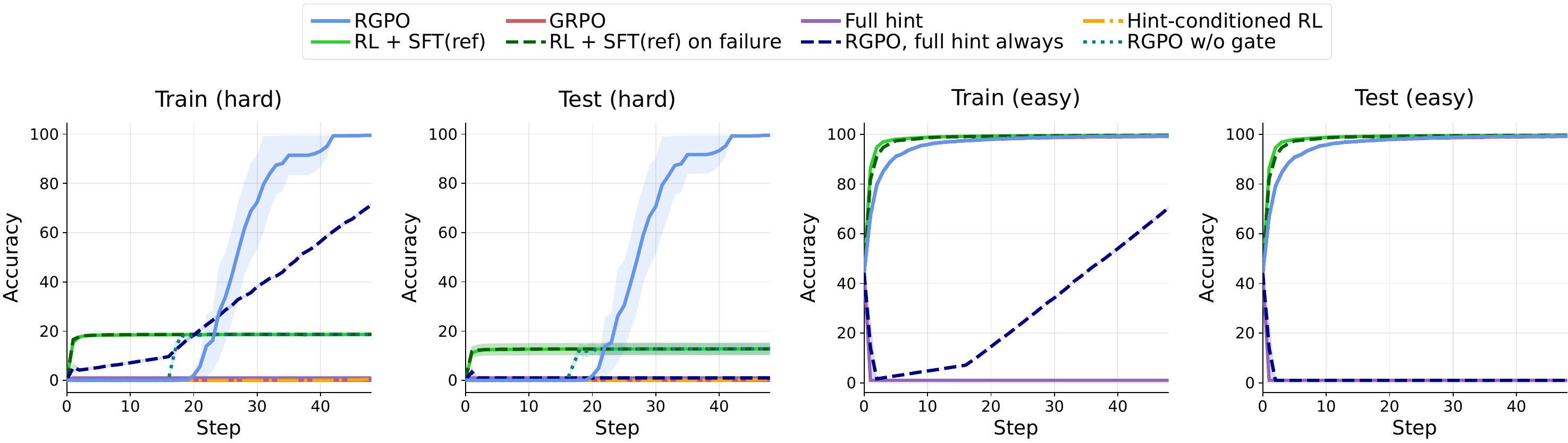}
\caption{\textbf{Controlled simulation.} Accuracy without hints on easy and hard problems, evaluated on both training and held-out problems (mean $\pm$ s.e.\ over 5 seeds). Unguided GRPO readily learns the easy problems but makes no progress on the hard ones. 
}
\label{fig}
\vspace*{-0.4cm}
\end{figure}
Figure~\ref{fig} highlights three aspects of how guidance affects learning. First, \textbf{guidance is
most useful when ordinary on-policy exploration provides little signal}: GRPO reaches $99.1\%$ test accuracy on easy problems but remains at $0.0\%$ on hard problems after three epochs. On
these problems, almost every group of sampled trajectories is incorrect, leaving little useful reward
variation from which to learn. RGPO instead reaches $99.5\%$ hard test accuracy while preserving performance on easy problems. Second, \textbf{more guidance is not always better}. Directly training on the imperfect reference plateaus at $12.8\%$ hard test accuracy because its systematic errors are reinforced. Removing RGPO’s reward gate produces essentially the same behavior, because incorrect guided trajectories are then distilled
together with correct ones. At the opposite extreme, training with the full rationale available allows the model to exploit the exposed final answer, yet performance falls to the $1\%$ chance level when the rationale is removed at evaluation. Third, \textbf{RGPO uses the rationale as a search aid rather than as a training target}. Guided responses are distilled only when they are correct and improve over the unguided samples. In the main setting, only $7.4$ guided responses are accepted per run on average, yet these few verified trajectories are sufficient to bootstrap learning because the same hard decisions recur across problems.  
\subsection{RGPO on Language Models}
For the language-model experiments, we follow the training protocol of HINT~\citep{wang2025hint} and use Qwen2.5-3B~\citep{yang2024qwen25} as the base model. To ensure a consistent comparison across methods, we train on the DAPO-Math-27K split used by HINT, which is derived from the DAPO-Math170K dataset ~\citep{yu2025dapo}. We optimize the model with GRPO ~\citep{shao2024deepseekmath}. Unlike HINT, which trains for 10 epochs, we train for 3 epochs, as we observe that the model already outperforms the compared baselines under this setting. We use a training batch size of 256, a maximum prompt length of 1024 tokens, and a maximum response length of 4096 tokens. For each prompt, the rollout engine samples eight responses, four with hints and four without hints to ensure a fair comparison. The base model is optimized with AdamW~\citep{loshchilov2019decoupled} using a learning rate of $1\times10^{-6}$. For efficient large-model training, we use FSDP2~\citep{zhao2023pytorch}, together with bfloat16 precision~\citep{kalamkar2019study}, fused kernels, and gradient checkpointing~\citep{chen2016training}. We do not enable parameter or optimizer offloading.

\begin{table*}[t]
\caption{\textbf{Overall Performance Comparison of RGPO vs. Baselines.} RGPO obtains the highest in-distribution average, improving over HINT by 5.0 points (26.8 vs. 21.8), and improves the out-of-distribution average by 2.3 points (32.2 vs. 29.9).}
\centering
\small
\resizebox{.95\linewidth}{!}{
\begin{tabular}{l BBBB R ddd R}
\toprule
\multirow{2}{*}{\cellcolor{white}\textbf{Methods}} &
\multicolumn{4}{c}{\cellcolor{white}\textbf{In-Distribution}} &
\multirow{2}{*}{\cellcolor{white}\textbf{Avg}} &
\multicolumn{3}{c}{\cellcolor{white}\textbf{Out-of-Distribution}} &
\multirow{2}{*}{\cellcolor{white}\textbf{Avg}} \\
\cmidrule(lr){2-5} \cmidrule(lr){7-9}
& \cellcolor{white}{AIME} & \cellcolor{white}{Math} & \cellcolor{white}{Olympiad} & \cellcolor{white}{Minerva} & \cellcolor{white}{} & \cellcolor{white}{ARC} & \cellcolor{white}{GPQA-D} & \cellcolor{white}{MMLU-Pro}\\
Vanilla & 2.9 & 39.8 & 12.0 & 9.8 & 16.1 & 44.8 & 11.4 & 28.8 & 28.3 \\
GRPO & 4.3 & 44.0 & 18.2 & 12.2 & 19.7 & 45.0 & 11.8 & 28.0 & 28.3 \\
\midrule
CHORD & 4.5 & 46.6 & 20.2 & 13.0 & 21.1 & 40.0 & 11.0 & 26.4 & 25.8 \\
LUFFY & 3.3 & 40.0 & 18.0 & 13.2 & 18.6 & 40.8 & 11.2 & 24.0 & 25.3 \\
GHPO & 4.0 & 42.2 & 19.6 & 12.8 & 19.7 & 45.5 & \underline{12.0} & 28.2 & 28.6 \\
QuestA & 3.9 & 42.0 & 19.6 & 12.4 & 19.5 & 44.8 & \underline{12.0} & 29.0 & 28.6 \\
BREAD & 4.1 & 44.4 & \underline{20.4} & \underline{13.4} & 20.6 & 45.5 & 11.8 & 29.2 & 28.8 \\
HINT & \underline{4.9} & \underline{48.6} & 20.2 & \underline{13.4} & \underline{21.8} & \underline{48.8} & 11.8 &  \underline{30.2} & \underline{29.9} \\
\midrule
RGPO & \textbf{5.1} & \textbf{55.2} & \textbf{24.9} & \textbf{22.1} & \textbf{26.8} & \textbf{51.2} & \textbf{14.7} &  \textbf{30.8} & \textbf{32.2}\\
\bottomrule
\end{tabular}
}
\label{tab:main_results}
    \vspace*{-0.6cm}
\end{table*}

 We benchmark against the base model, vanilla GRPO, PPO where applicable, and representative approaches for improving rollout quality or efficiency in reinforcement learning with verifiable rewards such as LUFFY \citep{yan2025luffy}, CHORD \citep{zhang2025chord}, GHPO \citep{liu2025ghpo}, QuestA \citep{li2025questa}, BREAD \citep{zhang2025bread}, and HINT \citep{wang2025hint}. Evaluation is conducted without providing hints at test time. We consider both in-distribution and out-of-distribution settings. For mathematical reasoning, we use AIME24, MATH-500 \citep{hendrycks2021math}, OlympiadBench \citep{he2024olympiadbench} and Minerva Math \citep{lewkowycz2022minerva}. Because AIME24 contains a relatively small number of test items, we report avg@32 on this benchmark, while pass@1 is used for the remaining datasets. To examine broader reasoning transfer, we also evaluate on ARC-Challenge \citep{clark2018arc}, GPQA-Diamond \citep{rein2024gpqa}, and MMLU-Pro \citep{wang2024mmlupro}.

Table \ref{tab:main_results} reports the overall performance of RGPO against the vanilla model, standard GRPO, and other hybrid RL training baselines. Overall, RGPO achieves the best result on every in-distribution benchmark, improving the average score from 19.7 with GRPO to 26.8, and outperforming the strongest prior baseline, HINT, by 5.0 points. The gains are especially pronounced on MATH-500 and Minerva Math, where RGPO improves over HINT by 6.6 and 8.7 points, respectively, suggesting that rationale-guided optimization is particularly effective on problems requiring longer or more structured reasoning. RGPO also generalizes well beyond the training distribution, achieving the highest scores on ARC, GPQA, and MMLU, with an out-of-distribution average of 32.2 compared with 29.9 for the best baseline.

\subsection{RGPO on Vision-Language Models}

For vision-language experiments, we follow the experimental protocol of MINT-CoT~\citep{chen2025mintcot} and use Qwen2-VL-7B-Instruct~\citep{wang2024qwen2} as the base multimodal model. We additionally conduct experiments on Qwen2-VL-2B-Instruct as a lower-capacity base model to test whether the benefit persists at smaller scale. Similar to the language model experiment,  we train each model for 3 epochs with adaptive hinting. We use a training batch size of 512, a maximum prompt length of 768 tokens, and a maximum response length of 1024 tokens. 

\paragraph{Evaluation.}
We follow MINT-CoT and evaluate on multimodal mathematical reasoning benchmarks: MathVista-Math \citep{lu2024mathvista} and MMStar-Math \citep{chen2024we}. For reference, we report results from contemporary VLMs, including R1-VL-7B~\citep{chen2025r1v,zhang2025r1vl}, Open-R1-Multimodal~\citep{open-r1-multimodal}, Mulberry~\citep{yao2024mulberry}, MM-Eureka~\citep{meng2025mmeureka}, LLaVA-OneVision-Qwen2-7B-OV~\citep{li2024llava}, InternVL2-8B~\citep{chen2024internvl}, InternVL2-8B-MPO~\citep{wang2024enhancing}, DeepSeek-VL2~\citep{wu2024deepseekvl2}, and Qwen2.5-VL-7B-Instruct~\citep{bai2025qwen25vl}.  We compare RGPO primarily with contemporary VLMs evaluated under comparable conditions. We report later-released state-of-the-art VLMs only as reference points because they often use stronger post-training pipelines. Our goal is to evaluate whether RGPO improves reasoning when applied to a fixed backbone and compared with contemporaneous models and controlled training baselines. 

\begin{figure}[t] 
    \centering
     \includegraphics[width=1.\columnwidth]{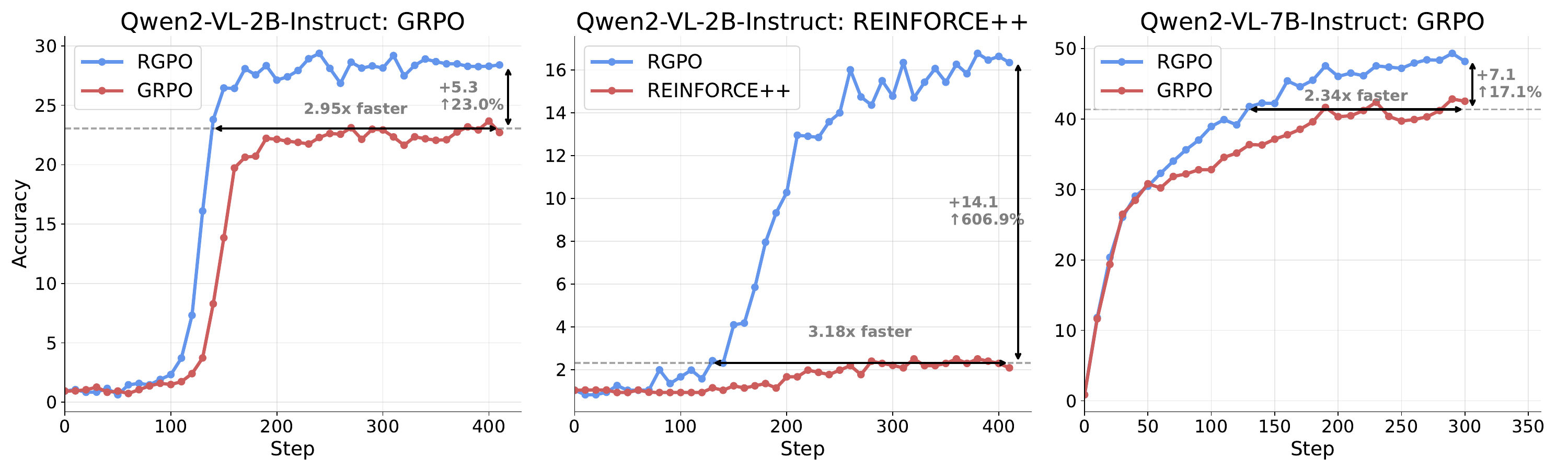}
    \vspace*{-0.5cm}
    \caption{\textbf{RGPO accelerates and improves RLVR training.} Across Qwen2-VL-2B and Qwen2-VL-7B models, RGPO reaches the final performance of RLVR baselines substantially faster and achieves higher final accuracy. The gains are especially pronounced for weaker models and less stable RLVR algorithms (e.g. REINFORCE++).}
    \label{fig:speedup}
    \vspace*{-0.6cm}
\end{figure}

Figure~\ref{fig:speedup} compares RGPO against standard RLVR algorithms, including GRPO and REINFORCE++, on Qwen2-VL-2B-Instruct and Qwen2-VL-7B-Instruct. We report accuracy over training steps to evaluate both final reasoning performance and learning efficiency. Across all settings, RGPO improves the data efficiency over the corresponding RLVR baseline. On Qwen2-VL-2B with GRPO, RGPO reaches the baseline performance in 2.95× fewer training steps and achieves a higher final accuracy. The improvement is even larger when compared with REINFORCE++, where the baseline struggles to learn effectively, while RGPO continues to improve and obtains a substantial final gain. On the stronger Qwen2-VL-7B model, both methods improve during training, but RGPO still converges more efficiently and reaches a higher final accuracy. 

\begin{table}[!ht]
  \caption{\textbf{Performance comparison on MMStar-Math and the mathematical subset of MathVista} We compare RGPO with the base Qwen2-VL-7B-Instruct model, CoT supervised fine-tuning variants, and representative open-source multimodal reasoning models.  Dashes indicate unavailable results.}
  \label{tab:mathvista}
  \centering
  \resizebox{.95\linewidth}{!}{%
  \begin{tabular}{l|c|ccccc}
    \toprule
\multirow{2}{*}{Model} & \multirow{2}{*}{MMStar-Math} & \multicolumn{5}{c}{MathVista-Math}\\
\cmidrule{3-7}
 & & All & GEO & ALG & GPS & TQA \\
\midrule
LLaVA-OneVision-Qwen2-7b-ov~\cite{li2024llava} & -- & 67.04 & 69.34 & 67.04 & 69.71 & 58.06\\
InternVL2-8B~\cite{chen2024internvl} & 66.8 & 62.59 & 62.26 & 62.92 & 62.50 & 62.90\\
InternVL2-8B-MPO~\cite{wang2024enhancing} & -- & 68.52 & 68.87 & 68.91 & 69.71 & 64.52\\
DeepSeek-VL2~\cite{wu2024deepseekvl2} & -- & 65.56 & 63.68 & 65.54 & 63.94 & 70.97\\
Qwen2.5-VL-7B-Instruct~\cite{bai2025qwen25vl} & 66.8 & 66.66 & 65.56 & 66.29 & 65.87 & 69.35\\
Open-R1-Multimodal~\cite{open-r1-multimodal} & 59.2 & 54.81 & 52.36 & 54.68 & 53.37 & 59.68\\
R1-VL-7B~\cite{zhang2025r1vl} & 68.4 & 69.63 & 68.87 & 69.66 & 69.71 & 69.35 \\
Mulberry~\cite{yao2024mulberry} & 66.8 & 68.52 & 67.92 & 68.54 & 68.75 & 67.74 \\
MM-Eureka~\cite{meng2025mmeureka} & -- & \underline{72.59} & \underline{71.22} & \underline{72.66} & \underline{72.60} & \textbf{72.58} \\
\midrule
Qwen2-VL-7B-Instruct~\cite{wang2024qwen2vlenhancingvisionlanguagemodels} & 46.4 & 41.11 & 35.85 & 41.57 & 36.54 & 56.45\\
Original Image CoT SFT & --  & 40.37 &  {38.68} & 40.82 & {39.42} & 43.54  \\
Bounding Box CoT SFT & --  &  65.56 & 63.21 & 65.54 & 63.94 & \underline{70.97} \\
 Text-only CoT SFT &  \underline{67.6} & 64.07  &  64.15   & 64.04  & 64.42  &  62.90 \\
 RGPO & \textbf{70.1} & \textbf{73.65} & \textbf{73.47} & \textbf{73.62} & \textbf{73.89} & {70.35} \\
\bottomrule

\end{tabular}
}
\end{table}

As shown in Table \ref{tab:mathvista}, RGPO delivers the strongest overall performance among the compared methods. On MMStar-Math, RGPO reaches 70.1, outperforming the base Qwen2-VL-7B-Instruct model by 23.7 points and surpassing other open-source multimodal reasoning models such as R1-VL-7B and Mulberry. On MathVista-Math, RGPO achieves the highest overall score of 73.65, improving substantially over the base model’s 41.11 and also outperforming strong reasoning-oriented baselines such as MM-Eureka. The gains are especially clear in the GEO, ALG, and GPS subsets, where RGPO obtains the best scores of 73.47, 73.62, and 73.89, respectively. In contrast, the gain on the Textbook Question Answering subset is relatively modest. This may be due to a small degree of domain overlap between the MINT-CoT~\citep{chen2025mintcot} training set and this subset of MathVista, which leaves less room for additional improvement on this category.

\begin{figure}[t] 
    \centering
    \vspace*{-.4cm}
     \includegraphics[width=1.\columnwidth]{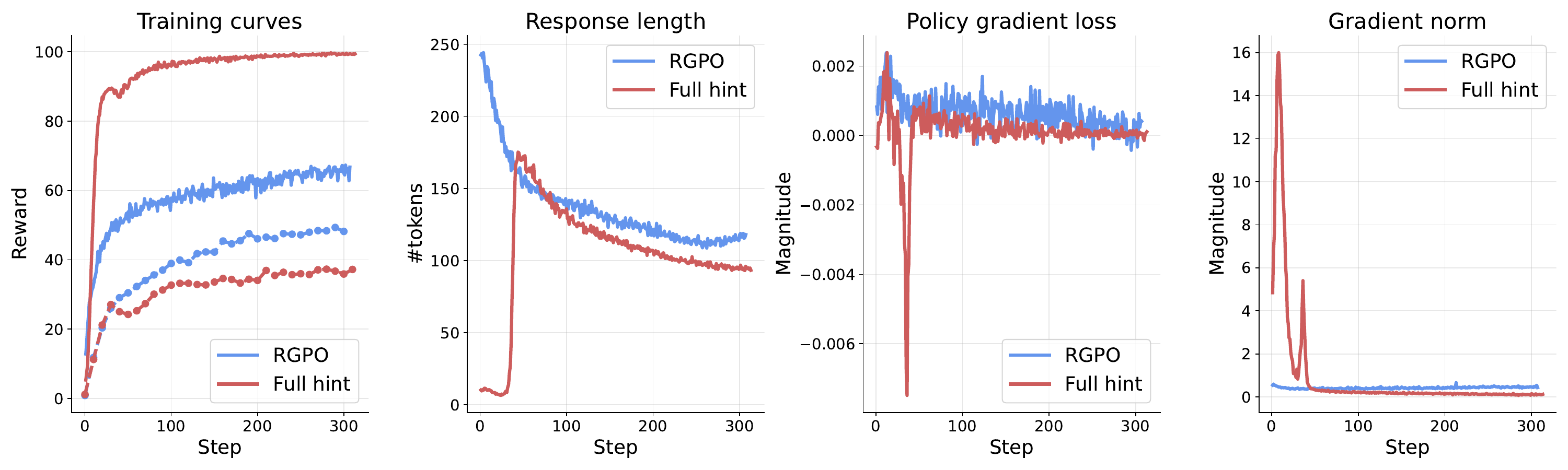}
    \caption{\textbf{ Effect of failure-conditioned hinting versus always-on full hints:} Training (solid lines) and validation (dashed lines) rewards. Full hints rapidly inflate training reward but generalize poorly, while RGPO learns more gradually and achieves stronger no-hint validation performance. }
    \label{fig:full_hint_vs_rgpo}
    \vspace*{-.8cm}
\end{figure}

Figure \ref{fig:full_hint_vs_rgpo} compares RGPO with a full-hint setting, where the model is given the complete hint during training. Although the full-hint model quickly obtains high training reward, this improvement is misleading: the model can simply use the provided hint to output the correct answer with only a few tokens, without learning to perform the reasoning process independently. This behavior indicates shortcut behavior, where the policy exploits the availability of the hint rather than developing robust reasoning ability. The full-hint model initially produces extremely short outputs while still receiving high reward, indicating that it can bypass reasoning and directly answer from the hint. Moreover, the policy-gradient loss and gradient-norm curves show that full-hint training is less stable, with large early spikes and sharp fluctuations. In contrast, RGPO achieves more gradual reward improvement, maintains longer reasoning responses, and exhibits smoother optimization dynamics. These results suggest that always providing hints can create a shortcut that produces less stable optimization and poorer no-hint validation performance, while RGPO’s failure-conditioned hinting mechanism provides guidance only when needed and avoids encouraging the model to rely on hints as a substitute for reasoning.

\begin{wrapfigure}{r}{0.5\columnwidth}
    \vspace*{-2.2em}
    \centering
\includegraphics[width=0.5\columnwidth]{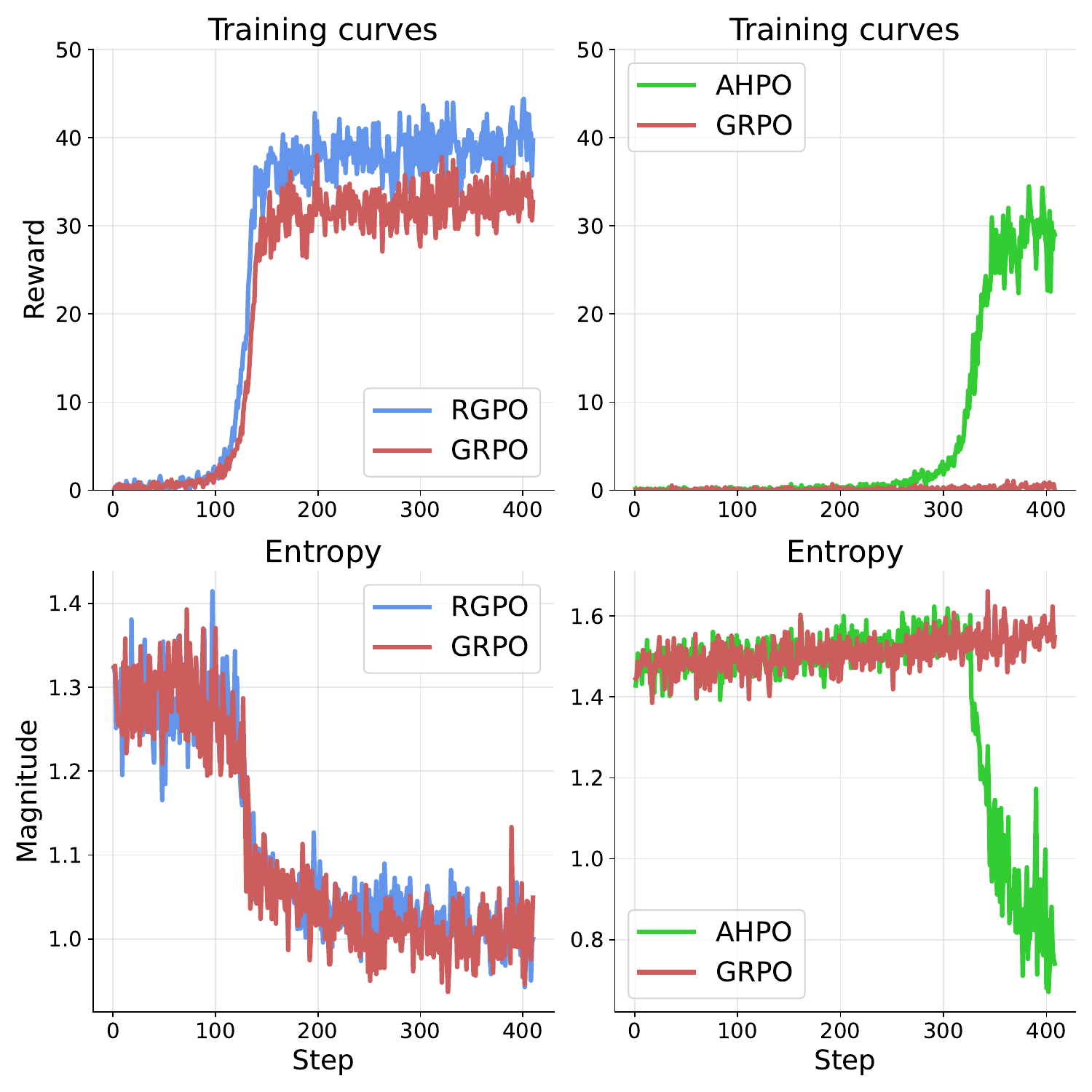}
    \caption{\textbf{Training dynamics under different reasoning prompt formats.} Left: using the concise DeepSeek-R1-style format, RGPO achieves higher reward than GRPO while maintaining a comparable entropy trajectory throughout training. Right: under the original Mulberry-style template, AHPO eventually improves reward after learning the required output structure, but this improvement is accompanied by a sharp entropy collapse, indicating reduced exploration and more deterministic generation behavior.}
    \label{fig:srft_vs_rgpo}
    \vspace{-1em}
\end{wrapfigure}

Figure~\ref{fig:srft_vs_rgpo} highlights how RGPO can yield a strong reasoning model even when the rationale format is poorly matched to the policy. Unlike methods that directly optimize likelihood on the reference rationale, RGPO does not require the policy to reproduce the rationale format and   is therefore compatible with different response templates. With the concise DeepSeek-R1-style prompt, both RGPO and GRPO begin improving rapidly once the model discovers the correct answer format, but RGPO reaches a consistently higher reward plateau. At the same time, its entropy decreases smoothly and remains close to GRPO, suggesting that the performance gain does not come from simply collapsing the policy into a narrow set of outputs. In contrast, when using the more structured Mulberry-style prompt, AHPO \citep{zhao2025mm}, which combines off-policy expert SFT loss with RLVR, shows a delayed reward increase but undergoes a sharp entropy collapse shortly afterward. This suggests that AHPO learns to satisfy the complex template by concentrating probability mass on a limited response pattern, reducing exploration once the formatting objective is acquired. We therefore use the simpler DeepSeek-R1-style format in our main experiments: it makes the reward signal easier to optimize while allowing RGPO to improve reasoning performance without prematurely sacrificing policy diversity.

%% file: sec/6_conclusion.tex
\section{Conclusion}

In this paper, we introduced Rationale-Guided Policy Optimization, an RL framework that improves reasoning by using rationales as temporary scaffolds rather than direct imitation targets. RGPO's design reduces hint dependence by preventing the model from relying on rationales for every example, and reduces rationale memorization by avoiding direct training on the rationale text itself. Motivated by cognitive principles, RGPO treats rationales as instructional interventions that help convert failed attempts into useful learning signals. Empirically, RGPO consistently shows superior performance on post-training of both language and vision-language models.

%% file: sec/7_supp.tex
Due to space constraints, some implementation details, additional analyses, and supplementary results are omitted from the main paper. In this appendix, we provide a more complete description of the RGPO training procedure, including the prompt templates, algorithmic details, and hyperparameter settings. We also include additional experimental results and ablation studies to better understand the effects of rationale guidance, prompt format, and failure-conditioned hinting on model learning and final performance.

\section{Method Details}
\label{app:method}

This section provides implementation details for Rationale-Guided Policy Optimization (RGPO) that are omitted from the main text. We first give the complete training procedure and then clarify several design choices that are important in practice.

\subsection{Training Procedure}
\label{app:algorithm}
Algorithm~\ref{alg:rgpo} summarizes Rationale-Guided Policy Optimization. In our experiments, we set the correctness threshold to $\tau_{\mathrm{correct}}=1.0$, requiring an unguided rollout to be perfectly correct in order to be considered solved. We set the acceptance margin to $\delta=0$, so a rationale-guided revision is added to the auxiliary buffer whenever it achieves a strictly higher verifier reward than the best unguided rollout. The rationale-guided loss coefficient $\lambda_t$ is tuned over ${10^{-2}, 10^{-3}, 10^{-4}}$.

\begin{algorithm}[!ht]
\caption{Rationale-Guided Policy Optimization}
\label{alg:rgpo}
\begin{algorithmic}[1]
\Require Training set $\mathcal{D}=\{(x_i,s_i,a_i)\}_{i=1}^{M}$, policy $\pi_\theta$, verifier reward $r$, number of hint segments $N$, rollouts per prompt $K$, acceptance margin $\delta$
\State Partition each solution $s_i$ into $N$ segments $[s_{i,1};\ldots;s_{i,N}]$
\State Initialize hint level $g_i^{(0)}\gets 0$ for all examples
\For{$t=0$ to $N$}
    \State Clear auxiliary buffer $\mathcal{B}_{\mathrm{RGPO}}$
    \For{each minibatch of examples}
        \State Sample unguided rollouts $y_{i,1}^{(t)},\ldots,y_{i,K}^{(t)}\sim\pi_\theta(\cdot\mid x_i)$
        \State Compute rewards $r_{i,k}^{(t)}=r(x_i,y_{i,k}^{(t)},a_i)$ and advantages for the main RL objective
        \If{$t>0$ and $g_i^{(t)}>0$}
            \State Construct hint prefix $h_i^{(t)}=[s_{i,1};\ldots;s_{i,g_i^{(t)}}]$
            \State Construct guided prompt $\tilde{x}_i^{(t)}=\mathrm{Reprompt}(x_i,h_i^{(t)},f_i^{(t)})$
            \State Sample guided revisions $\tilde{y}_{i,1}^{(t)},\ldots,\tilde{y}_{i,K}^{(t)}\sim\pi_\theta(\cdot\mid \tilde{x}_i^{(t)})$
            \State Compute guided rewards $\tilde{r}_{i,k}^{(t)}=r(x_i,\tilde{y}_{i,k}^{(t)},a_i)$
            \State Add $(x_i,\tilde{y}_{i,k}^{(t)})$ to $\mathcal{B}_{\mathrm{RGPO}}$ if $\tilde{r}_{i,k}^{(t)}>\max_j r_{i,j}^{(t)}+\delta$
        \EndIf
        \State Update $\pi_\theta$ using $\mathcal{L}_{\mathrm{RL}}+\lambda_t\mathcal{L}_{\mathrm{SFT}}$
    \EndFor
    \For{each example $i$}
        \State Set $C_i^{(t)}=\mathbb{I}[\max_k r_{i,k}^{(t)}\geq\tau_{\mathrm{correct}}]$ using unguided rollouts
        \If{$C_i^{(t)}=0$}
            \State $g_i^{(t+1)}\gets \min(g_i^{(t)}+1,N)$
        \Else
            \State $g_i^{(t+1)}\gets \max(g_i^{(t)}-1,0)$
        \EndIf
    \EndFor
\EndFor
\end{algorithmic}
\end{algorithm}

\textbf{On-Policy Self-Distillation (OPSD)} Compared to RGPO, exact on-policy distillation \citep{agarwal2024policy,zhao2026self,shenfeld2026self} would require access to the teacher's token-level probabilities or logits for the actions taken under the student's rollout distribution. Even when the teacher is open, computing this KL term introduces substantial memory and compute overhead because training must run an additional teacher forward pass and retain token-level probability information, often over large vocabularies and long reasoning trajectories. Treating teacher-sampled completions as ordinary SFT targets avoids the logit requirement, but becomes only an off-policy approximation and can reintroduce the distribution-mismatch and format-imitation issues discussed in the main paper. RGPO avoids these dependencies by using external rationales only as prompt-level scaffolds: the refined candidate is generated by the trainable policy itself, accepted only when it improves verifier reward, and then distilled back under the original no-hint prompt. Consequently, RGPO can exploit black-box, human-written, or dataset-provided rationales without requiring teacher logits or an additional teacher model during training, while still optimizing the deployed policy for the original inference condition.

Below, we provide three qualitative examples illustrating how \textcolor{partialsolution}{partial rationales}, together with the model's \textcolor{judgefeedback}{verifier-judged} \textcolor{previousresponse}{response}, can guide the model toward the \textcolor{refinedanswer}{correct solution}. In these examples, attempting to answer the question directly can lead to an \textcolor{previousresponse}{incorrect answer}; after receiving auxiliary information, the model is steered toward a {\textcolor{refinedanswer}{correct refined response}}. The model is then trained to reproduce such \textcolor{refinedanswer}{corrected response} without relying on the auxiliary context. 

\begin{Example}{\label{ex:1} The question for generating the refined answer consists of the following sections, indicated by their corresponding colors: \textcolor{originalquestion}{original question}, \textcolor{partialsolution}{partial solution}, \textcolor{previousresponse}{previous response}, \textcolor{judgefeedback}{previous judge feedback}, and \textcolor{instruction}{further instructions}.}{}
\textbf{System prompt:}  You are a helpful AI assistant. A conversation takes place between the User and the Assistant. The User asks a question, and the Assistant solves it. Please help me solve this question. Wrap only the final answer in \texttt{\textbackslash boxed\{\}
}.

\textbf{Question:} {\color{originalquestion}{%
Human beings have discovered a habitable planet and soon after, they find 10 more habitable planets. Of these 11, only 5 are deemed ``Earth-like'' in their resources and the rest are deemed ``Mars-like'' since they lack many important resources. Assume that planets like Earth take up 2 units of colonization, while those like Mars take up only 1. If humanity mobilizes 12 total units of colonies, how many different combinations of planets can be occupied if the planets are all different from each other?}}

{\color{partialsolution}{%
This is the (partial) solution without formatting, you can leverage it to check your answer:
Think about combinations and constraints. How might you use the fact that Earth-like planets take up 2 units and Mars-like planets take up 1 unit to de
}}

{\color{previousresponse}
Previous response:
To solve this problem, we need to consider the constraints and find combinations that meet them. Here’s a step-by-step breakdown:

1. **Earth-like Planets**: Each Earth-like planet requires 2 units of colonization.
2. **Mars-like Planets**: Each Mars-like planet requires 1 unit of colonization.
3. **Total Units of Colonization**: There are 12 units available.

We need to find combinations of Earth-like and Mars-like planets such that their total units add up to 12. Let's denote the number of Earth-like planets by \( e \) and the number of Mars-like planets by \( m \). We then have:
\[ 2e + m = 12 \]

We also need to consider that the planets are distinct, so we need to count the number of ways to choose these combinations. Specifically, we need to count the valid pairs \((e, m)\) where both \( e \) and \( m \) are integers between 0 and a maximum that keeps the total units within the limit.

Let’s solve for different values of \( m \):

* If \( m = 0 \):
  \[ 2e = 12 \Rightarrow e = 6 \] (This means 6 Earth-like planets and 0 Mars-like planets). The number of ways to choose 6 Earth-like planets out of 5 is:
  \[ \binom{5}{6} = 0 \]

* If \( m = 1 \):
  \[ 2e + 1 = 12 \Rightarrow 2e = 11 \Rightarrow e = 5.5 \] (This is not an integer, so not feasible).

* If \( m = 2 \):
  \[ 2e + 2 = 12 \Rightarrow 2e = 10 \Rightarrow e = 5 \] {\color{previousresponse}(This means 5 Earth-like planets and 2 Mars-like planets). The number of ways to choose 5 Earth-like planets out of 5 is:
  \[ \binom{5}{5} \cdot \binom{6}{2} = 1 \cdot \binom{6}{2} = 1 \cdot 15 = 15 \]

* If \( m = 3 \):
  \[ 2e + 3 = 12 \Rightarrow 2e = 9 \Rightarrow e = 4.5 \] (This is not an integer, so not feasible).

* If \( m = 4 \):
  \[ 2e + 4 = 12 \Rightarrow 2e = 8 \Rightarrow e = 4 \] (This means 4 Earth-like planets and 4 Mars-like planets). The number of ways to choose 4 Earth-like planets out of 5 is:
  \[ \binom{5}{4} \cdot \binom{6}{4} = 5 \cdot 15 = 75 \]

* If \( m = 5 \):
  \[ 2e + 5 = 12 \Rightarrow 2e = 7 \Rightarrow e = 3.5 \] (This is not an integer, so not feasible).

* If \( m = 6 \):
  \[ 2e + 6 = 12 \Rightarrow 2e = 6 \Rightarrow e = 3 \] (This means 3 Earth-like planets and 6 Mars-like planets). The number of ways to choose 3 Earth-like planets out of 5 is:
  \[ \binom{5}{3} \cdot \binom{6}{6} = 10 \cdot 1 = 10 \]

* If \( m = 7 \):
  \[ 2e + 7 = 12 \Rightarrow 2e = 5 \Rightarrow e = 2.5 \] (This is not an integer, so not feasible).

* If \( m = 8 \):
  \[ 2e + 8 = 12 \Rightarrow 2e = 4 \Rightarrow e = 2 \] (This means 2 Earth-like planets and 8 Mars-like planets). The number of ways to choose 2 Earth-like planets out of 5 is:
  \[ \binom{5}{2} \cdot \binom{6}{8} = 10 \cdot 1 = 10 \]

* If \( m = 9 \):
  \[ 2e + 9 = 12 \Rightarrow 2e = 3 \Rightarrow e = 1.5 \] (This is not an integer, so not feasible).

* If \( m = 10 \):
  \[ 2e + 10 = 12 \Rightarrow 2e = 2 \Rightarrow e = 1 \] (This means 1 Earth-like planet and 10 Mars-like planets). The number of ways to choose 1 Earth-like planet out of 5 is:
  \[ \binom{5}{1} \cdot \binom{6}{10} = 5 \cdot 0 = 0 \]

* If \( m = 11 \):
  \[ 2e + 11 = 12 \Rightarrow 2e = 1 \Rightarrow e = 0.5 \] (This is not an integer, so not feasible).

Adding up all feasible cases, we get:
\[ 0 + 15 + 75 + 10 + 10 + 0 = 110 \]
{\color{previousresponse}
Thus, the number of different combinations of planets that can be occupied is \texttt{\textbackslash boxed\{110\}}.
}}}

{\color{judgefeedback}{
Feedback for your previous response:

Format score: 1.0/1.0

Accuracy score: 0/1.0

Overall score: 0.1/1.0

}}

{\color{instruction}{%
You should maximize the overall score by improving both the format and accuracy of your answer.

Write a new standalone solution to the original question without mentioning the previous responses or feedback. You first think through the reasoning process as an internal monologue. Then, provide your final answer enclosed within \texttt{\textbackslash boxed\{\}}. Please strictly follow the format.
}}

\textbf{Answer:} {\color{refinedanswer}{We are trying to find different combinations of Earth-like (2 units) and Mars-like (1 unit) planets. There are 12 units of colonization available, and we have 5 Earth-like and 6 Mars-like planets.

Here’s a step-by-step approach:

1. **Define Variables**: Let \( e \) be the number of Earth-like planets and \( m \) be the number of Mars-like planets. The total units of colonization constraint can be written as \( 2e + m = 12 \).

2. **Constraints on \( e \) and \( m \)**: Since we have 5 Earth-like planets and 6 Mars-like planets, the values of \( e \) and \( m \) must satisfy \( 0 \leq e \leq 5 \) and \( 0 \leq m \leq 6 \).

3. **Possible Solutions**: We need to find all possible valid pairs \((e, m)\) that satisfy the equation \( 2e + m = 12 \) within the given constraints.

Let’s enumerate these pairs:

- For \( e = 0 \): \( 2(0) + m = 12 \Rightarrow m = 12 \) (not possible since \( m \leq 6 \)).

- For \( e = 1 \): \( 2(1) + m = 12 \Rightarrow m = 10 \) (not possible since \( m \leq 6 \)).

- For \( e = 2 \): \( 2(2) + m = 12 \Rightarrow m = 8 \) (not possible since \( m \leq 6 \)).

- For \( e = 3 \): \( 2(3) + m = 12 \Rightarrow m = 6 \) (possible).

- For \( e = 4 \): \( 2(4) + m = 12 \Rightarrow m = 4 \) (possible).

- For \( e = 5 \): \( 2(5) + m = 12 \Rightarrow m = 2 \) (possible).

From these valid pairs, we count the number of ways to choose the required number of Earth-like and Mars-like planets.

4. **Calculate Combinations**:

   - For \( (3, 6) \): We need to choose 3 out of 5 Earth-like planets and 6 out of 6 Mars-like planets. The number of ways is \( \binom{5}{3} \times \binom{6}{6} = 10 \times 1 = 10 \).
   
   - For \( (4, 4) \): We need to choose 4 out of 5 Earth-like planets and 4 out of 6 Mars-like planets. The number of ways is \( \binom{5}{4} \times \binom{6}{4} = 5 \times 15 = 75 \).
   
   - For \( (5, 2) \): We need to choose 5 out of 5 Earth-like planets and 2 out of 6 Mars-like planets. The number of ways is \( \binom{5}{5} \times \binom{6}{2} = 1 \times 15 = 15 \).

Adding these together gives us the total number of combinations:
\( 10 + 75 + 15 = 100 \).

Therefore, the number of different combinations of planets that can be occupied is \texttt{\textbackslash boxed\{100\}}.}}

\textbf{Judge:}

Solution: 100

Ground truth: 100

Format score: 1.0

Accuracy score: 1.0

Overall score: 1.0

\end{Example}

In Example \ref{ex:1}, the model leverages the partial solution to identify the correct counting strategy and formulate the constraint (2e+m=12). It then uses the previous response to retain the valid setup while correcting its main error: allowing more than the six available Mars-like planets. By enforcing the bounds $(e\leq5)$ and $(m\leq6)$, the model considers only the valid cases ((3,6)), ((4,4)), and ((5,2)), counts the distinct planet selections with combinations, and obtains the correct total of (100).

\begin{Example}{\label{ex:2}The question for generating the refined answer consists of the following sections, indicated by their corresponding colors: \textcolor{originalquestion}{original question}, \textcolor{partialsolution}{partial solution}, \textcolor{previousresponse}{previous response}, \textcolor{judgefeedback}{previous judge feedback}, and \textcolor{instruction}{further instructions}.}{}
\textbf{System prompt:}  You are a helpful AI assistant. A conversation takes place between the User and the Assistant. The User asks a question, and the Assistant solves it. Please help me solve this question. Wrap only the final answer in \texttt{\textbackslash boxed\{\}
}.

\textbf{Question:} {\color{originalquestion}{%
Each corner of a rectangular prism is cut off. Two (of the eight) cuts are shown. How many edges does the new figure have?
}}

{\color{partialsolution}{%
This is the (partial) solution without formatting, you can leverage it to check your answer:
Think about the original number of edges on a rectangular prism. How does cutting of
}}

{\color{previousresponse}{
Previous response: acing its corners, we need to determine how many vertices it has.
Since each cut off a vertex adds one new edge and removes two, the net increase in edges is \( 2 \times 4 = 8 \).
So, the number of edges is \texttt{\textbackslash boxed\{24\}}.
}}

{\color{judgefeedback}{
Feedback for your previous response:

Format score: 1.0/1.0

Accuracy score: 0/1.0

Overall score: 0.1/1.0

}}

{\color{instruction}{%
You should maximize the overall score by improving both the format and accuracy of your answer.

Write a new standalone solution to the original question without mentioning the previous responses or feedback. You first think through the reasoning process as an internal monologue. Then, provide your final answer enclosed within \texttt{\textbackslash boxed\{\}}. Please strictly follow the format.
}}

\textbf{Answer:} {\color{refinedanswer}{To determine the number of edges in the new figure after cutting off the corners of a rectangular prism, we'll first understand the initial properties and then consider the effects of cutting off the corners.

\#\#\# Initial Number of Edges

A rectangular prism has 8 vertices, 12 edges, and 6 faces. However, since we are asked to count the edges created by cutting off the corners, it's more efficient to consider the total vertices and faces first.

\#\#\# Basic Structure Before Cutting

1. **Vertices**: 8
2. **Edges**: 12
3. **Faces**: 6 (2 pairs of identical faces)

Now, let's think about the effect of cutting off corners:

- Each corner cut off removes one vertex and creates 3 new edges (one for each edge of the faces that used to meet at the vertex).

\#\#\# Analysis After Cutting Off Corners
When we cut off each corner, we effectively:
- Remove 1 vertex from the original 8 vertices.
- Add 3 new edges.

Since we are cutting off all 8 corners of the rectangular prism:
- **Total number of edges added by cutting off corners**: \( 8 \times 3 = 24 \).

\#\#\# Calculation of Final Edge Count
Initially, the rectangular prism has 12 edges. After cutting off the corners, these initial 12 edges plus the additional 24 edges give us:

\[
12 + 24 = 36
\]

Thus, the new figure has \texttt{\textbackslash boxed\{36\}}.}}

\textbf{Judge:}

Solution: 36

Ground truth: 36

Format score: 1.0

Accuracy score: 1.0

Overall score: 1.0

\end{Example}
Leveraging the previous response and the partial solution in Example \ref{ex:2} helped the model identify the correct structure of the problem: begin with the rectangular prism’s 12 original edges and then account for the effect of cutting off all 8 corners. The earlier incorrect response highlighted that the edge change needed to be reconsidered, while the partial solution directed attention to how each cut modifies the prism. Using these cues, the model correctly determined that each corner cut creates 3 new edges, resulting in $(12+8\cdot3=36)$ edges.

\begin{Example}{\label{ex:3}The question for generating the refined answer consists of the following sections, indicated by their corresponding colors: \textcolor{originalquestion}{original question}, \textcolor{partialsolution}{partial solution}, \textcolor{previousresponse}{previous response}, \textcolor{judgefeedback}{previous judge feedback}, and \textcolor{instruction}{further instructions}.}{}
\textbf{System prompt:}  You are a helpful AI assistant. A conversation takes place between the User and the Assistant. The User asks a question, and the Assistant solves it. Please help me solve this question. Wrap only the final answer in \texttt{\textbackslash boxed\{\}
}.

\textbf{Question:} {\color{originalquestion}{%
Each corner of a rectangular prism is cut off. Two (of the eight) cuts are shown. How many edges does the new figure have?
}}

{\color{partialsolution}{%
This is the (partial) solution without formatting, you can leverage it to check your answer:
Think about the original number of edges on a rectangular prism. How does cutting of
}}

{\color{previousresponse}{
To find the smallest perfect square larger than 1 with a perfect square number of positive integer factors, we can consider perfect squares of factors of perfect numbers. Here's an explanation:

A perfect square number of positive integer factors means that the number of factors of the perfect square is itself a perfect square.

We can start by considering the perfect numbers, which are integers equal to the sum of their proper positive factors. Examples include 6 (1 + 2 + 3) and 28 (1 + 2 + 4 + 7 + 14).

The smallest relative prime pair (ignoring multiples of 2) that gives a perfect number is (4, 9). When we multiply these, we get a perfect square, 36, which has (1, 4, 9, and 36) perfect factors, all of which are perfect squares.

Therefore, the smallest perfect square larger than 1 with a perfect square number of positive integer factors is 36. 
}}

{\color{judgefeedback}{
Feedback for your previous response:

Format score: 0.0/1.0

Accuracy score: 0/1.0

Overall score: 0.0/1.0

}}

{\color{instruction}{%
You should maximize the overall score by improving both the format and accuracy of your answer.

Write a new standalone solution to the original question without mentioning the previous responses or feedback. You first think through the reasoning process as an internal monologue. Then, provide your final answer enclosed within \texttt{\textbackslash boxed\{\}}. Please strictly follow the format.
}}

\textbf{Answer:} {\color{refinedanswer}{To find the smallest perfect square larger than 1 with a perfect square number of positive integer factors, we can start by considering the prime factorization of perfect squares. A perfect square can be expressed as the product of pairs of prime factors. If we denote a perfect square as \( n = p_1^{2a_1} \cdot p_2^{2a_2} \cdots p_k^{2a_k} \), where \( p_i \) are distinct prime numbers and \( a_i \) are integers, the number of factors is given by \( (2a_1 + 1)(2a_2 + 1) \cdots (2a_k + 1) \).

For the number of factors to be a perfect square, each term in the product \( (2a_1 + 1)(2a_2 + 1) \cdots (2a_k + 1) \) must be a perfect square or a product of factors that can be paired to form a perfect square. 

Let's consider the smallest perfect square that appears in the set of perfect numbers. The smallest perfect number is 6, whose prime factorization is \( 2 \cdot 3 \). When we consider the smallest relative prime pair (ignoring multiples of 2), which is (4, 9), their product is \( 36 \). Here, 36 has the prime factorization \( 2^2 \cdot 3^2 \), which has \( (2a_1 + 1)(2a_2 + 1) = 3 \cdot 3 = 9 \), a perfect square. Therefore, 36 is a perfect square with a perfect square number of positive integer factors.

Thus, the smallest perfect square larger than 1 with a perfect square number of positive integer factors is \texttt{\textbackslash boxed\{36\}}.}}

\textbf{Judge:}

Solution: 36

Ground truth: 36

Format score: 1.0

Accuracy score: 1.0

Overall score: 1.0

\end{Example}

In Example \ref{ex:3}, the model uses the judge’s feedback to correct the formatting by placing only the final answer inside \texttt{\textbackslash boxed\{\}} and to ensure that the answer is parsed properly. Although some unnecessary reasoning from the earlier response remains, these changes allow the refined response to receive full accuracy and format scores.

\section{Theoretical Analysis of Rationale-Guided Learning}
\label{sec:rgpo_theory}


For a prompt $x$, let $R(x,y)$ denote the verifier reward and
$\pi_\theta(y\mid x)$ the policy likelihood of a complete response.
We distinguish $\pi_\theta$ from the decoding distribution
$\rho_\theta(\cdot\mid x)$ used to generate rollouts. These coincide under
ancestral sampling, but need not coincide under temperature scaling or
truncation. Sampling and scheduler quantities are defined with respect to
$\rho_\theta$, whereas likelihood-based quantities are defined with respect to
$\pi_\theta$.

At a fixed policy $\theta$, sample $K$ unguided responses independently as
\[
Y_1,\ldots,Y_K \sim \rho_\theta(\cdot\mid x),
\qquad
B=(Y_1,\ldots,Y_K).
\]
When guidance is applied, $K_g$ additional candidates
$\widetilde Y_1,\ldots,\widetilde Y_{K_g}$ are generated conditional on the
unguided group $B$, the current rationale prefix, and verifier feedback.
A candidate is accepted if
\begin{equation}
R(x,\widetilde Y_m)
>
\max_{1\leq j\leq K} R(x,Y_j)+\delta,
\qquad \delta\geq 0.
\label{eq:app-gate}
\end{equation}
Let $A$ denote the event that at least one guided candidate is accepted, and
let $N_{\rm acc}$ denote the number of accepted candidates. For binary rewards, $R\in\{0,1\}$, define the unguided success probability
\[
p
=
\Pr_{Y\sim\rho_\theta(\cdot\mid x)}
\bigl(R(x,Y)=1\bigr),
\]
and the all-failure event
$
F
=
\{R(x,Y_j)=0,\ \forall j=1,\ldots,K\}.
$. When $\rho_\theta=\pi_\theta$, we write
\[
p_\theta
=
\Pr_{Y\sim\pi_\theta(\cdot\mid x)}
\bigl(R(x,Y)=1\bigr),
\]
and denote the policy conditioned on success by
\[
\pi_\theta^+(y\mid x)
=
\pi_\theta\!\left(y\mid x,\ R(x,Y)=1\right).
\]
\subsection{Recovering supervision from failed unguided groups}
\label{app:rgpo-recovery}


\begin{proposition}[Guided supervision recovery]
\label{prop:app-recovery}
Fix $x$ and a policy snapshot.  Suppose $R\in\{0,1\}$,
$0\leq\delta<1$, and $p<1$.  Conditional on $B$, suppose the $K_g$ guided
candidates are independent and have common success probability $s_B$.  Then
\begin{align}
\Pr(A)
&=(1-p)^K\,
\mathbb{E}\!\left[1-(1-s_B)^{K_g}\mid F\right],
\label{eq:app-recovery-prob}\\
\mathbb{E}[N_{\rm acc}]
&=K_g(1-p)^K\,\mathbb{E}[s_B\mid F].
\label{eq:app-accepted-count}
\end{align}
If $p=1$, both quantities are zero.
\end{proposition}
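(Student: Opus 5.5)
The argument rests on one observation: for binary rewards with $0\le\delta<1$, the acceptance gate \eqref{eq:app-gate} can only be passed on the all-failure event $F$. If some unguided $Y_j$ has $R(x,Y_j)=1$, then the right-hand side of the gate is at least $1+\delta\ge 1$, and no candidate with $R\le 1$ can exceed it strictly. On $F$ the right-hand side equals $\delta<1$, so a candidate is accepted iff $R(x,\widetilde Y_m)=1$. This gives the pointwise identities
\[
\mathbb{I}_A=\mathbb{I}_F\cdot\mathbb{I}\bigl[\exists m:\ R(x,\widetilde Y_m)=1\bigr],\qquad
N_{\rm acc}=\mathbb{I}_F\sum_{m=1}^{K_g}\mathbb{I}\bigl[R(x,\widetilde Y_m)=1\bigr].
\]

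Next I condition on $B$. The event $F$ is $B$-measurable. By assumption, given $B$ the guided candidates are independent with success probability $s_B$, so
\[
\Pr(A\mid B)=\mathbb{I}_F\bigl[1-(1-s_B)^{K_g}\bigr],\qquad
\mathbb{E}[N_{\rm acc}\mid B]=\mathbb{I}_F\,K_g s_B .
\]
I then take expectations by the tower property. Since the $Y_j$ are i.i.d.\ from $\rho_\theta(\cdot\mid x)$ with success probability $p$, we have $\Pr(F)=(1-p)^K$, which is positive because $p<1$. Writing $\mathbb{E}[\mathbb{I}_F Z]=\Pr(F)\,\mathbb{E}[Z\mid F]$ for $Z=1-(1-s_B)^{K_g}$ and for $Z=s_B$ yields \eqref{eq:app-recovery-prob} and \eqref{eq:app-accepted-count}. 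Both conditional expectations are well defined because $Z$ is bounded in $[0,1]$ and $\Pr(F)>0$.

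For the case $p=1$, every unguided response succeeds almost surely, so $F$ is null. By the pointwise identities, $A$ is null and $N_{\rm acc}=0$ almost surely, and both quantities vanish.

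The only delicate point is measurability: $s_B$ must be read as a measurable function of $B$ (and of the fixed rationale prefix and feedback). With that reading the conditioning step is legitimate, and everything else is routine.
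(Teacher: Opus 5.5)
Your proposal is correct and follows essentially the same argument as the paper. The gate can only be passed on the all-failure event $F$, where acceptance coincides with success, so conditionally on $B\in F$ the accepted count is $\operatorname{Binomial}(K_g,s_B)$, and weighting by $\Pr(F)=(1-p)^K$ gives both identities; you spell out the tower-property step, the well-definedness of the conditional expectations, and the $p=1$ case more explicitly than the paper does.
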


\begin{proof}
If any unguided response succeeds, the right-hand side of the gate in
Eq.~\eqref{eq:app-gate} is at least $1$ and no binary-reward candidate can be
accepted.  On $F$, the gate threshold is $\delta<1$, so a candidate is accepted
if and only if it succeeds.  Conditional on $B\in F$, therefore,
$N_{\rm acc}\sim\operatorname{Binomial}(K_g,s_B)$.  Since
$\Pr(F)=(1-p)^K$, conditioning on $F$ gives both identities.
\end{proof}

For standard group-centered GRPO with binary reward, an all-failure group has
zero within-group reward variance and hence zero reward-advantage vector.  The
proposition therefore identifies a source of positive supervision precisely on
an event on which this GRPO signal vanishes.  The following proposition also
quantifies how the expected correctness signal shrinks as success becomes rare.

\begin{proposition}[GRPO's correctness signal vanishes linearly in the pass rate]
\label{prop:app-grpo-signal}
Assume $\rho_\theta=\pi_\theta$, $0<p<1$, and define the summed first-step GRPO
score update
\[
\widehat g_{\rm GRPO}
=\sum_{k=1}^K \widehat A_k\nabla_\theta\log\pi_\theta(Y_k\mid x),
\]
where the binary rewards are normalized by their group mean and population
standard deviation and $\widehat A_k=0$ when the group reward variance is zero.
Let $M\sim\operatorname{Binomial}(K,p)$.  Then
\begin{equation}
\mathbb{E}[\widehat g_{\rm GRPO}]
=c_K(p)\,\nabla\log p,
\qquad
c_K(p)
=\frac{\mathbb{E}\sqrt{M(K-M)}}{1-p}
\leq K\sqrt{K-1}\,p.
\label{eq:app-grpo-signal}
\end{equation}
In particular $c_K(p)=O(p)$ as $p\downarrow0$, while the update is exactly zero
on the all-failure event, whose probability is $(1-p)^K$.
\end{proposition}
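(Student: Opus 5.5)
The plan is to condition on the number of successes and compute the expectation of the GRPO update in closed form. With binary rewards and $M$ successes among the $K$ responses, the group mean is $M/K$ and the population standard deviation is $\sqrt{M(K-M)}/K$. For $1\le M\le K-1$, the normalized advantages are therefore
\[
\widehat A_k=\sqrt{(K-M)/M}
\]
for a successful response and
\[
\widehat A_k=-\sqrt{M/(K-M)}
\]
for a failed one. When $M\in\{0,K\}$ the variance vanishes and $\widehat A\equiv 0$. Writing $S_k=\nabla_\theta\log\pi_\theta(Y_k\mid x)$, the update becomes
\[
\widehat g_{\rm GRPO}=\sqrt{(K-M)/M}\sum_{k:R_k=1}S_k-\sqrt{M/(K-M)}\sum_{k:R_k=0}S_k .
\]
This also settles the claim that the update is exactly zero on the all-failure event $F$, which has probability $(1-p)^K$.

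Next I condition on the success pattern $(R_1,\dots,R_K)$. Because $\rho_\theta=\pi_\theta$ and the draws are i.i.d., each successful $Y_k$ has law $\pi_\theta^+$ and each failed one has law $\pi_\theta(\cdot\mid x,R=0)$, independently of $M$. The key identities are the conditional score means:
\[
\mathbb{E}[S\mid R=1]=\frac{1}{p}\sum_{y:R=1}\pi_\theta(y\mid x)\nabla\log\pi_\theta(y\mid x)=\frac{\nabla p}{p},
\qquad
\mathbb{E}[S\mid R=0]=\frac{\nabla(1-p)}{1-p}=-\frac{\nabla p}{1-p}.
\]
These hold for a finite (or suitably dominated) response space, so that differentiation and summation can be interchanged; I would state that regularity assumption explicitly. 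Substituting, the conditional expectation given $M$ is
\[
M\sqrt{\tfrac{K-M}{M}}\,\tfrac{\nabla p}{p}+(K-M)\sqrt{\tfrac{M}{K-M}}\,\tfrac{\nabla p}{1-p}=\sqrt{M(K-M)}\,\frac{\nabla p}{p(1-p)}.
\]
This equals $\frac{\sqrt{M(K-M)}}{1-p}\nabla\log p$, and it is also correct (namely zero) when $M\in\{0,K\}$. Taking the expectation over $M\sim\operatorname{Binomial}(K,p)$ gives the formula for $c_K(p)$.

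The only nontrivial step is the bound $c_K(p)\le K\sqrt{K-1}\,p$, which I expect to be the main point needing care. A crude bound loses the factor $1-p$ in the denominator, so I would use the following trick. Whenever $1\le M\le K-1$ we have $M(K-M)\ge K-1$, and for $x\ge K-1$ we have $\sqrt{x}\le x/\sqrt{K-1}$. The same inequality holds trivially when $M(K-M)=0$. Hence
\[
\mathbb{E}\sqrt{M(K-M)}\le \frac{\mathbb{E}[M(K-M)]}{\sqrt{K-1}}.
\]
Binomial moments give
\[
\mathbb{E}[M(K-M)]=K^2p-\bigl(Kp(1-p)+K^2p^2\bigr)=K(K-1)p(1-p).
\]
Therefore $\mathbb{E}\sqrt{M(K-M)}\le K\sqrt{K-1}\,p(1-p)$, and dividing by $1-p$ yields the claim, including $c_K(p)=O(p)$ as $p\downarrow 0$.
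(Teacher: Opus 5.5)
Your proposal is correct and follows essentially the same route as the paper. Both condition on $M$, use the conditional score means $\nabla p/p$ and $-\nabla p/(1-p)$ to obtain $\sqrt{M(K-M)}\,\nabla p/[p(1-p)]$, and then bound $\sqrt{M(K-M)}\le M(K-M)/\sqrt{K-1}$ with $\mathbb{E}[M(K-M)]=K(K-1)p(1-p)$; your explicit remark about interchanging differentiation and summation is a regularity point the paper leaves implicit.
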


\begin{proof}
Let $\pi_\theta^+$ and $\pi_\theta^-$ be the policy conditioned on success and
failure, and set
$\mu^\pm=\mathbb{E}_{\pi_\theta^\pm}\nabla\log\pi_\theta(Y\mid x)$.
Conditioned on $M=m$ with $0<m<K$, a successful sample has normalized advantage
$\sqrt{(K-m)/m}$ and a failed sample has normalized advantage
$-\sqrt{m/(K-m)}$.  Therefore
\[
\mathbb{E}[\widehat g_{\rm GRPO}\mid M=m]
=\sqrt{m(K-m)}(\mu^+-\mu^-).
\]
Since
$\nabla p=p\mu^+=- (1-p)\mu^-$, we have
$\mu^+-\mu^-=\nabla p/[p(1-p)]$.  Averaging over
$M\sim\operatorname{Binomial}(K,p)$ gives the equality in
Eq.~\eqref{eq:app-grpo-signal}.  For $1\leq m\leq K-1$,
$\sqrt{m(K-m)}\leq m(K-m)/\sqrt{K-1}$, while both sides vanish for
$m\in\{0,K\}$.  Hence
\[
\mathbb{E}\sqrt{M(K-M)}
\leq \frac{\mathbb{E}[M(K-M)]}{\sqrt{K-1}}
=K\sqrt{K-1}\,p(1-p),
\]
which proves the bound.
\end{proof}


\begin{corollary}[Equal response budget]
\label{cor:equal-budget}
Suppose that, conditional on an all-failure unguided group, the
$K_g$ guided candidates are conditionally independent and each succeeds
with probability $s$. Then the probability of discovering at least one
successful response using $K$ unguided and $K_g$ guided samples is
\begin{equation}
P_{\rm RGPO}
=
1-(1-p)^K(1-s)^{K_g}.
\label{eq:rgpo-budget}
\end{equation}
If the same total budget of $K+K_g$ responses is instead used entirely
for unguided sampling, then
\begin{equation}
P_{\rm GRPO}
=
1-(1-p)^{K+K_g}.
\label{eq:grpo-budget}
\end{equation}
For $K_g\geq1$,
\begin{equation}
P_{\rm RGPO}>P_{\rm GRPO}
\quad\Longleftrightarrow\quad
s>p.
\end{equation}
\end{corollary}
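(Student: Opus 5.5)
The plan is to compute both discovery probabilities through their complements, the events on which no successful response appears, and then compare them.

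\textbf{Step 1: the RGPO probability.} Condition on the unguided group. If some unguided response succeeds, a success has been discovered. Otherwise we are on the all-failure event $F$, which has probability $(1-p)^K$ because the $K$ unguided draws are i.i.d.\ with success probability $p$. On $F$, the corollary's hypothesis makes the $K_g$ guided candidates conditionally independent, each failing with probability $1-s$. Hence
\[
\Pr(\text{no success})=\Pr(F)\,\Pr(\text{all guided fail}\mid F)=(1-p)^K(1-s)^{K_g},
\]
which gives Eq.~\eqref{eq:rgpo-budget}. Only $s$ on $F$ enters, so we never need the guided behaviour off $F$. This step is consistent with Proposition~\ref{prop:app-recovery}: the complement of the success event splits into "some unguided success" plus the acceptance event $A$, with $s_B\equiv s$ on $F$.

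\textbf{Step 2: the plain probability.} With all $K+K_g$ responses unguided and i.i.d., the failure probability is $(1-p)^{K+K_g}$. This gives Eq.~\eqref{eq:grpo-budget}.

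\textbf{Step 3: the comparison.} We have
\[
P_{\rm RGPO}-P_{\rm GRPO}=(1-p)^K\bigl[(1-p)^{K_g}-(1-s)^{K_g}\bigr].
\]
For $p<1$ the prefactor is strictly positive. The map $u\mapsto u^{K_g}$ is strictly increasing on $[0,1]$ when $K_g\ge1$, so the bracket is positive exactly when $1-p>1-s$, that is, when $s>p$.

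\textbf{The main obstacle} is the degenerate case $p=1$. There both probabilities equal $1$, so the left-hand side is false, while $s>p$ is also false since $s\le1$. The equivalence therefore still holds. I would note explicitly that this case is vacuous, or assume $p<1$ as in Proposition~\ref{prop:app-recovery}, so that the strict monotonicity argument is not misapplied when the prefactor $(1-p)^K$ vanishes.
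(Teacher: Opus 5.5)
Your proof is correct and matches the paper's argument: the paper also reduces $P_{\rm RGPO}>P_{\rm GRPO}$ to $(1-s)^{K_g}<(1-p)^{K_g}$, implicitly cancelling the common factor $(1-p)^K$, and then uses $K_g\ge 1$ to conclude $s>p$. Your complement-based derivation of the two formulas and your separate treatment of $p=1$ go beyond the paper's one-line proof, though at $p=1$ both sides of the equivalence are simply false, so nothing breaks there.
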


\begin{proof}
Using Eqs.~\eqref{eq:rgpo-budget}--\eqref{eq:grpo-budget},
\[
P_{\rm RGPO}>P_{\rm GRPO}
\iff
(1-s)^{K_g}<(1-p)^{K_g}.
\]
Since $K_g\geq1$, this is equivalent to $s>p$.
\end{proof}

\paragraph{Feedback-dependent proposals.}
When $s_B$ varies with the failed group, the exact equal-budget condition is
\begin{equation}
\mathbb{E}[(1-s_B)^{K_g}\mid F]<(1-p)^{K_g}.
\label{eq:app-feedback-condition}
\end{equation}
Thus, for $K_g>1$, the comparison depends on the conditional failure
probability of the guided group, rather than only on
$\mathbb{E}[s_B\mid F]$.

\paragraph{General rewards.}
For an arbitrary reward and possibly heterogeneous guided proposals, define
\[
a_m(B)
=\Pr\!\left(
R(x,\widetilde Y_m)>\max_j R(x,Y_j)+\delta\mid B
\right).
\]
If the guided candidates are conditionally independent given $B$, then
\begin{equation}
\Pr(A)=\mathbb{E}_B\!\left[1-\prod_{m=1}^{K_g}(1-a_m(B))\right],
\qquad
\mathbb{E}[N_{\rm acc}]=\mathbb{E}_B\sum_{m=1}^{K_g}a_m(B).
\label{eq:app-general-acceptance}
\end{equation}
The second identity follows by linearity even without candidate independence.

\subsection{Transfer to the original prompt}
\label{app:rgpo-transfer}


\begin{theorem}[Binary-reward transfer with target mismatch]
\label{thm:app-transfer-binary}
Suppose $0<p_\theta<1$. Let $\mu$ be a fixed distribution supported on
successful responses, and define
\begin{equation}
L_\mu(\theta)
=
-\mathbb{E}_{Y\sim\mu}\log \pi_\theta(Y\mid x),
\qquad
D_\theta
=
\mathrm{KL}\!\left(
\mu\,\|\,\pi_\theta^+(\cdot\mid x)
\right).
\label{eq:app-transfer-defs}
\end{equation}
For any $\theta_0,\theta_1$, let
\[
\Delta L
=
L_\mu(\theta_0)-L_\mu(\theta_1).
\]
Then
\begin{equation}
\log\frac{p_{\theta_1}}{p_{\theta_0}}
=
\Delta L+D_{\theta_1}-D_{\theta_0}
\geq
\Delta L-D_{\theta_0}.
\label{eq:app-transfer-binary}
\end{equation}
In particular, $p_{\theta_1}>p_{\theta_0}$ whenever
$\Delta L>D_{\theta_0}$. If
$\mu=\pi_{\theta_0}^+(\cdot\mid x)$, any strict decrease in
$L_\mu$ implies $p_{\theta_1}>p_{\theta_0}$.
\end{theorem}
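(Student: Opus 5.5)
The identity follows from expressing $\log\pi_\theta(y\mid x)$ through the success-conditioned policy. For any $y$ with $R(x,y)=1$ we have
\[
\pi_\theta^+(y\mid x)=\frac{\pi_\theta(y\mid x)}{p_\theta},
\qquad\text{so}\qquad
\log\pi_\theta(y\mid x)=\log p_\theta+\log\pi_\theta^+(y\mid x).
\]
Since $\mu$ is supported on successful responses, we can take expectations under $\mu$ and add and subtract $\mathbb{E}_\mu\log\mu(Y)$. This gives, for each $\theta$,
\[
L_\mu(\theta)=-\log p_\theta+H(\mu)+D_\theta,
\]
where $H(\mu)=-\mathbb{E}_\mu\log\mu(Y)$ is the entropy of $\mu$. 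It does not depend on $\theta$.

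Next, write this decomposition at $\theta_0$ and at $\theta_1$ and subtract. The entropy term cancels, leaving
\[
\Delta L=\log\frac{p_{\theta_1}}{p_{\theta_0}}+D_{\theta_0}-D_{\theta_1}.
\]
Rearranging gives the equality in Eq.~\eqref{eq:app-transfer-binary}. The inequality follows because $D_{\theta_1}\ge 0$, as a KL divergence. The sufficient condition $\Delta L>D_{\theta_0}$ then makes the log-ratio strictly positive, so $p_{\theta_1}>p_{\theta_0}$. For $\mu=\pi_{\theta_0}^+$ we have $D_{\theta_0}=0$, so any $\Delta L>0$ suffices.

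The main obstacle is bookkeeping of finiteness and supports rather than any hard estimate. We need $p_{\theta_0},p_{\theta_1}>0$ so that the logarithms and $\pi^+$ are defined. The hypothesis $0<p_\theta<1$ is read as holding at both snapshots; alternatively, $p_{\theta_1}=0$ forces $L_\mu(\theta_1)=\infty$, which rules out finite $\Delta L$. We also need $H(\mu)$ finite, so that subtracting it is legitimate. I would handle this by working with the cross-entropy difference directly: write
\[
\Delta L=\mathbb{E}_\mu\log\frac{\pi_{\theta_1}(Y\mid x)}{\pi_{\theta_0}(Y\mid x)}
=\log\frac{p_{\theta_1}}{p_{\theta_0}}+\mathbb{E}_\mu\log\frac{\pi^+_{\theta_1}(Y\mid x)}{\pi^+_{\theta_0}(Y\mid x)},
\]
and note that the last term equals $D_{\theta_0}-D_{\theta_1}$ whenever these quantities are finite. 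The infinite cases are then treated by the convention that the inequality holds trivially when $D_{\theta_0}=\infty$.
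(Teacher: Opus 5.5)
Your proposal is correct and matches the paper's proof essentially line for line: you factor $\pi_\theta = p_\theta\,\pi_\theta^+$ on the support of $\mu$, obtain $L_\mu(\theta) = -\log p_\theta + H(\mu) + D_\theta$, subtract at $\theta_0$ and $\theta_1$ so that the entropy cancels, and then use $D_{\theta_1}\ge 0$. Your extra bookkeeping is a harmless refinement that the paper leaves implicit; it covers positivity of $p_{\theta_1}$ and possibly infinite $H(\mu)$, handled by working directly with the cross-entropy difference.
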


\begin{proof}
On the support of $\mu$,
\[
\pi_\theta(y\mid x)
=
p_\theta\,\pi_\theta^+(y\mid x).
\]
Hence
\begin{align}
L_\mu(\theta)
&=
-\log p_\theta
-\mathbb{E}_{Y\sim\mu}
  \log \pi_\theta^+(Y\mid x) \\
&=
-\log p_\theta
+H(\mu)
+D_\theta,
\end{align}
where
$H(\mu)=-\mathbb{E}_{Y\sim\mu}\log\mu(Y)$.
Thus
\[
\log p_\theta
=
H(\mu)+D_\theta-L_\mu(\theta).
\]
Taking the difference between $\theta_1$ and $\theta_0$ gives the equality in
\eqref{eq:app-transfer-binary}; the inequality follows from
$D_{\theta_1}\geq0$.
\end{proof}



\begin{proposition}[Transfer under nonnegative rewards]
\label{prop:app-transfer-general}
Fix a prompt $x$ and let $R(x,y)\geq 0$. Define
\[
J_x(\theta)
=
\mathbb{E}_{Y\sim\pi_\theta(\cdot\mid x)}[R(x,Y)],
\]
and assume $J_x(\theta_i)>0$ for $i\in\{0,1\}$. Let $\mu$ be a fixed
distribution such that, for every $y$ in its support,
\[
\pi_{\theta_i}(y\mid x)R(x,y)>0,
\qquad i\in\{0,1\}.
\]
Define the reward-weighted response distribution
\begin{equation}
Q_\theta^R(y\mid x)
=
\frac{\pi_\theta(y\mid x)R(x,y)}{J_x(\theta)},
\qquad
D_\theta^R
=
\mathrm{KL}\!\left(\mu\,\|\,Q_\theta^R(\cdot\mid x)\right).
\end{equation}
For
\[
\Delta L
=
L_\mu(\theta_0)-L_\mu(\theta_1),
\]
we have
\begin{equation}
\log\frac{J_x(\theta_1)}{J_x(\theta_0)}
=
\Delta L
+
D_{\theta_1}^R-D_{\theta_0}^R
\geq
\Delta L-D_{\theta_0}^R.
\label{eq:app-transfer-general}
\end{equation}
\end{proposition}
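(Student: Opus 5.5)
The plan mirrors the proof of Theorem~\ref{thm:app-transfer-binary}, with the success-conditioned policy $\pi_\theta^+$ replaced by the reward-tilted distribution $Q_\theta^R$. The key observation is a factorization on the support of $\mu$. Because $\pi_{\theta_i}(y\mid x)R(x,y)>0$ there, we may write
\[
\pi_{\theta_i}(y\mid x)=\frac{J_x(\theta_i)\,Q_{\theta_i}^R(y\mid x)}{R(x,y)},\qquad i\in\{0,1\}.
\]
Every factor in this identity is strictly positive and finite. The positivity assumption is exactly what keeps each logarithm below well defined.

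Taking logarithms and the expectation under $Y\sim\mu$ gives
\[
L_\mu(\theta_i)=-\log J_x(\theta_i)-\mathbb{E}_\mu\log Q_{\theta_i}^R(Y\mid x)+\mathbb{E}_\mu\log R(x,Y).
\]
Next I add and subtract $\mathbb{E}_\mu\log\mu(Y)$, which yields
\[
-\mathbb{E}_\mu\log Q_{\theta_i}^R=H(\mu)+D_{\theta_i}^R,
\]
and hence
\[
L_\mu(\theta_i)=-\log J_x(\theta_i)+H(\mu)+D_{\theta_i}^R+\mathbb{E}_\mu\log R(x,Y).
\]
The terms $H(\mu)$ and $\mathbb{E}_\mu\log R(x,Y)$ depend only on $\mu$ and $R$, not on $\theta$. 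They therefore cancel when we subtract the $i=1$ identity from the $i=0$ identity, giving
\[
\Delta L=\log\frac{J_x(\theta_1)}{J_x(\theta_0)}-D_{\theta_1}^R+D_{\theta_0}^R.
\]
Rearranging gives the equality in \eqref{eq:app-transfer-general}. The inequality then follows from $D_{\theta_1}^R\ge 0$, by Gibbs' inequality.

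The only delicate step is making sure the cancellation is legitimate, meaning none of the expectations is infinite. On a countable response space $H(\mu)$ or $\mathbb{E}_\mu\log R$ could diverge. I would handle this in one of two ways. One option is to assume, as is implicit for finite-length responses, that the response space is finite. The cleaner option avoids the issue altogether: subtract the two log-identities pointwise in $y$ before taking the expectation. Pointwise, both $\log\mu$ and $\log R$ drop out, and we get
\[
\log\frac{\pi_{\theta_0}(y\mid x)}{\pi_{\theta_1}(y\mid x)}=\log\frac{J_x(\theta_0)}{J_x(\theta_1)}+\log\frac{Q_{\theta_0}^R(y\mid x)}{Q_{\theta_1}^R(y\mid x)}.
\]
Integrating this against $\mu$ gives the same identity, provided $L_\mu$ and $D^R$ are finite, which is needed for the statement to be meaningful anyway.

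Finally, as a consistency check, setting $R\in\{0,1\}$ gives $Q_\theta^R=\pi_\theta^+$ and $J_x(\theta)=p_\theta$. The result then reduces exactly to Theorem~\ref{thm:app-transfer-binary}.
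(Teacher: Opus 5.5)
Your proof is correct and is essentially the paper's argument. The paper expands $D_\theta^R$ directly as $-H(\mu)+L_\mu(\theta)-\mathbb{E}_{\mu}\log R(x,Y)+\log J_x(\theta)$ and differences the two instances, which is your identity for $L_\mu(\theta_i)$ rearranged, with the same cancellation of $\theta$-independent terms and the same use of $D_{\theta_1}^R\ge 0$; your pointwise-subtraction remark, which avoids needing $H(\mu)$ and $\mathbb{E}_\mu\log R(x,Y)$ to be separately finite, is a small rigor point the paper's proof does not address.
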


\begin{proof}
Expanding the KL divergence,
\begin{align}
D_\theta^R
&=
\mathbb{E}_{Y\sim\mu}
\left[
\log
\frac{\mu(Y)J_x(\theta)}
     {\pi_\theta(Y\mid x)R(x,Y)}
\right] \\
&=
-H(\mu)
+L_\mu(\theta)
-\mathbb{E}_{Y\sim\mu}\log R(x,Y)
+\log J_x(\theta).
\end{align}
The first and third terms are independent of $\theta$. Taking the difference
between $\theta_1$ and $\theta_0$ therefore gives the equality in
\eqref{eq:app-transfer-general}. The inequality follows from
$D_{\theta_1}^R\geq0$.
\end{proof}

\begin{corollary}[Local alignment when the mismatch is small]
\label{cor:app-local-alignment}
Let $f_\theta(y)=\nabla_\theta\log\pi_\theta(y\mid x)$ and suppose
$\|f_\theta(y)\|\leq G$ on the supports of $\mu$ and $Q_\theta^R$.  Define
$u=\nabla\log J_x(\theta)$ and
$v=\mathbb{E}_{Y\sim\mu}f_\theta(Y)=-\nabla L_\mu(\theta)$.  Then
\begin{equation}
\|v-u\|
\leq G\sqrt{2D_\theta^R}.
\label{eq:app-local-alignment}
\end{equation}
In particular, small reward-weighted proposal mismatch makes the auxiliary SFT
direction close to the gradient of log expected reward.
\end{corollary}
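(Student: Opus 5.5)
We need to prove Corollary (Local alignment): $\|v-u\|\le G\sqrt{2D^R_\theta}$.

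The plan is to write $u$ as an expectation of the score under the reward-weighted distribution $Q_\theta^R$. Then $v-u$ becomes a difference of expectations of one bounded function under two distributions, and Pinsker's inequality finishes the argument.

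\textbf{Step 1.} First compute $u=\nabla\log J_x(\theta)$. Since $J_x(\theta)=\sum_y \pi_\theta(y\mid x)R(x,y)$, the log-derivative trick (exchanging gradient and sum, as the paper does implicitly in Proposition~\ref{prop:app-grpo-signal}) gives
\[
\nabla J_x(\theta)=\sum_y \pi_\theta(y\mid x)R(x,y)\,f_\theta(y).
\]
Dividing by $J_x(\theta)$ yields $u=\mathbb{E}_{Y\sim Q_\theta^R}f_\theta(Y)$. By definition, $v=\mathbb{E}_{Y\sim\mu}f_\theta(Y)$. Hence
\[
v-u=\sum_y\bigl(\mu(y)-Q_\theta^R(y\mid x)\bigr)f_\theta(y).
\]
This sum is supported on $\mathrm{supp}(\mu)\cup\mathrm{supp}(Q_\theta^R)$, and on that set $\|f_\theta\|\le G$ by hypothesis.

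\textbf{Step 2.} Take norms and use the triangle inequality:
\[
\|v-u\|\le G\sum_y|\mu(y)-Q_\theta^R(y)| = 2G\,\mathrm{TV}(\mu,Q_\theta^R).
\]
This step is valid even though $f_\theta$ is vector-valued, because we bound each term by its norm. A sharper constant is available by pairing with a unit vector $e$: the scalar function $e^\top f_\theta$ lies in $[-G,G]$, so $|\mathbb{E}_\mu e^\top f-\mathbb{E}_Q e^\top f|\le 2G\,\mathrm{TV}$. We then take the supremum over $e$. Both routes give the same $2G\,\mathrm{TV}$.

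\textbf{Step 3.} Pinsker's inequality gives $\mathrm{TV}(\mu,Q_\theta^R)\le\sqrt{D_\theta^R/2}$. Therefore
\[
\|v-u\|\le 2G\sqrt{D_\theta^R/2}=G\sqrt{2D_\theta^R},
\]
which is exactly \eqref{eq:app-local-alignment}. If $D_\theta^R=\infty$, the bound is trivial.

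The main obstacle is a technical one in Step 1: justifying that $\nabla$ commutes with the sum over responses and that $J_x(\theta)>0$. For finite-length responses over a finite vocabulary the response space is finite, so this is immediate; otherwise it follows from the uniform bound $G$ via dominated convergence. We should also note that the hypothesis of Proposition~\ref{prop:app-transfer-general}, $\pi_\theta R>0$ on $\mathrm{supp}\,\mu$, is what makes $D^R_\theta$ finite and the statement meaningful.
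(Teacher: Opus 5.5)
Your proposal is correct and follows the paper's proof step for step: express $u=\mathbb{E}_{Q_\theta^R}f_\theta$ via the log-derivative trick, bound $\|v-u\|\le 2G\,\mathrm{TV}(\mu,Q_\theta^R)$ using the norm bound on the union of the two supports, and finish with Pinsker. One side remark overreaches. The bound $G$ holds only at the single $\theta$ and only on those supports, so it does not supply the local-in-$\theta$ domination needed to differentiate under the sum over an infinite response space. For the finite response space you mention, which the paper also implicitly assumes, the interchange is immediate.
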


\begin{proof}
Differentiating $J_x$ gives
$u=\mathbb{E}_{Q_\theta^R}f_\theta$.  Hence
\[
\|v-u\|
\leq 2G\,\mathrm{TV}(\mu,Q_\theta^R)
\leq G\sqrt{2\,\mathrm{KL}(\mu\|Q_\theta^R)},
\]
where the last step is Pinsker's inequality.
\end{proof}

\subsection{Adaptive assistance from unguided competence}
\label{app:rgpo-scheduler}

With index repeated visits to one prompt by $t$,  let $\mathcal H_t$ contain the
history before its unguided samples, including the current policy and hint
level.  Let $p_t$ be its correctness probability under the unguided decoding
law, and let $C_t$ indicate at least one correct response among $K$
conditionally independent unguided samples.  The scheduler is
\begin{equation}
g_{t+1}=\operatorname{clip}(g_t+1-2C_t,0,N).
\label{eq:app-scheduler}
\end{equation}

\begin{proposition}[Conditional drift of the hint level]
\label{prop:app-drift}
For $0<g_t<N$,
\begin{equation}
\mathbb{E}[g_{t+1}-g_t\mid\mathcal H_t]
=2(1-p_t)^K-1.
\label{eq:app-drift}
\end{equation}
At $g_t=0$ the drift equals $(1-p_t)^K$, and at $g_t=N$ it equals
$-[1-(1-p_t)^K]$.  Every positive hint level has negative drift whenever
\begin{equation}
p_t>p_\star:=1-2^{-1/K}.
\end{equation}
\end{proposition}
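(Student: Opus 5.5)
Conditional on $\mathcal H_t$, the hint level $g_t$ and the unguided correctness probability $p_t$ are fixed, and the only randomness in the update~\eqref{eq:app-scheduler} is the indicator $C_t$. Because the $K$ unguided samples are conditionally independent with success probability $p_t$, we have
\[
\Pr(C_t=0\mid\mathcal H_t)=(1-p_t)^K,\qquad \Pr(C_t=1\mid\mathcal H_t)=1-(1-p_t)^K .
\]
The plan is therefore to compute $\mathbb{E}[g_{t+1}-g_t\mid\mathcal H_t]$ separately in the three regimes where the clip in~\eqref{eq:app-scheduler} behaves differently. In each regime we take the conditional expectation of the resulting increment.

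\textbf{Interior case, $0<g_t<N$.} Here $g_t+1-2C_t\in\{g_t-1,g_t+1\}\subseteq[0,N]$, so the clip is inactive and $g_{t+1}-g_t=1-2C_t$. Taking expectations gives
\[
1-2\bigl[1-(1-p_t)^K\bigr]=2(1-p_t)^K-1,
\]
which is~\eqref{eq:app-drift}.

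\textbf{Boundary cases.} At $g_t=0$, a success would move the level to $-1$, which the clip sends back to $0$. The increment is therefore $\mathbb{I}[C_t=0]$, with expectation $(1-p_t)^K$. At $g_t=N$, a failure is clipped back to $N$, so the increment is $-\mathbb{I}[C_t=1]$, with expectation $-[1-(1-p_t)^K]$.

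\textbf{Sign of the drift.} For positive hint levels, first take $0<g_t<N$: the drift is negative iff $(1-p_t)^K<1/2$. Since $K\geq1$ and $u\mapsto u^{1/K}$ is strictly increasing on $[0,1]$, this is equivalent to $1-p_t<2^{-1/K}$, i.e.\ $p_t>p_\star$. For $g_t=N$, the drift $-[1-(1-p_t)^K]$ is negative whenever $p_t>0$, and $p_t>p_\star>0$ guarantees this. Hence every level $g_t\geq1$ has negative drift once $p_t>p_\star$.

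No real obstacle is expected. The only care needed is to state that conditional independence given $\mathcal H_t$ is what yields the binomial failure probability, and to handle the clip correctly at the two endpoints.
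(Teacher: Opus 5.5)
Your proposal is correct and follows the same route as the paper: condition on $\mathcal H_t$, use $\Pr(C_t=0\mid\mathcal H_t)=(1-p_t)^K$, read off the increment in the interior and at the two clipped endpoints, and solve $2(1-p_t)^K-1<0$ for the threshold. You also check explicitly that $g_t=N$ has negative drift whenever $p_t>p_\star>0$; the paper's proof leaves this implicit, yet the claim about every positive hint level needs it (for instance when $N=1$).
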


\begin{proof}
The unguided group contains no successful rollout with probability
$(1-p_t)^K$. For $0<g_t<N$, this event increases $g_t$ by one, while its
complement decreases $g_t$ by one, giving
\eqref{eq:app-drift}. The boundary expressions follow from clipping.
The threshold $p_\star$ is obtained by solving
$2(1-p_t)^K-1<0$.
\end{proof}

\begin{corollary}[Expected time to remove assistance]
\label{cor:app-fading}
Starting from $g_0=g$, let
$T_0=\inf\{t\geq0:g_t=0\}$.  If
$p_t\geq\tau>p_\star$ almost surely for all $t<T_0$, then
\begin{equation}
\mathbb{E}[T_0\mid\mathcal H_0]
\leq
\frac{g}{1-2(1-\tau)^K}.
\label{eq:app-fading-time}
\end{equation}
\end{corollary}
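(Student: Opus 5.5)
We prove Corollary~\ref{cor:app-fading} with the standard drift (optional stopping) argument for a process that decreases in expectation by a fixed amount per step. Set $\varepsilon:=1-2(1-\tau)^K$. Because $\tau>p_\star=1-2^{-1/K}$, we have $(1-\tau)^K<1/2$, so $\varepsilon>0$.

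\textbf{Step 1: uniform drift bound.} Fix $t<T_0$, so that $g_t\geq1$. By Proposition~\ref{prop:app-drift}, if $0<g_t<N$ the conditional drift is $2(1-p_t)^K-1$. If $g_t=N$, the drift is $-[1-(1-p_t)^K]$. The boundary case is at least as negative as the interior one, since $-[1-(1-p_t)^K]\leq 2(1-p_t)^K-1$ is equivalent to $0\leq(1-p_t)^K$, which always holds. The case $g_t=0$ cannot occur before $T_0$. The map $p\mapsto 2(1-p)^K-1$ is decreasing in $p$, and $p_t\geq\tau$ almost surely. Hence on $\{t<T_0\}$,
\[
\mathbb{E}[g_{t+1}-g_t\mid\mathcal H_t]\leq 2(1-\tau)^K-1=-\varepsilon .
\]
The event $\{t<T_0\}$ is $\mathcal H_t$-measurable, since it depends only on $g_0,\ldots,g_t$.

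\textbf{Step 2: supermartingale.} Define $Z_t:=g_{t\wedge T_0}+\varepsilon\,(t\wedge T_0)$. On $\{t<T_0\}$, Step 1 gives $\mathbb{E}[Z_{t+1}-Z_t\mid\mathcal H_t]\leq0$. On $\{t\geq T_0\}$, the process is frozen and $Z_{t+1}=Z_t$. So $(Z_t)$ is a supermartingale with respect to $(\mathcal H_t)$. It follows that $\mathbb{E}[Z_t\mid\mathcal H_0]\leq Z_0=g$ for every $t$. Since $g_{t\wedge T_0}\geq0$, we get
\[
\varepsilon\,\mathbb{E}[t\wedge T_0\mid\mathcal H_0]\leq g .
\]

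\textbf{Step 3: pass to the limit.} By monotone convergence as $t\to\infty$, $\mathbb{E}[T_0\mid\mathcal H_0]\leq g/\varepsilon$, which is the claimed bound. This also shows $T_0<\infty$ almost surely. Working with the truncated times $t\wedge T_0$ avoids needing integrability of $T_0$ in advance.

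The only step that needs care is Step 1. There we must check that the clipped boundary at $g_t=N$ does not weaken the drift bound, that the hypothesis $p_t\geq\tau$ is used only on $\{t<T_0\}$, and that it is used through the monotonicity of the drift in $p_t$. Steps 2 and 3 are routine.
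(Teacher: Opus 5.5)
Your proof is correct and follows the paper's argument. Both establish a uniform drift bound $\mathbb{E}[g_{t+1}-g_t\mid\mathcal H_t]\leq -(1-2(1-\tau)^K)$ on $\{t<T_0\}$, apply it to the stopped process $g_{t\wedge T_0}$, and pass to the limit by monotone convergence. Your explicit check that the clipped level $g_t=N$ has drift $(1-p_t)^K-1\leq 2(1-p_t)^K-1$ fills in a step the paper skips when it invokes Proposition~\ref{prop:app-drift} for all $t<T_0$.
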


\begin{proof}
Set
\[
d=1-2(1-\tau)^K>0.
\]
By Proposition~\ref{prop:app-drift}, for all $t<T_0$,
\[
\mathbb{E}[g_{t+1}-g_t\mid\mathcal H_t]\leq -d.
\]
Applying this bound to the stopped process gives
\[
0\leq
\mathbb{E}[g_{T_0\wedge n}\mid\mathcal H_0]
\leq
g-d\,\mathbb{E}[T_0\wedge n\mid\mathcal H_0].
\]
Letting $n\to\infty$ and using monotone convergence yields
\eqref{eq:app-fading-time}.
\end{proof}


\subsection{Learning time in a two-response model}
\label{app:rgpo-learning-time}

\paragraph{Policy and update.}
This section analyzes a stylized two-response model intended to isolate the sparse-reward mechanism. Given one prompt with exactly two responses, one correct and one incorrect, the policy is parameterized by a scalar logit $z_t$ with
$p_t=\sigma(z_t)$.  The rollout and likelihood laws coincide.  At each visit,
we draw
\[
n_t\sim\operatorname{Binomial}(K,p_t),
\qquad K\geq2,
\]
where $n_t$ is the number of correct unguided responses.  Sampled advantages
and accepted targets are held fixed during one fresh SGD step.  For a
population-standard-deviation group normalization with stabilizer
$\epsilon\geq0$, define
\begin{equation}
V_K(n)=\frac nK\left(1-\frac nK\right),
\qquad
b_{K,\epsilon}(n)=
\begin{cases}
\displaystyle
\frac{V_K(n)}{\sqrt{V_K(n)}+\epsilon}, & 0<n<K,\\[6pt]
0, & n\in\{0,K\}.
\end{cases}
\label{eq:app-grpo-increment}
\end{equation}
With learning rate $\eta>0$, no KL penalty or momentum, and a mean auxiliary
loss over accepted responses from the current visit, the scalar update is
\begin{equation}
z_{t+1}
=z_t+\eta\left[
 b_{K,\epsilon}(n_t)
 +\lambda_t\mathbf 1_{A_t}(1-p_t)
\right],
\qquad
\lambda_t\geq\lambda_{\min}>0.
\label{eq:app-binary-update}
\end{equation}
For binary reward and $0\leq\delta<1$, $A_t$ occurs exactly when
$n_t=0$, $g_t>0$, and at least one guided response succeeds.  We use the
scheduler in Eq.~\eqref{eq:app-scheduler} with $C_t=\mathbf 1\{n_t>0\}$.
Every increment in Eq.~\eqref{eq:app-binary-update} is nonnegative, so $p_t$
is nondecreasing in this model.

\paragraph{Useful-hint condition.}
\label{assump:app-useful-guidance}
Fix $0<p_0<\tau<1$. There exist a hint level
$g_*\in\{1,\ldots,N\}$ and a constant $c>0$ such that, whenever
$p_t<\tau$ and $g_t\geq g_*$, conditional on an all-failure unguided group,
\begin{equation}
\Pr(A_t\mid\mathcal H_t,\text{all-failure})\geq c.
\label{eq:app-useful-guidance}
\end{equation}
If the $K_g$ guided candidates are conditionally independent and each succeeds
with probability at least $s_{\min}>0$, then one may take
$c=1-(1-s_{\min})^{K_g}$.

No condition is imposed for $g_t<g_*$. In particular, $g_*=N$ allows the
condition to hold only when the full rationale is revealed.

Let
\[
T_\tau=\inf\{t\geq0:p_t\geq\tau\},
\]
and define
\begin{equation}
\ell=g_*+1,
\qquad
\beta=(1-\tau)^{K\ell}c,
\qquad
m=
\left\lceil
\frac{
\operatorname{logit}(\tau)-\operatorname{logit}(p_0)
}{
\eta\lambda_{\min}(1-\tau)
}
\right\rceil .
\label{eq:app-learning-constants}
\end{equation}

\begin{lemma}[Block progress]
\label{lem:app-block-progress}
From any history prior to $T_\tau$, a block of $\ell$ consecutive visits
contains either an accepted guided response or a visit at which
$p_t\geq\tau$ with conditional probability at least $\beta$.
\end{lemma}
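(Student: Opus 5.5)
We exhibit one explicit favorable path of the block and lower-bound its conditional probability by $\beta$. Fix a history $\mathcal H_s$ with $s<T_\tau$, so $p_s<\tau$, and consider the $\ell=g_*+1$ visits $s,s+1,\ldots,s+g_*$. Let $E$ be the event that some visit in the block has $p_t\geq\tau$, and let $G$ be the event that an accepted guided response occurs in the block. We show $\Pr(E\cup G\mid\mathcal H_s)\geq\beta$. Since $p_t$ is nondecreasing in this model, if $E$ fails then $p_t<\tau$ at every visit of the block. We therefore restrict attention to the intersection with $E^c$, or equivalently run the argument ``as long as $p_t<\tau$'' and stop as soon as $E$ occurs.

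The favorable path is the following: the unguided group is an all-failure group ($n_t=0$) at every visit of the block, and at the final visit at least one guided candidate is accepted. Consider the visits before the last one. At each such visit, conditional on $\mathcal H_t$ with $p_t<\tau$, the event $n_t=0$ has probability $(1-p_t)^K>(1-\tau)^K$. On an all-failure visit the scheduler in Eq.~\eqref{eq:app-scheduler} sets $g_{t+1}=\min(g_t+1,N)$. Starting from any $g_s\geq0$, after $g_*$ consecutive all-failure visits we therefore have $g_{s+g_*}\geq\min(g_s+g_*,N)\geq g_*$, because $g_*\leq N$.

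Now consider the last visit $t=s+g_*$, on the event that $p_t<\tau$. It again has an all-failure group with conditional probability at least $(1-\tau)^K$. Given that, and since $g_t\geq g_*$, the useful-hint condition \eqref{eq:app-useful-guidance} gives $\Pr(A_t\mid\mathcal H_t,\text{all-failure})\geq c$. Multiplying these conditional lower bounds via the tower property yields a path probability of at least $(1-\tau)^{K\ell}c=\beta$. The tower argument proceeds through nested conditional expectations, peeling off one visit at a time from the last.

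The step needing care is the interplay with $E$: the bounds $(1-p_t)^K\ge(1-\tau)^K$ and \eqref{eq:app-useful-guidance} are only valid while $p_t<\tau$. We handle this by defining, at each visit, the ``good'' event to be $\{p_t\geq\tau\}\cup\{n_t=0\}$ for the first $g_*$ visits and $\{p_t\geq\tau\}\cup(\{n_t=0\}\cap A_t)$ at the last. We then show each has conditional probability at least the stated factor given $\mathcal H_t$: it is trivially at least this when $p_t\geq\tau$, and by the bounds above otherwise. If $p_t\geq\tau$ occurs at some visit, $E$ already holds and the subsequent factors can be bounded by $1$. The intersection of the good events is contained in $E\cup G$. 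One also uses that $p_t$ is $\mathcal H_t$-measurable, which holds because $\mathcal H_t$ includes the current policy, and that the rollout and likelihood laws coincide.
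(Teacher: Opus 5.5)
Your proposal is correct and follows essentially the same route as the paper. Both proofs force $g_*$ consecutive all-failure groups to raise the hint level to at least $g_*$, then use one more all-failure group together with the useful-hint condition, which gives the bound $(1-\tau)^{K(g_*+1)}c=\beta$. Your good events of the form $\{p_t\geq\tau\}\cup\{\cdots\}$, combined with monotonicity of $p_t$, also handle the case where the target is reached mid-block more rigorously than the paper's informal ``consider a block in which the target is not reached'' phrasing.
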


\begin{proof}
While $t<T_\tau$, an unguided group fails with conditional probability at
least
\[
f:=(1-\tau)^K.
\]
Consider a block in which the target is not reached and no guided response is
accepted during its first $g_*$ visits. If all of these unguided groups fail,
the scheduler increases the hint level at each visit, so that the next visit
has $g_t\geq g_*$. An additional all-failure group then yields an accepted
guided response with conditional probability at least $c$ by
Assumption~\ref{assump:app-useful-guidance}. Hence
\[
\Pr(\text{progress within the block}\mid\mathcal H_t)
\geq
f^{g_*+1}c
=
\beta.
\]
\end{proof}

\begin{theorem}[Rare-success scaling]
\label{thm:app-rare-success}
Under Eq.~\eqref{eq:app-binary-update} and
Assumption~\ref{assump:app-useful-guidance},
\begin{align}
\mathbb{E}[T_\tau]
&\leq
\frac{\ell m}{\beta},
\label{eq:app-learning-mean}\\
\Pr(T_\tau>\ell n)
&\leq
\Pr\!\left\{
\operatorname{Binomial}(n,\beta)<m
\right\}
\label{eq:app-learning-tail}\\
&\leq
\exp\!\left(
-\frac{(n\beta-m)^2}{2n\beta}
\right),
\qquad n\beta>m.
\notag
\end{align}
For group-centered GRPO using the same response budget
$b=K+K_g$ per visit and no auxiliary update,
\begin{equation}
\mathbb{E}[T_\tau^{\mathrm{GRPO}}]
\geq
\frac{1}{
1-(1-p_0)^b-p_0^b
}
\geq
\frac{1}{bp_0}.
\label{eq:app-plain-waiting}
\end{equation}
Therefore, for fixed
$K,K_g,g_*,\tau,\eta,\lambda_{\min}$ and $c$,
\begin{equation}
\mathbb{E}[T_\tau]
=
O\!\left(\log\frac{1}{p_0}\right),
\qquad
\mathbb{E}[T_\tau^{\mathrm{GRPO}}]
=
\Omega\!\left(\frac{1}{p_0}\right)
\quad
\text{as }p_0\downarrow0.
\label{eq:app-asymptotic-rates}
\end{equation}
\end{theorem}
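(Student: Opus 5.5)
Both halves of Theorem~\ref{thm:app-rare-success} follow from one observation: an accepted guided response is a ``success'' that pushes the logit $z_t$ up by a fixed amount.

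\textbf{Increment per success.} By Eq.~\eqref{eq:app-binary-update}, every increment is nonnegative, so $z_t$ is nondecreasing. On any visit with $t<T_\tau$ where $A_t$ occurs, we have $p_t<\tau$. The auxiliary term alone then raises $z_t$ by at least $\eta\lambda_{\min}(1-p_t)\geq\eta\lambda_{\min}(1-\tau)$. Hence $m$ such events, as defined in Eq.~\eqref{eq:app-learning-constants}, lift $z$ from $\operatorname{logit}(p_0)$ to at least $\operatorname{logit}(\tau)$. In other words, $T_\tau$ occurs no later than the $m$-th acceptance, or earlier if the target is reached otherwise.

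\textbf{Block comparison.} Partition the visits into consecutive blocks of length $\ell$. Lemma~\ref{lem:app-block-progress} says that, conditional on any pre-$T_\tau$ history, each block contains progress with probability at least $\beta$. Progress means an acceptance or reaching $p_t\geq\tau$. Two consequences follow:
\begin{itemize}
\item The number of progressive blocks among the first $n$ stochastically dominates $\operatorname{Binomial}(n,\beta)$. I would get this by a standard coupling, or by a supermartingale argument on the conditional success indicators.
\item $T_\tau>\ell n$ forces fewer than $m$ progressive blocks. Reaching the target itself counts as immediate termination, so such a block cannot be followed by further pre-$T_\tau$ blocks.
\end{itemize}
This gives the first inequality in \eqref{eq:app-learning-tail}. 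The Chernoff lower-tail bound for the binomial then gives the exponential bound. For the mean, the waiting time until $m$ progressive blocks is dominated by a sum of $m$ geometric$(\beta)$ block counts, each of mean at most $1/\beta$. Multiplying by the block length gives $\mathbb{E}[T_\tau]\leq \ell m/\beta$.

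\textbf{The main obstacle} is making this domination rigorous when the per-block probabilities depend on the history. I would do it by defining Bernoulli indicators $\xi_j$ with $\mathbb{E}[\xi_j\mid\mathcal F_{j-1}]\geq\beta$ on $\{T_\tau>\ell(j-1)\}$. The count $\sum_j\xi_j$ then dominates a binomial via sequential coupling with independent uniforms, and this is what the tail bound uses.

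\textbf{Plain GRPO.} Here there is no auxiliary term, and $b_{K,\epsilon}(n)=0$ whenever all $b$ responses agree. So $z_t$ stays at $z_0$ until the first visit at which the group is mixed. While $z$ is unchanged, visits are i.i.d., and each is mixed with probability $q=1-(1-p_0)^b-p_0^b$. Moreover $T_\tau\geq$ the index of the first mixed visit, since $p_0<\tau$ means at least one update is needed. The first mixed visit is geometric with mean $1/q$, which gives the first inequality in \eqref{eq:app-plain-waiting}. The second follows from $q\leq1-(1-p_0)^b\leq bp_0$ by Bernoulli's inequality.

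\textbf{Asymptotics.} With $K,g_*,\tau,c$ fixed, both $\ell$ and $\beta$ are constants. Also $m=O(-\operatorname{logit}(p_0))=O(\log(1/p_0))$, which yields \eqref{eq:app-asymptotic-rates}.
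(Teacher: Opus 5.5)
Your proposal is correct and follows the paper's proof essentially step for step. The shared steps are: each acceptance raises the logit by at least $\eta\lambda_{\min}(1-\tau)$; visits are grouped into blocks of length $\ell$ controlled by Lemma~\ref{lem:app-block-progress}; the tail comes from domination by $\operatorname{Binomial}(n,\beta)$ plus a Chernoff bound; the mean comes from a negative-binomial comparison; and for plain GRPO you use the geometric wait for the first mixed group together with $q\le bp_0$. The paper only asserts the binomial domination in one line, so your sequential-coupling justification is, if anything, more explicit.
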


\begin{proof}
For $t<T_\tau$, an accepted guided response increases the logit by at least
\[
d:=\eta\lambda_{\min}(1-\tau),
\]
while all other terms in Eq.~\eqref{eq:app-binary-update} are nonnegative.
Thus $m$ accepted responses suffice to increase the logit from
$\operatorname{logit}(p_0)$ to $\operatorname{logit}(\tau)$.

Partition the visits into blocks of length $\ell$. By
Lemma~\ref{lem:app-block-progress}, before $T_\tau$ each block has conditional
probability at least $\beta$ of producing an accepted response or reaching the
target. The resulting block indicators stochastically dominate i.i.d.\
Bernoulli$(\beta)$ variables. On the event $\{T_\tau>\ell n\}$, no block can
reach the target, and therefore fewer than $m$ blocks can contain an
acceptance. This gives
\eqref{eq:app-learning-tail}. The expectation bound follows by comparison with
a negative-binomial random variable with parameters $(m,\beta)$.

For GRPO, the policy does not change until the first group containing both a
successful and an unsuccessful response. Prior to this event, the success
probability remains $p_0$. Its waiting time is geometric with parameter
\[
q
=
1-(1-p_0)^b-p_0^b.
\]
Since reaching $\tau>p_0$ requires at least one nonzero update,
$\mathbb{E}[T_\tau^{\mathrm{GRPO}}]\geq q^{-1}$. Moreover,
$q\leq bp_0$, which gives \eqref{eq:app-plain-waiting}. Finally,
\[
\operatorname{logit}(\tau)-\operatorname{logit}(p_0)
=
\log\frac{1}{p_0}+O(1),
\]
and hence $m=O(\log(1/p_0))$ as $p_0\downarrow0$.
\end{proof}

\begin{corollary}[Time to competence without guidance]
\label{cor:app-learn-and-fade}
Under the conditions of Theorem~\ref{thm:app-rare-success}, suppose
$\tau>p_\star=1-2^{-1/K}$, and define
\[
T_{\rm zero}
=
\inf\{t\geq0:p_t\geq\tau,\ g_t=0\}.
\]
Then
\begin{equation}
\mathbb{E}[T_{\rm zero}]
\leq
\frac{\ell m}{\beta}
+
\frac{N}{1-2(1-\tau)^K}.
\label{eq:app-learn-and-fade}
\end{equation}
\end{corollary}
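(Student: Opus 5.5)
The bound splits the time into two phases, one per earlier result. First I bound the time until competence is reached; then I bound the time for the scheduler to remove the hints once competence holds.

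Define $T_\tau=\inf\{t:p_t\geq\tau\}$ as in Theorem~\ref{thm:app-rare-success}. That theorem gives $\mathbb{E}[T_\tau]\leq \ell m/\beta$ directly. At time $T_\tau$ the hint level satisfies $g_{T_\tau}\leq N$, because the scheduler clips to $\{0,\dots,N\}$.

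For the second phase I use monotonicity. In the two-response model every increment in Eq.~\eqref{eq:app-binary-update} is nonnegative, so $z_t$ is nondecreasing. Hence $p_t\geq\tau$ holds for all $t\geq T_\tau$, almost surely. Now restart the process at the stopping time $T_\tau$ and apply Corollary~\ref{cor:app-fading} to the shifted process with initial level $g=g_{T_\tau}\leq N$. The condition $p_t\geq\tau>p_\star$ holds for every subsequent visit, so the expected additional time until $g_t=0$, conditional on $\mathcal H_{T_\tau}$, is at most $g_{T_\tau}/(1-2(1-\tau)^K)$. This is at most $N/(1-2(1-\tau)^K)$.

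I then set $T'=\inf\{t\geq T_\tau: g_t=0\}$. At $T'$ both $p_{T'}\geq\tau$ and $g_{T'}=0$ hold, so $T_{\rm zero}\leq T'$. Taking expectations and using the tower property gives
\[
\mathbb{E}[T_{\rm zero}]\leq \mathbb{E}[T_\tau]+\mathbb{E}\bigl[\mathbb{E}[T'-T_\tau\mid\mathcal H_{T_\tau}]\bigr],
\]
and the two bounds above combine to give Eq.~\eqref{eq:app-learn-and-fade}.

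The main obstacle is justifying Corollary~\ref{cor:app-fading} at the random time $T_\tau$ rather than at $t=0$. That corollary is stated from $\mathcal H_0$ with a deterministic initial $g$. I would handle this by noting that its proof is a drift argument. Proposition~\ref{prop:app-drift} gives drift at most $-d$ conditional on each $\mathcal H_t$, for every $t\geq T_\tau$ with $g_t>0$. So the same optional-stopping computation, applied to $g_{(T_\tau+n)\wedge T'}$ and followed by monotone convergence, goes through conditionally on $\mathcal H_{T_\tau}$. Finiteness of $T_\tau$ almost surely follows from the finite expectation bound.
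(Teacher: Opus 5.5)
Your proposal is correct and follows the paper's proof. It bounds $\mathbb{E}[T_\tau]$ via Theorem~\ref{thm:app-rare-success}, uses the nonnegative increments in Eq.~\eqref{eq:app-binary-update} to get $p_t\geq\tau$ for all $t\geq T_\tau$, and applies Corollary~\ref{cor:app-fading} from $T_\tau$ with $g_{T_\tau}\leq N$. Your re-running of the drift and optional-stopping argument conditionally on $\mathcal H_{T_\tau}$ only spells out the restart at the random time $T_\tau$, which the paper's two-line proof leaves implicit.
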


\begin{proof}
Equation~\eqref{eq:app-binary-update} implies that $p_t$ is nondecreasing, so
$p_t\geq\tau$ for all $t\geq T_\tau$. Applying
Corollary~\ref{cor:app-fading} at $T_\tau$ and using
$g_{T_\tau}\leq N$ gives the result.
\end{proof}
\section{Experimental Setup}
\label{app:experimental-setup}
\subsection{Language-Model Experiments}
\label{app:lm-setup}
We implement all language-model post-training experiments in \texttt{verl} using the GRPO advantage estimator and vLLM for rollout generation.  Training uses a global batch size of 256 prompts. For each prompt, we sample four on-policy responses. Responses are allowed up to 4,096 tokens, while reference hints are truncated to at most 512 tokens. Because guided generation appends additional context to the original problem, we allow a maximum re-prompted sequence length of 6,000 tokens. Prompts longer than the configured 1,024-token input limit are filtered before training rather than silently truncated.

\begin{table}[!ht]
\centering
\caption{\textbf{Training configuration for the language-model and vision-language-model experiments.}
We report the optimization and sequence-length settings shared across RGPO and its corresponding baselines.}
\label{tab:training_hparams}
\small
\begin{tabular}{lcc}
\toprule
\textbf{Hyperparameter} & \textbf{LM} & \textbf{VLM} \\
\midrule
Training epochs          & 3                  & 3 \\
Batch size               & 256                & 512 \\
Rollouts per prompt      & 4                  & 4 \\
Learning rate            & $1\times10^{-6}$   & $1\times10^{-6}$ \\
Max. prompt length       & 1,024              & 768 \\
Max. response length     & 4,096              & 1,024 \\
Max. hint length         & 512                & 512 \\
Max. guided prompt length& 6,000              & 2,560 \\
Entropy coefficient      & $1\times10^{-3}$   & 0 \\
KL regularization        & None               & None \\
Precision                & BF16               & BF16 \\
Rollout samples $K$      & 4 + 4                  & 4 + 4 \\
\bottomrule
\end{tabular}
\end{table}

We optimize the actor with a learning rate of $1\times10^{-6}$ for three epochs. The PPO minibatch size is 256, corresponding to one optimizer minibatch per global training batch, with gradient accumulation through smaller per-GPU microbatches. Training uses bfloat16 parameters, FSDP2 sharding, gradient checkpointing, and fused model kernels. We do not use a KL penalty either in the actor objective or as part of the reward. We use a small entropy coefficient of $10^{-3}$ for the language-model experiments. Rollouts are generated with tensor-parallel size 1 and four samples per prompt. We use an actor microbatch size of 8 and a rollout log-probability microbatch size of 16 per GPU.

\subsection{Vision-Language Training}

The vision-language experiments use the same GRPO/RGPO implementation, with multimodal inputs enabled throughout rollout and optimization.  As in the language-model experiments, we disable a learned reward model and use a programmatic verifier to determine response correctness. We use a larger global batch of 512 prompts and sample four responses per prompt. The maximum text prompt length is 768 tokens, the generated response length is 1,024 tokens, and at most 512 tokens of the reference rationale are exposed as guidance. Guided re-prompting is allowed up to 2,560 tokens to accommodate the original multimodal prompt, partial rationale, and refinement instruction. Over-length input prompts are filtered rather than truncated.

The actor is optimized for three epochs with learning rate $1\times10^{-6}$ and PPO minibatch size 512. Training uses FSDP2, bfloat16 precision, gradient checkpointing, and fused kernels. We use neither KL regularization nor an entropy bonus in the vision-language experiments. Rollouts are generated with vLLM using tensor-parallel size 1 and four samples per prompt. Chunked prefill is enabled for the VLM rollout engine. The actor microbatch size is 32 per GPU and the rollout log-probability microbatch size is 64 per GPU. 

\subsection{Baselines}
\label{app:baselines}

\paragraph{On-policy RLVR.}
Our primary on-policy baseline is \textbf{GRPO}~\citep{shao2024deepseekmath}, which estimates advantages by comparing rewards among multiple responses sampled for the same prompt and avoids training a separate value model. We also evaluate \textbf{REINFORCE++}~\citep{hu2025reinforcepp} in the learning-efficiency experiments. REINFORCE++ is a critic-free REINFORCE variant that uses global advantage normalization together with PPO-style stabilization techniques. Comparing RGPO against both objectives tests whether its gains arise from rationale-guided exploration rather than from a particular underlying RLVR optimizer.

\paragraph{Off-policy supervision and guided exploration.}
We compare against several recent methods that address sparse or uninformative on-policy rewards by introducing external reasoning signals. \textbf{LUFFY}~\citep{yan2025luffy} mixes off-policy expert reasoning traces with on-policy rollouts and uses policy shaping to control the influence of demonstrations. \textbf{CHORD}~\citep{zhang2025chord} incorporates expert solutions as a dynamically weighted supervised objective within on-policy RL, gradually balancing imitation and exploration. These methods directly optimize toward externally provided expert trajectories.

A second family instead uses external information to make difficult rollouts more tractable. \textbf{GHPO}~\citep{liu2025ghpo} adaptively adjusts the amount of guidance according to the model's current ability and combines imitation-oriented learning on difficult examples with RL on more accessible ones. \textbf{QuestA}~\citep{li2025questa} augments difficult questions with partial solutions during RL, effectively reducing their difficulty and increasing the probability of obtaining informative rollouts. \textbf{BREAD}~\citep{zhang2025bread} introduces short expert prefixes when self-generated trajectories fail and branches new rollouts from these expert-provided intermediate states. \textbf{HINT}~\citep{wang2025hint} supplies heuristic guidance to ineffective rollouts while explicitly seeking to keep the guided trajectories close to the current policy. For the additional prompt-format analysis, we also compare with \textbf{AHPO} from MM-HELIX~\citep{zhao2025mm}, which combines adaptive off-policy supervision with online RL.


\paragraph{Multimodal reasoning baselines.}
For controlled multimodal comparisons, we include the supervised reasoning variants from MINT-CoT~\citep{chen2025mintcot}: \textbf{Original Image CoT SFT}, \textbf{Bounding-Box CoT SFT}, and \textbf{Text-only CoT SFT}. These baselines use the same Qwen2-VL backbone family and directly supervise the model with different forms of multimodal reasoning traces, providing a comparison between rationale supervision and rationale-guided RL. We further report representative open multimodal reasoning systems. \textbf{R1-VL}~\citep{zhang2025r1vl} improves multimodal reasoning through StepGRPO with step-wise rule-based rewards; \textbf{Open-R1-Multimodal}~\citep{open-r1-multimodal} provides an open implementation of GRPO-style multimodal RL; \textbf{Mulberry}~\citep{yao2024mulberry} constructs explicit multimodal reasoning trajectories using collective Monte Carlo tree search and trains on them with supervised learning; and \textbf{MM-Eureka}~\citep{meng2025mmeureka} applies large-scale rule-based reinforcement learning to multimodal reasoning.

Finally, we include strong general-purpose VLMs as reference points: \textbf{LLaVA-OneVision-Qwen2-7B-OV}~\citep{li2024llava}, \textbf{InternVL2-8B}~\citep{chen2024internvl}, \textbf{InternVL2-8B-MPO}~\citep{wang2024enhancing}, \textbf{DeepSeek-VL2}~\citep{wu2024deepseekvl2}, and \textbf{Qwen2.5-VL-7B-Instruct}~\citep{bai2025qwen25vl}. We report these systems to place RGPO's multimodal reasoning performance in the broader landscape rather than as strictly matched baselines, since they use different backbones, training corpora, and post-training recipes.

\paragraph{MM-HELIX experiment.}
To examine whether RGPO requires a high-quality expert chain-of-thought, we additionally train Qwen2.5-VL-7B-Instruct on the 24-Points subset of MM-HELIX-100K~\citep{zhao2025mm}. MM-HELIX constructs these problems using a deterministic solver and converts intermediate solver states into programmatically generated reasoning traces. In our RGPO experiment, we use only these solver-derived traces as rationale scaffolds; we do not use the longer natural-language rationales rewritten by Qwen3-235B. We further consider a feedback-only variant, \textbf{RGPO w/o CoT}, in which no solution rationale is provided. Instead, unsuccessful responses are refined using diagnostic feedback from the deterministic 24-Points evaluator, which checks output format, use of the required numbers, arithmetic correctness, and evaluability. This setting tests whether RGPO can benefit from weaker forms of structured guidance that do not require an expert-generated reasoning trajectory.

\subsection{Controlled Simulation Setup}


\paragraph{Synthetic reasoning problems.}
Each problem consists of a short sequence of reasoning decisions followed by a final answer. Decision types are shared across problems, so improving the policy on a particular type of decision can transfer to other problems in which the same decision recurs. The simulator contains 40 such decision types and 16 possible actions for each decision. We divide the decision types into 32 \emph{easy} and 8 \emph{hard} types. For an easy decision, one or two actions are valid and the policy is initialized to assign substantially higher probability to these actions. Its initial probability of making a valid easy decision is therefore approximately $0.78$--$0.89$. For a hard decision, only one of the 16 actions is valid and the policy is initialized uniformly, giving an initial success probability of $1/16$. Easy problems contain two to five easy decisions. Hard problems contain five to seven decisions, four of which are hard. Because a reasoned response is correct only when \emph{every} decision is valid, obtaining a complete solution to a hard problem by unguided sampling is unlikely at initialization. Hard decision types do not occur in easy problems, preventing the easy subset from implicitly teaching the difficult decisions. We independently sample 512 training problems and 256 held-out test problems for each random seed, with hard problems comprising $25\%$ of both splits.

\paragraph{Responses and reward.}
The policy can either produce a reasoning trajectory or answer the problem directly. In reasoning mode, it samples one action for each decision in the problem, and the response receives reward 1 only if every sampled action is valid. In direct-answer mode, it chooses among 100 possible answers and receives reward 1 only for the gold answer. The initial probability of choosing the direct-answer mode is $0.2$; the answer distribution itself is initially uniform. The direct-answer path serves two purposes. First, it provides a weak alternative to multi-step reasoning, analogous to guessing an answer without constructing a valid solution. Second, when the complete reference is revealed, the gold answer becomes visible, allowing us to study whether a method learns to rely on the additional information rather than improving its unguided reasoning policy.

\paragraph{Reference rationales and hints.}
Each problem is paired with a reference rationale containing one recommended action for every reasoning decision. To represent an informative but imperfect rationale, we systematically corrupt the reference for three of the eight hard decision types: whenever one of these types occurs, the reference recommends the same fixed invalid action. The remaining reference steps are correct. The reference can be revealed at three levels, $g\in\{0,1,2\}$. At $g=0$, no rationale is provided. At $g=1$, the first half of the reference reasoning trajectory is available, and at $g=2$, the complete trajectory and final answer are visible. At a revealed reasoning step, the generated trajectory follows the reference action with probability $\rho=0.7$; otherwise, the policy generates the step from its own action distribution. This construction allows guidance to influence generation without completely determining the resulting trajectory.
\begin{table}[!ht]
\centering
\caption{\textbf{Controlled simulation configuration.}
We construct the environment so that hard problems rarely yield a correct
unguided trajectory at initialization, while the reference rationale provides
useful but imperfect guidance.}
\label{tab:sim-params}
\small
\begin{tabularx}{\columnwidth}{@{}lX@{}}
\toprule
\textbf{Parameter} & \textbf{Value} \\
\midrule
Decision types
    & 40 total: 32 easy and 8 hard \\

Actions per decision
    & 16 \\

Problem composition
    & Easy: 2--5 easy decisions; hard: 5--7 decisions with 4 hard decisions \\

Train / test problems
    & 512 / 256, with 25\% hard problems \\

Reference corruption
    & 3 of the 8 hard decision types have systematically incorrect reference actions \\

Hint levels
    & $g\in\{0,1,2\}$: no hint, partial rationale, or full rationale \\

Hint-following probability
    & $\rho=0.7$ \\

Unguided / guided rollouts
    & $K=4$ / $K'=4$ \\

Initial direct-answer probability
    & $0.2$ \\

Acceptance margin
    & $\delta=0$ \\

Auxiliary-loss weight
    & $\lambda=1.0$ \\

Batch size
    & 32 problems \\

Training
    & 3 epochs (16 updates per epoch) \\

Learning rate
    & 64 \\

Evaluation
    & 5 random seeds; all methods evaluated without hints \\
\bottomrule
\end{tabularx}
\end{table}
\paragraph{Training strategies.}
All methods use the same policy, training problems, reward function, and optimization budget as described in Table \ref{tab:sim-params}. Training proceeds for three epochs with minibatches of 32 problems, corresponding to 16 parameter updates per epoch. We use plain gradient ascent with learning rate 64 and no KL regularization. Each unguided group contains $K=4$ responses. \textbf{GRPO} uses only the four unguided responses. Advantages are computed by normalizing rewards within each group; when every response receives the same reward, the group contributes no policy-gradient update.

\textbf{RGPO} begins with the same unguided GRPO update. All problems start with $g=0$. After each epoch, the hint level of a problem is increased by one if none of its unguided responses was correct during that epoch and decreased by one otherwise, subject to $0\leq g\leq2$. For a problem with $g>0$, we sample an additional $K'=4$ guided responses. A guided response is retained only if it is correct while all corresponding unguided responses are incorrect. Accepted responses are then optimized under the \emph{original problem without the hint}. The auxiliary supervised loss is averaged over accepted guided responses in the current minibatch and weighted by $\lambda=1$.

We include several variants to isolate how the same reference information is used. \textbf{RGPO w/o gate} distills every guided response rather than retaining only verified improvements. \textbf{RGPO, full hint always} follows the RGPO procedure but provides the complete reference for every problem throughout training. \textbf{RL + SFT(ref)} combines unguided GRPO with supervised learning directly on the reference rationale for every problem, while \textbf{RL + SFT(ref) on failure} applies the reference-SFT objective only when all unguided responses for a problem fail. \textbf{Hint-conditioned RL} applies an additional GRPO update to responses generated in the adaptively hinted context rather than distilling them under the original problem. Finally, \textbf{Full hint} performs GRPO entirely on responses generated with the complete reference available.

All methods are evaluated on the original problems \emph{without hints}. Thus, performance measures what has transferred to the unguided policy rather than what the model can accomplish while the reference remains available.

\begin{equation*}
p_\theta(x)=P(\text{reason})\prod_{j=1}^{L_x}\pi_\theta\big(A(k(x,j))\big)+P(\text{direct})\,\mathrm{softmax}(\psi_x)_{a_x},
\end{equation*}
where $A(k)$ denotes the set of valid actions for decision type $k$ and $a_x$ is the gold answer. We report the mean of $p_\theta(x)$ over a split as accuracy and the mean of $-\log p_\theta(x)$ as negative log-likelihood (NLL). This removes sampling noise from evaluation and makes it possible to examine changes in the policy even when the probability of completing an entire hard problem remains small.
\subsection{Hardware configuration}

 All experiments were conducted on high-performance machines equipped with Intel Xeon CPUs and NVIDIA GPUs. Specifically, we utilized two machine configurations: (1) INTEL(R) XEON(R) PLATINUM 8592+ with 2.0Ti RAM and H200 SXM 141GB GPUs and (2) Intel(R) Xeon(R) Platinum 8358 CPU @ 2.60GHz with  503Gi  RAM and  NVIDIA A100 SXM4 80GB GPUs. Although different GPU types were used to balance workload priorities, we ensured that all comparisons across methods were performed on the same hardware configuration for a given model and dataset to eliminate hardware-induced variability and maintain consistency and fairness in evaluation.

\subsection{Prompt templates}

RGPO uses rationales as temporary scaffolds to elicit an improved response. To implement this idea, we design a refinement prompt that conditions the model on three pieces of information: the original problem, a partial solution prefix, and feedback on its previous attempt. The partial solution provides a rationale guide, while the feedback signal indicates whether the previous answer failed due to formatting, accuracy, or both. Importantly, the model is instructed to generate a new standalone solution to the original question, rather than to comment on or revise the previous response. This prevents the model from learning a conversational repair pattern and instead encourages it to internalize the reasoning needed to solve the original task.

\begin{tcolorbox}[
    colback=white!120!blue,
    colframe=white!60!blue,
    title=Refinement prompt,
    fonttitle=\bfseries\large
]
\label{refine-prompt}
\texttt{\{prompt\}}\\
\texttt{This is the (partial) solution without formatting, you can leverage it to check your answer:}\\
\texttt{\{solution\}}\\
\texttt{Previous response:}\\
\texttt{\{response\}}\\
\texttt{Feedback for your previous response:}\\
\texttt{Format score: \{format\_score\}/1.0}\\
\texttt{Accuracy score: \{accuracy\_score\}/1.0}\\
\texttt{Overall score: \{score\}/1.0}\\
\texttt{You should maximize the overall score by improving both the format and accuracy of your answer.}\\
\texttt{Write a new standalone solution to the original question without mentioning the previous responses or feedback. You first think through the reasoning process as an internal monologue, enclosed within <think> </think> tags. Then, provide your final answer enclosed within \textbackslash boxed\{\}. Please strictly follow the format.}
\end{tcolorbox}

The refinement prompt is shown above. Here, \texttt{\{prompt\}} denotes the original question, \texttt{\{solution\}} denotes the currently revealed rationale segment, \texttt{\{response\}} denotes the model's previous unguided response, and the feedback fields are obtained from the verifier reward. The instruction explicitly asks the model to maximize both format and accuracy, while enforcing the same answer format used during training and evaluation. This is because we found in practice that the model can  sometimes respond with the correct option but in the wrong format due to the long refinement-instruction prompt.

For the language-only experiments, we use the following prompt, closely following \cite{wang2025hint}. 

\begin{tcolorbox}[
    colback=white!120!blue,
    colframe=white!60!blue,
    title=DAPO's system prompt,
    fonttitle=\bfseries\large
]
\label{dapo-prompt}
You are a helpful AI assistant. A conversation takes place between the User and the Assistant. The User asks a question, and the Assistant solves it. Please help me solve this question. Wrap only the final answer in \textbackslash boxed\{\}.
\end{tcolorbox}

For training vision-language models, we adopt a concise reasoning format inspired by DeepSeek-R1. The model is instructed to place its reasoning inside \texttt{<think>} and \texttt{</think>} tags and to wrap the final answer in \texttt{\textbackslash boxed\{\}}.  Compared with the Mulberry prompt used in MINT-CoT, this format is simpler and imposes fewer structural constraints on the model output. In the appendix, we further analyze how this prompt design affects learning dynamics and final performance.
\begin{tcolorbox}[
    colback=white!120!blue,
    colframe=white!60!blue,
    title=DeepSeek-R1's system prompt,
    fonttitle=\bfseries\large
]
\label{deepseek-prompt}
You first think through the reasoning process as an internal monologue, enclosed within <think> </think> tags. Then, provide your final answer enclosed within \textbackslash boxed\{\}.
\end{tcolorbox}

This prompt is more concise and straightforward than the Mulberry-style prompt used in the MINT-CoT dataset, which decomposes the response into multiple explicit sections, including image description, rationale, step-by-step reasoning, and final answer. While such a structured format may encourage detailed multimodal reasoning, it also imposes a more complex output schema. 

\begin{tcolorbox}[
    colback=white!120!blue,
    colframe=white!60!blue,
    title=Mulberry's system prompt,
    fonttitle=\bfseries\large
]
\label{mulberry-prompt}
Generate an image description based on the question.

Then, provide a rationale to analyze the question.

Next, generate a step-by-step reasoning process to solve the problem. Ensure the steps are logical and concise.

Finally, provide a concise summary of the final answer in the following format: The final answer is: xxx.

Format your response with the following sections, separated by \#\#\#:

\#\#\# Image Description:

\#\#\# Rationales:

\#\#\# Let's think step by step.

\#\#\# Step 1:

\#\#\# Step 2:

...

\#\#\# The final answer is: 

\end{tcolorbox}

\clearpage
\section{Additional experimental results}

\subsection{Controlled Simulation}

Figure~\ref{fig:nll} shows both no-hint accuracy and NLL on the complete training and test sets and separately on easy and hard problems.  GRPO rapidly learns the easy subset, reaching $99.1\%$ test accuracy, but remains at $0.0\%$ on hard problems after three epochs. The NLL curves in Figure~\ref{fig:nll} make the underlying behavior more visible. Once direct answering is suppressed, solving a hard problem requires producing all four difficult decisions correctly in the same trajectory. Almost all hard-problem groups therefore contain only failures, yielding no relative reward signal for GRPO. The easy decisions continue to improve, but the hard decisions remain close to their initialization. Hint-conditioned RL receives more information than GRPO, but transfers it inefficiently in this short training horizon. Its hard-problem NLL decreases even though final hard accuracy remains only $0.4\%$, indicating that the policy is beginning to improve individual decisions without yet assigning enough probability to a completely correct multi-step trajectory. 

\begin{figure}[!ht]
\centering
\includegraphics[width=.9\columnwidth]{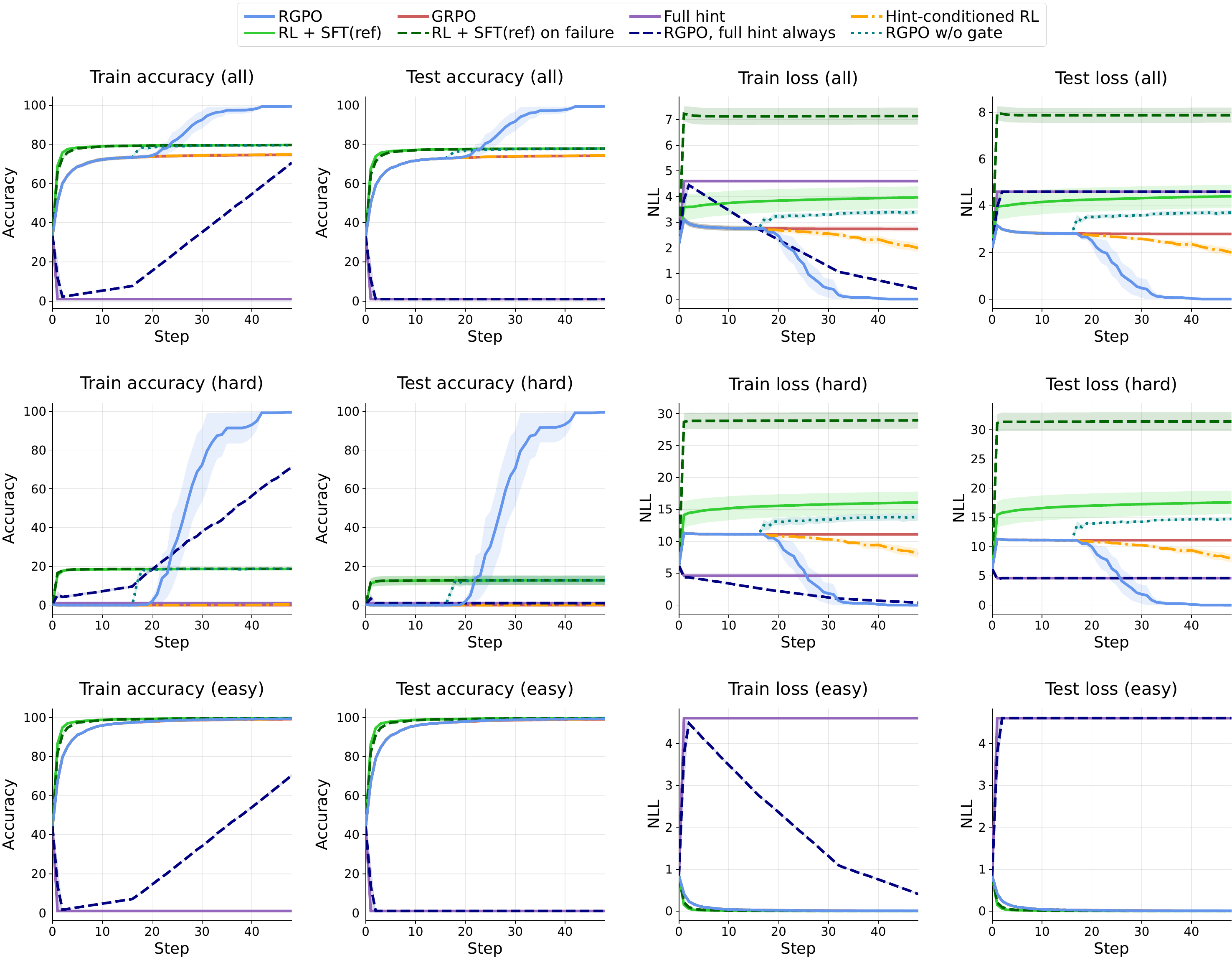}
\caption{\textbf{Full learning dynamics in the controlled simulation.}
Accuracy without hints and corresponding negative log-likelihood on training and held-out problems, shown for all, hard, and easy examples. Curves report mean $\pm$ s.e.\ over five seeds. All methods are evaluated on the original problems without access to the reference rationale.}
\label{fig:nll}
\end{figure}

Both reference-SFT variants reach $12.8\%$ hard test accuracy. Their improvement over GRPO comes from directly supervising the difficult decisions for which the reference is correct. However, the same objective repeatedly reinforces an invalid action for the three hard decision types whose references are systematically corrupted. Problems involving these decisions therefore remain unsolved.  The same behavior appears in \textbf{RGPO w/o gate}. Although its guided trajectories are generated by the policy rather than copied directly from the reference, removing the verifier-based acceptance criterion allows incorrect guided responses to enter the supervised objective. Its hard test accuracy consequently matches the reference-SFT baselines at $12.8\%$. This comparison isolates the role of verification: guidance is useful only when the resulting trajectory is checked before it becomes a training target.

\begin{table}[!ht]
\centering
\caption{\textbf{Final no-hint accuracy (\%) in the controlled simulation after three training epochs.}
Values are mean $\pm$ standard deviation over five seeds. The hard test split is the primary diagnostic: it contains problems for which successful unguided reasoning trajectories are initially extremely unlikely.}
\label{tab:final_sim}
\small
\begin{tabular}{lcccc}
\toprule
\textbf{Method}
& \textbf{Test}
& \textbf{Test (hard)}
& \textbf{Test (easy)}
& \textbf{Train (hard)} \\
\midrule
GRPO
& $74.2\pm1.5$
& $0.0\pm0.0$
& $99.1\pm0.1$
& $0.0\pm0.0$ \\

Full hint
& $1.0\pm0.0$
& $1.0\pm0.0$
& $1.0\pm0.0$
& $1.0\pm0.0$ \\

Hint-conditioned RL
& $74.4\pm1.4$
& $0.4\pm0.8$
& $99.3\pm0.1$
& $0.4\pm0.9$ \\

RL + SFT(ref)
& $77.9\pm0.7$
& $12.8\pm5.6$
& $99.7\pm0.0$
& $18.7\pm1.6$ \\

RL + SFT(ref) on failure
& $77.8\pm0.8$
& $12.8\pm5.6$
& $99.6\pm0.1$
& $18.7\pm1.6$ \\

RGPO w/o gate
& $77.8\pm0.8$
& $12.8\pm5.6$
& $99.5\pm0.1$
& $18.7\pm1.6$ \\

RGPO, full hint always
& $1.0\pm0.0$
& $1.0\pm0.0$
& $1.0\pm0.0$
& $71.2\pm0.5$ \\

\midrule
RGPO
& $\mathbf{99.4\pm0.1}$
& $\mathbf{99.5\pm0.2}$
& $99.3\pm0.1$
& $\mathbf{99.5\pm0.1}$ \\
\bottomrule
\end{tabular}
\end{table} 

Table~\ref{tab:final_sim} reports the corresponding final accuracies after three epochs. The \textbf{Full hint} baseline illustrates a different failure mode. Because the complete reference exposes the gold answer, the policy can obtain perfect reward in the guided context by relying on that information. When the reference is removed for evaluation, however, accuracy falls to $1\%$, exactly the chance rate of selecting among 100 answers. A related effect appears for \textbf{RGPO, full hint always}. Distilling full-hint responses allows the policy to fit many of the training answers, reaching $71.2\%$ accuracy on hard training problems, but this behavior does not transfer to held-out problems: hard test accuracy remains at $1\%$. Thus, high reward under guidance does not by itself imply that the policy has learned a reasoning strategy that can be executed from the original input.

 RGPO behaves differently because the rationale determines where the policy searches, but not which trajectory is ultimately trusted. A guided response contributes to the auxiliary objective only when it passes the verifier and improves over all unguided responses for the same problem. The accepted response is then optimized under the original problem without the rationale. This produces a rapid change in the hard-problem learning dynamics. Once a verified guided trajectory improves one of the difficult decision types, that improvement transfers to other problems containing the same type. These problems in turn become more likely to produce successful unguided rollouts, allowing the ordinary RL objective to begin contributing useful signal. After three epochs, RGPO reaches $99.5\%$ hard test accuracy and $99.4\%$ overall test accuracy while retaining $99.3\%$ performance on easy problems.
\subsection{Fine-Grained Benchmark Results}

\label{app:fine-grained}
Figure \ref{fig:mathvista} presents a category-level comparison of the base model, GRPO, and RGPO on MathVista. 
\begin{figure}[!ht] 
    \centering
     \includegraphics[width=1\columnwidth]{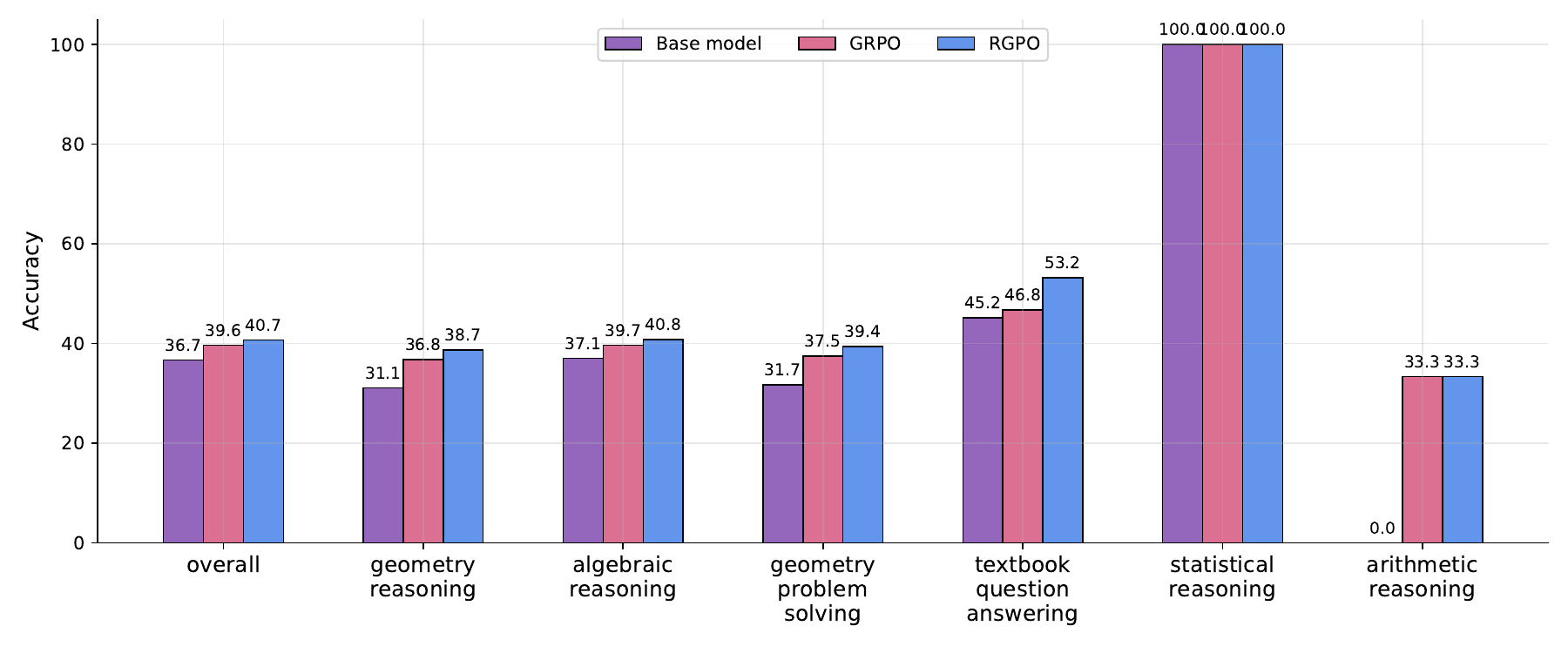}
    \caption{\textbf{Category-wise performance comparison of Qwen2-VL-2B on MathVista.} We compare the base model, GRPO, and RGPO across different mathematical reasoning categories. RGPO achieves the best overall accuracy and consistently improves over both the base model and GRPO on most categories, with the largest gain observed on textbook question answering.}
    \label{fig:mathvista}
\end{figure}
RGPO achieves the highest overall accuracy, improving from 36.7 for the base model and 39.6 for GRPO to 40.7. The gains are consistent across several reasoning categories, including geometry reasoning, algebraic reasoning, geometry problem solving, and textbook question answering. The improvement is especially pronounced on textbook question answering, where RGPO reaches 53.2, outperforming GRPO by 6.4 points and the base model by 8.0 points. 

 This suggests that rationale-guided refinement is particularly helpful for problems requiring structured interpretation and multi-step reasoning. On statistical reasoning, all methods achieve perfect accuracy, while on arithmetic reasoning both GRPO and RGPO improve over the base model. Overall, these results show that RGPO provides a stronger optimization signal than standard RLVR-style training, leading to better no-hint reasoning performance across most MathVista categories.

\begin{figure}[!ht] 
    \centering
     \includegraphics[width=1\columnwidth]{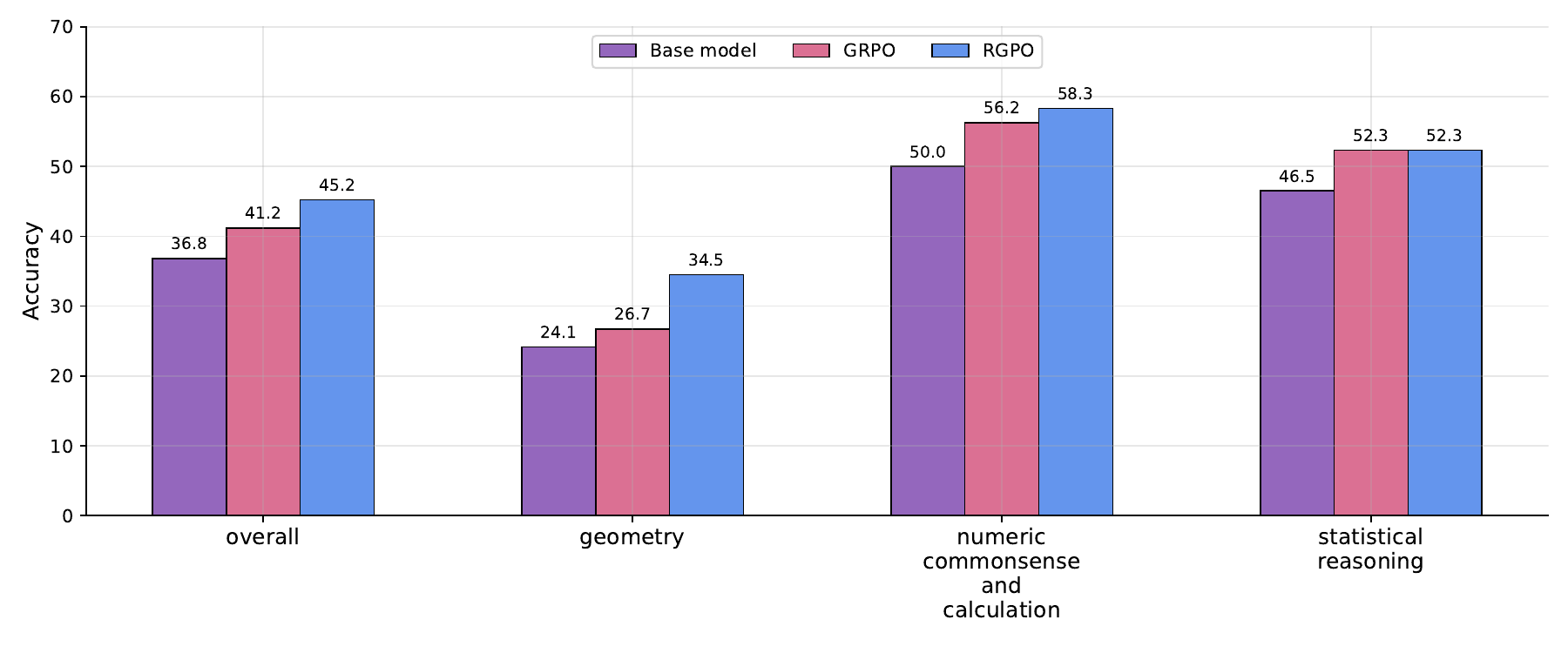}
    \caption{\textbf{Category-wise performance comparison of Qwen2-VL-2B on MMStar:} We compare the base model, GRPO, and RGPO across the mathematical reasoning categories in MMStar. RGPO achieves the highest overall accuracy and improves over both baselines on geometry and numeric commonsense/calculation tasks, while matching GRPO on statistical reasoning.}
    \label{fig:mmstar}
\end{figure}

Figure \ref{fig:mmstar} reports the category-wise performance of the base model, GRPO, and RGPO on MMStar. RGPO achieves the best overall accuracy, improving from 36.8 for the base model and 41.2 for GRPO to 45.2. The largest gain appears in geometry, where RGPO reaches 34.5, outperforming GRPO by 7.8 points and the base model by 10.4 points. RGPO also improves numeric commonsense and calculation accuracy to 58.3, compared with 56.2 for GRPO and 50.0 for the base model. On statistical reasoning, RGPO matches GRPO at 52.3, with both methods outperforming the base model. These results show that RGPO provides consistent benefits on MMStar, especially for visually grounded and calculation-heavy reasoning tasks where rationale-guided refinement can help convert failed attempts into more informative training signals.

\subsection{ Dynamics of Adaptive Rationale Guidance}
Figure \ref{fig:hint} illustrates how RGPO adaptively controls the use of rationale guidance during training. In the first epoch, no hints are provided, allowing the algorithm to estimate the model’s unguided problem-solving ability. Once rationale-guided refinement is activated in the second epoch, the non-empty hint fraction rises to around 20\%, indicating that only a minority of examples require scaffolded assistance. By the third epoch, this fraction drops to roughly 15\%, suggesting that many examples that previously required guidance can now be solved without hints. This behavior supports the central design of RGPO: rationales are not used as always-on supervision, but as temporary scaffolds for difficult examples. As training progresses, the model becomes less dependent on external rationale guidance, making full-solution exposure for every sample unnecessary.

\clearpage

\begin{figure}[t] 
    \centering
     \includegraphics[width=\columnwidth]{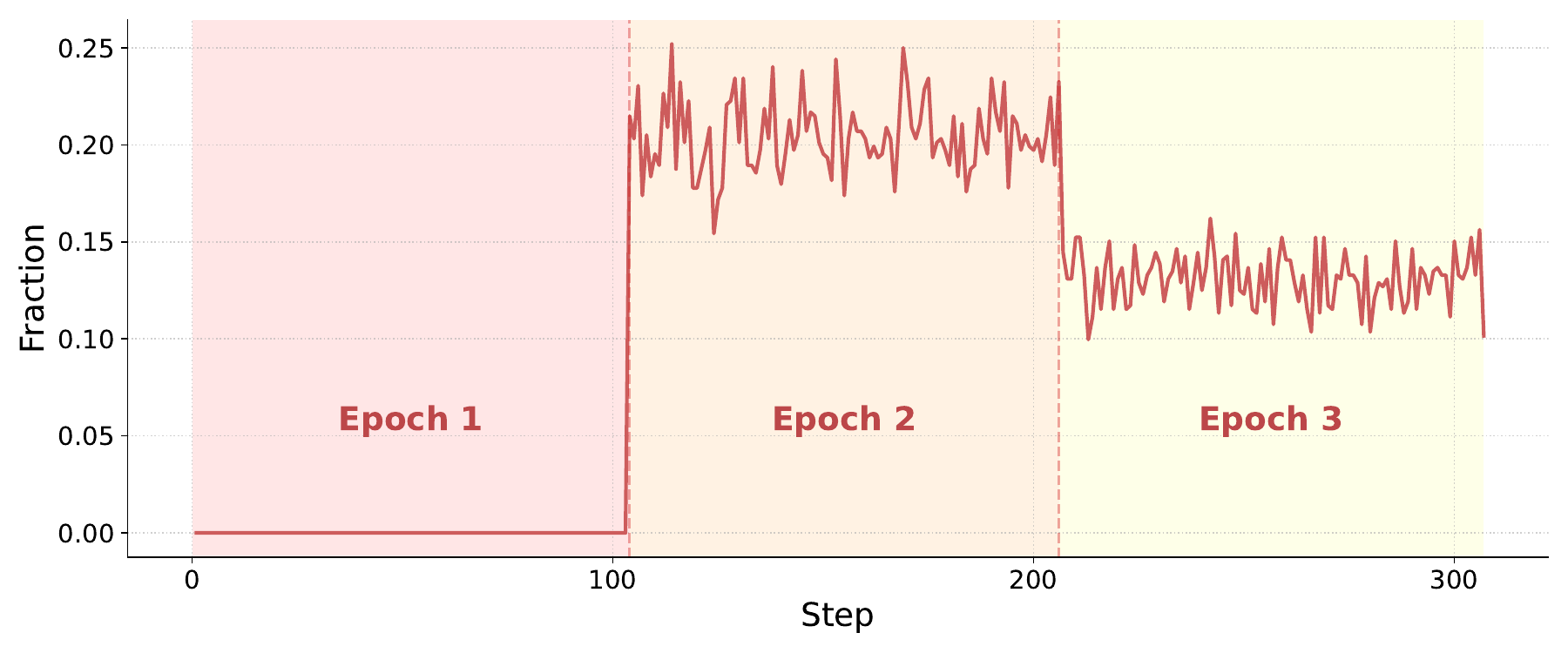}
    \caption{\textbf{Fraction of training examples receiving rationale guidance:} The hint rate remains zero during the probing epoch, rises when RGPO begins guided refinement, and decreases as the model learns to solve more examples independently.}
    \label{fig:hint}
\end{figure}
\subsection{Beyond Expert Rationale Scaffolds}
\label{app:mmhelix}

The experiments above use reference solutions as a convenient source of rationale scaffolds. We next ask whether RGPO fundamentally depends on such expert-written reasoning traces. To separate the benefit of scaffolding from the quality of the rationale source, we evaluate RGPO on the 24-Points task from MM-HELIX~\citep{zhao2025mm}, where guidance can be obtained directly from the underlying programmatic solver. Samples from this dataset are visually presented in Figure \ref{fig:mmhelix-examples}.

\begin{figure*}[!ht]
    \centering
    \includegraphics[width=.85\textwidth]{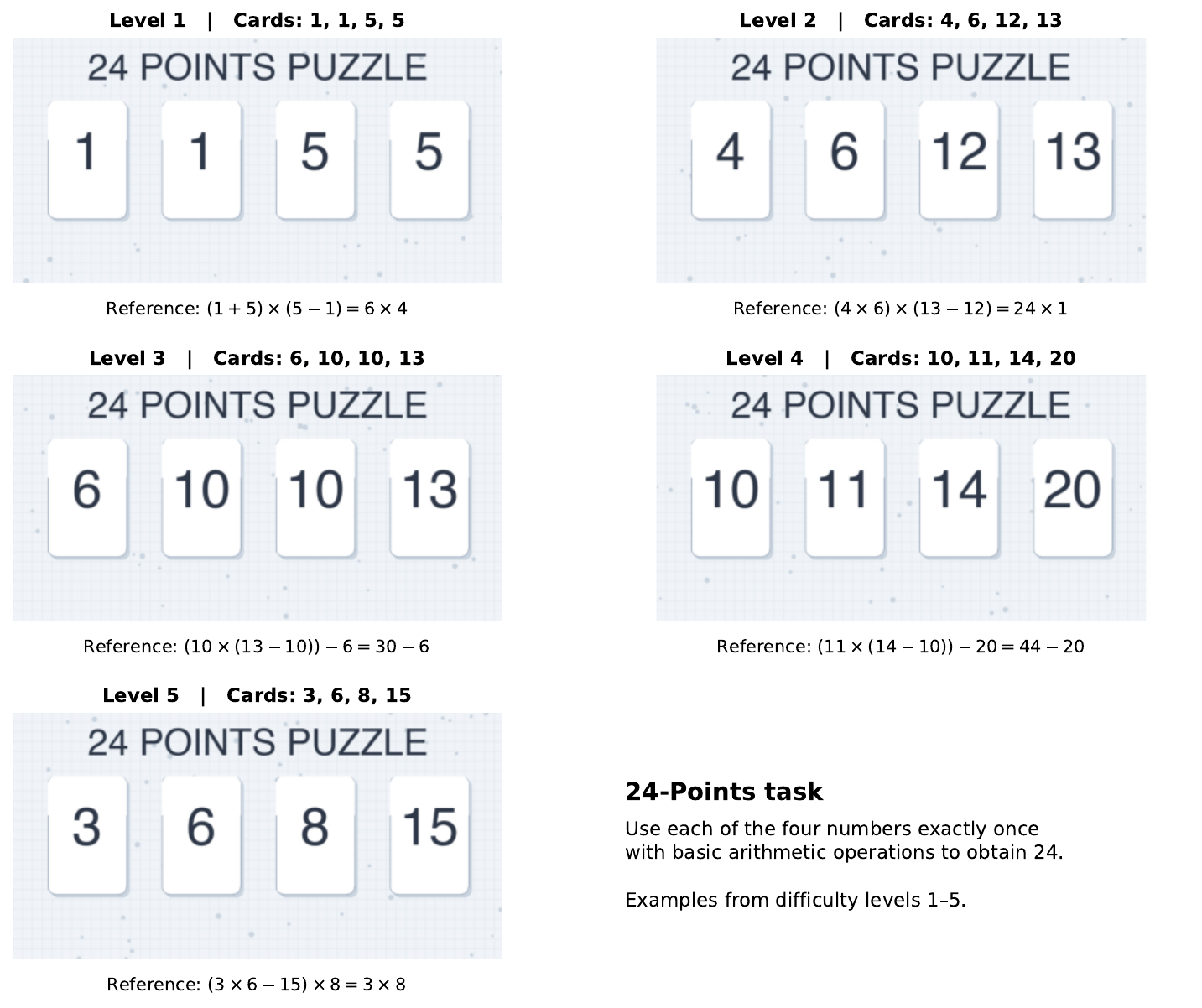}
    \caption{\textbf{Representative 24-Points examples used in our MM-HELIX experiment.}
    Each problem presents four numbers and requires constructing an arithmetic expression that uses every number exactly once and evaluates to 24. 
    }
    \label{fig:mmhelix-examples}
\end{figure*}

For \textbf{RGPO-7B}, we use the mechanical solver traces provided by the MM-HELIX data-generation pipeline as rationale scaffolds. These traces encode valid intermediate reasoning states but are substantially shorter and less natural than the Qwen3-235B-rewritten trajectories used by MM-HELIX-7B. We therefore deliberately give RGPO the less polished source of supervision rather than the expert-rewritten chain-of-thought. We also evaluate \textbf{RGPO-7B w/o CoT}, which removes the solution trace entirely. When an unguided response fails, this variant receives only diagnostic feedback from the deterministic task verifier and uses that feedback to refine its response.
\begin{table*}[!ht]
\centering
\caption{\textbf{Programmatic feedback used by RGPO-7B (w/o CoT).}
The judge supplies diagnostic information about the model's current attempt without revealing a correct solution.
The example uses the input numbers $[2,5,10,12]$.}
\label{tab}
\small
\begin{tabular}{@{}c p{0.18\textwidth} p{0.25\textwidth} p{0.47\textwidth}@{}}
\toprule
\textbf{\#} &
\textbf{Error category} &
\textbf{Trigger condition} &
\textbf{Feedback returned to the model (example, numbers [2, 5, 10, 12])} \\
\midrule

1 &
\textbf{No expression / format} &
No arithmetic operator found in the extracted answer &
\emph{``No arithmetic expression was found in your answer. Combine [2, 5, 10, 12], using each number exactly once, with +, -, $\times$, $\div$ and parentheses to make 24 \ldots''} \\

2 &
\textbf{Number-usage (missing)} &
Used numbers $\neq$ given numbers (multiset) &
\emph{``Number-usage error: you did not use 2. You must use each of [2, 5, 10, 12] exactly once and use no other numbers.''} \\

3 &
\textbf{Number-usage (wrong count / extra)} &
Same, with per-number diagnosis &
\emph{``you used 2 2x but only 1 is available; you used 9, which is not one of the given numbers.''} \\

4 &
\textbf{Wrong value --- too small} &
Correct numbers, result $< 24$ &
\emph{``You used the right numbers, but your expression $(12 - 10) \times 5 + 2$ equals 12, which is 12 short of 24 --- make the result larger.''} \\

5 &
\textbf{Wrong value --- too large} &
Correct numbers, result $> 24$ &
\emph{``\ldots equals 48, which is 24 above 24 --- make the result smaller.''} \\

6 &
\textbf{Division by zero} &
Expression divides by zero &
\emph{``Your expression divides by zero. Rearrange it so that no division by zero occurs. \ldots''} \\

7 &
\textbf{Unevaluable} &
Malformed / invalid characters / unbalanced parens &
\emph{``Your expression could not be evaluated. Use only [2, 5, 10, 12] with +, -, $\times$, $\div$ and balanced parentheses, and nothing else.''} \\

8 &
\textbf{Correct} &
Right numbers \textbf{and} value $=24$ &
\emph{``Correct --- your expression equals 24 using each of [2, 5, 10, 12] exactly once.''} \\

\bottomrule
\end{tabular}
\end{table*}

We further remove solution rationales entirely in \textbf{RGPO-7B (w/o CoT)}. In this setting, failed responses receive only diagnostic feedback from the deterministic task judge. The judge does not reveal a correct expression or a solution trajectory. Instead, it identifies why the current attempt fails, and the policy uses this feedback to generate a revised candidate response. As in standard RGPO, the resulting candidate must satisfy the task verifier before it can provide a learning signal. For 24-Points, the judge first extracts and normalizes the arithmetic expression in the model response and then applies a deterministic sequence of checks. Rather than returning only a binary correctness signal, it produces a short diagnosis of the first detected failure. Table~\ref{tab} summarizes the feedback used by RGPO-7B (w/o CoT). This feedback is intentionally weaker than a solution rationale. For example, when an expression evaluates to 12, the judge tells the model that the result is 12 below the target and should be increased; it does not specify which operation to change or provide a correct expression. Similarly, number-usage errors identify the violated constraint without suggesting a solution. The refinement step must therefore remain policy-generated.
\clearpage
\begin{table}[t]
\centering
\caption{\textbf{RGPO without expert solution rationales on the MM-HELIX 24-Points task.}
RGPO-7B uses only programmatically generated solver traces, whereas RGPO-7B (w/o CoT) receives no solution rationale and refines failed attempts using only deterministic verifier feedback.}
\label{tab:mmhelix}
\small
\begin{tabular}{@{}lrr@{}}
\toprule
\textbf{Model} & \textbf{Accuracy} & \textbf{Response length} \\
\midrule
Qwen2.5-VL-32B-Instruct & 36.67 & 2226.5 \\
WeThink-7B              & 16.67 & 3054.0 \\
VLAA-Thinker-7B         & 10.00 & 3431.1 \\
Llama-3.2-11B-Vision    & 3.33  & 1349.7 \\
MM-HELIX-7B             & 33.33 & 12667.1 \\
\midrule
\textbf{RGPO-7B}        & \textbf{40.00} & 9584.9 \\
RGPO-7B (w/o CoT)       & 36.67 & 10592.8 \\
\bottomrule
\end{tabular}
\end{table}

Despite receiving no expert-written rationale, RGPO-7B achieves the highest accuracy in this comparison at $40.00\%$ (Table \ref{tab:mmhelix}) using only the mechanical solver traces, outperforming MM-HELIX-7B ($33.33\%$). More strikingly, \textbf{RGPO-7B (w/o CoT) reaches $36.67\%$ without any solution rationale}, outperforming MM-HELIX-7B and matching the substantially larger Qwen2.5-VL-32B-Instruct model in this evaluation.  These results suggest that the useful abstraction in RGPO is broader than rationale-prefix supervision. What the refinement stage requires is a \emph{scaffold that makes an improved trajectory easier to discover}; that scaffold can be a partial solution, a mechanically generated trace, or structured feedback describing why the current attempt failed. The final training signal still comes from the policy's own verifier-approved response rather than from directly imitating the guidance source.

\subsection{Effect of Prompt Format}
\label{sec:prompt_format}

In Figure \ref{fig:time}, we show that prompt format has a significant effect on both optimization and efficiency. The Mulberry-style prompt requires image descriptions, rationales, step-by-step sections, additional delimiters, and line breaks, produces much longer responses throughout training, typically around 380--400 tokens. Despite this higher generation cost, its reward remains nearly flat, suggesting that the complex output schema makes the format difficult to learn and does not translate into better task performance without additional supervision (e.g. from SFT loss in the AHPO baseline in the main paper). In contrast, the DeepSeek-R1-style prompt shows a clear reward increase after training begins to improve, while its response length drops sharply to below 100 tokens, indicating that the model learns a more concise and effective reasoning format. This reduction in response length also lowers the runtime per step, making training more efficient. These results motivate our use of the simpler DeepSeek-R1-style prompt: it reduces formatting burden, avoids unnecessary verbosity, and enables more stable learning without requiring extra supervised fine-tuning to enforce the response structure.

\begin{figure}[!ht] 
    \centering
     \includegraphics[width=1\columnwidth]{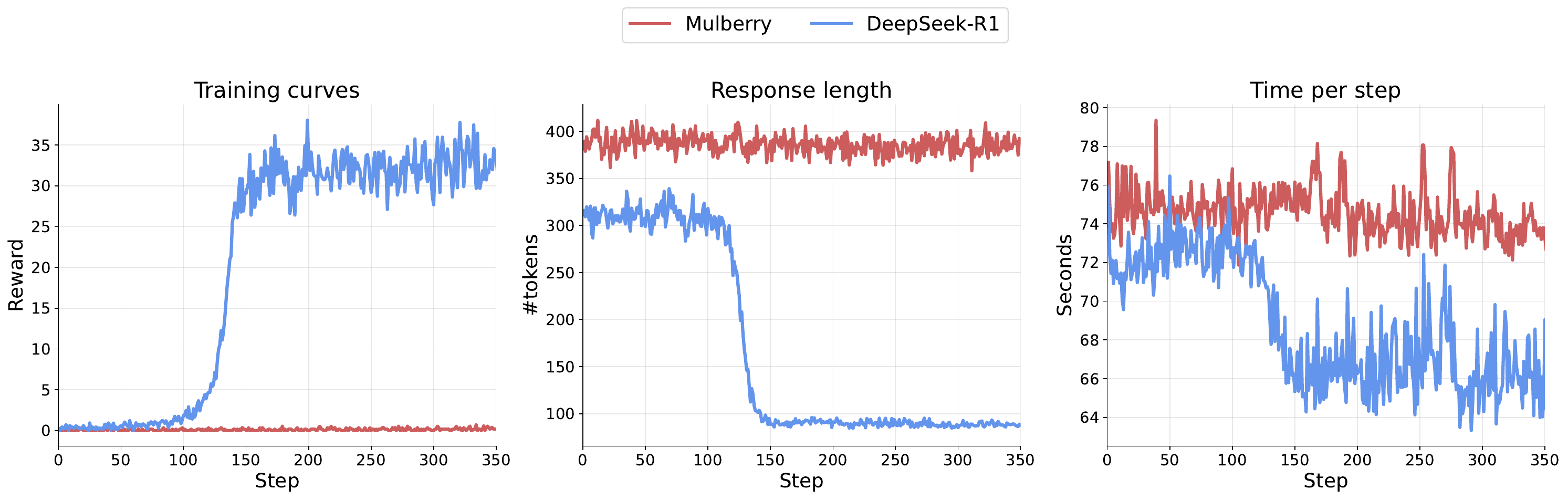}
    \caption{\textbf{Effect of prompt format on training dynamics and efficiency.} We compare the Mulberry-style structured prompt with the simpler DeepSeek-R1-style prompt in terms of reward, response length, and runtime per training step. The DeepSeek-R1-style prompt leads to substantially better learning while producing shorter responses and reducing training time.}
    \label{fig:time}
\end{figure}

\subsection{Robustness to Rationale Quality}

Figure \ref{fig:mmmath} evaluates whether training with rationale guidance leads to improved no-hint performance or merely teaches the model to rely on hints. On the left, both methods are evaluated with the correct CoT appended to the question as a rationale hint. The full-hint baseline achieves the highest accuracy, 95.2, slightly outperforming RGPO at 93.8, which is expected because the baseline is optimized to maximally exploit the provided hint. However, this advantage disappears in the original no-hint setting, where RGPO improves over the full-hint baseline from 33.6 to 38.4. The difference becomes more pronounced under noisy hints, where the CoT is shuffled and appended with an instruction that the model may use it if helpful. In this setting, the full-hint baseline drops to 20.4, suggesting that it tends to trust and follow the misleading rationale rather than solve the problem independently. RGPO achieves a higher score of 25.5, indicating better resistance to corrupted guidance. These results support the motivation of RGPO: rationales should serve as temporary scaffolds during training, not as permanent dependencies. 

\begin{figure}[!ht] 
    \centering
     \includegraphics[width=1\columnwidth]{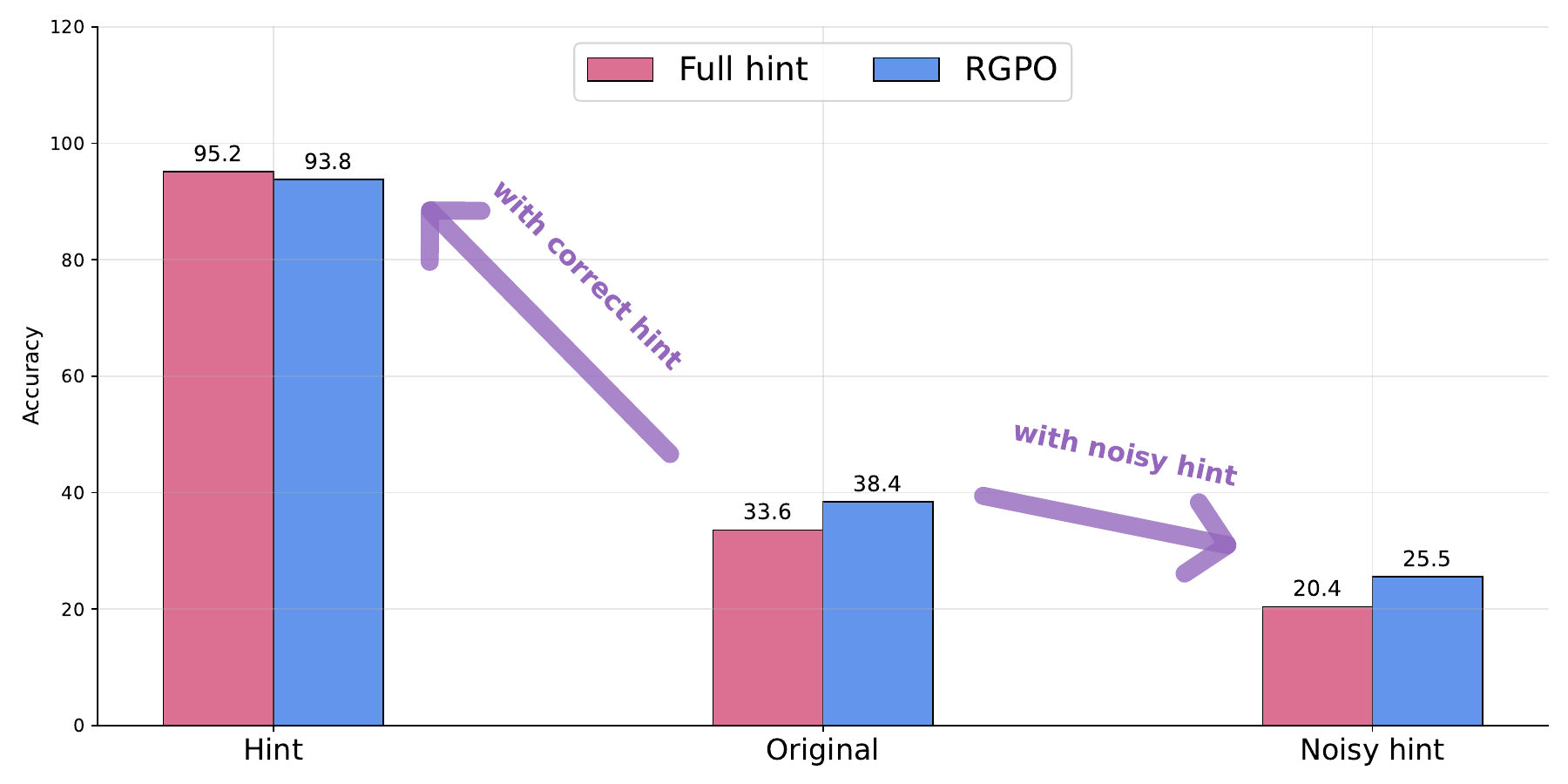}
    \caption{\textbf{Robustness to helpful and noisy rationale hints on MMMath: } We compare RGPO with a full-hint baseline under three evaluation settings: correct rationale hints, the original no-hint benchmark, and shuffled noisy hints. While the full-hint baseline performs slightly better when the correct CoT is provided, RGPO achieves higher accuracy in the original and noisy-hint settings, indicating stronger robustness and less dependence on external rationales.}
    \label{fig:mmmath}
\end{figure}

By selectively using hints only after failure and optimizing for refined solutions, RGPO improves no-hint reasoning while reducing over-reliance on potentially noisy rationales from MMMath~\citep{sun2024mm}. This aligns with recent concerns about over-privileging where they can leak rationales and degrade validation performance, while trained models may overuse unnecessary rationales \citep{yang2026selfdistilledrlvr,li2026fortis}. Always exposing a full or noisy CoT creates a similar shortcut, encouraging the model to rely on the rationale rather than solve independently. RGPO mitigates this risk by providing hints only after unguided failure, verifier-filtering guided outputs, and distilling accepted responses back to the no-hint prompt.

\subsection{Qualitative Analysis}
\label{app:qualitative}

\begin{figure}[!ht] 
    \centering
     \includegraphics[width=1\columnwidth]{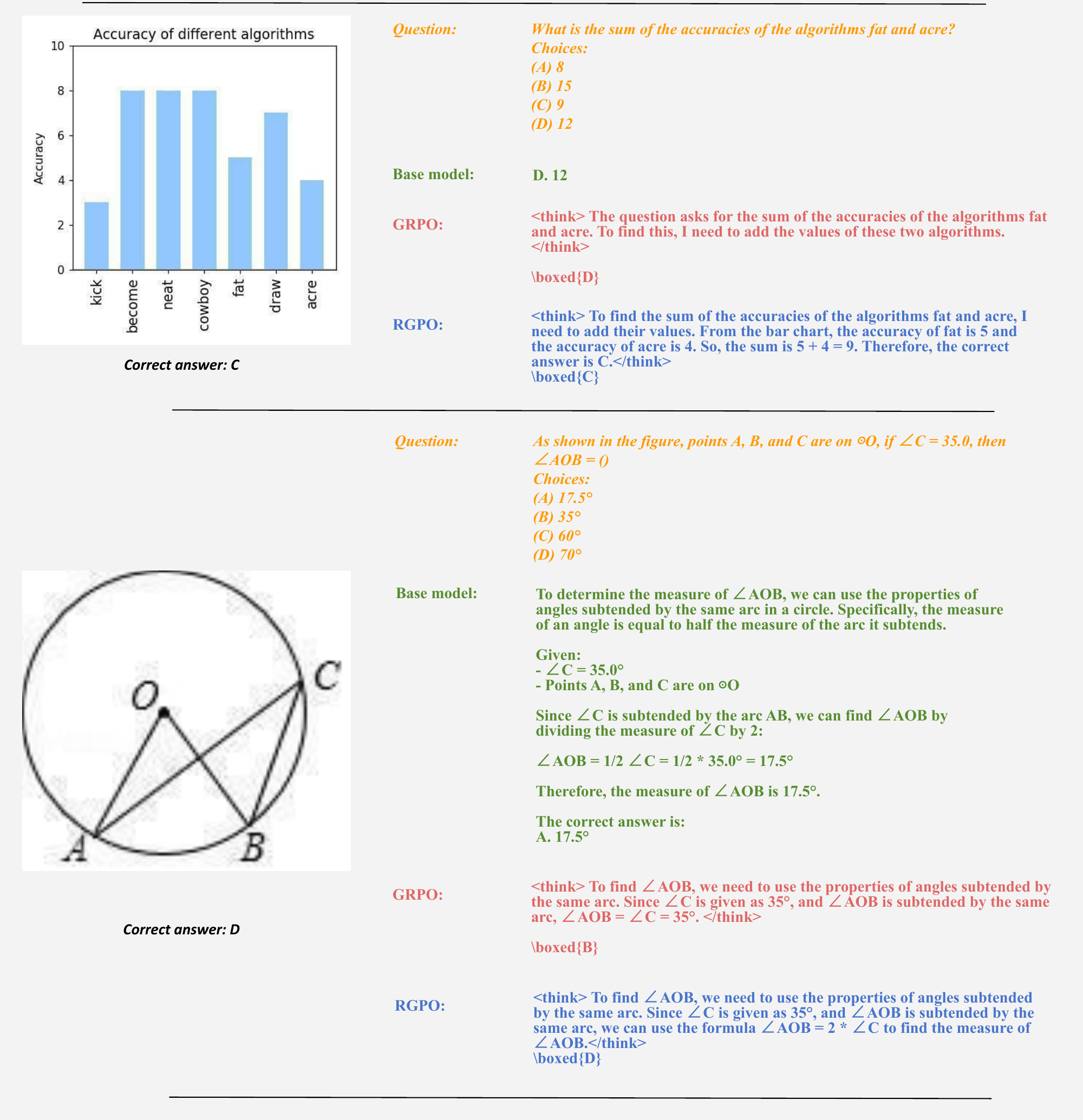}
    \caption{\textbf{Qualitative comparison of model responses:} Example outputs from the base model, GRPO, and RGPO.}
    \label{fig:example1}
\end{figure}

We provide qualitative examples comparing the responses generated by the base model, GRPO, and RGPO in Figure \ref{fig:example1} and Figure \ref{fig:example2}. Compared with the base model, GRPO and RGPO often yield more focused reasoning processes. Moreover, by using rationales as temporary scaffolds, RGPO is able to identify the relevant solution path and answer questions more concisely.

\begin{figure}[!ht] 
    \centering
     \includegraphics[width=1\columnwidth]{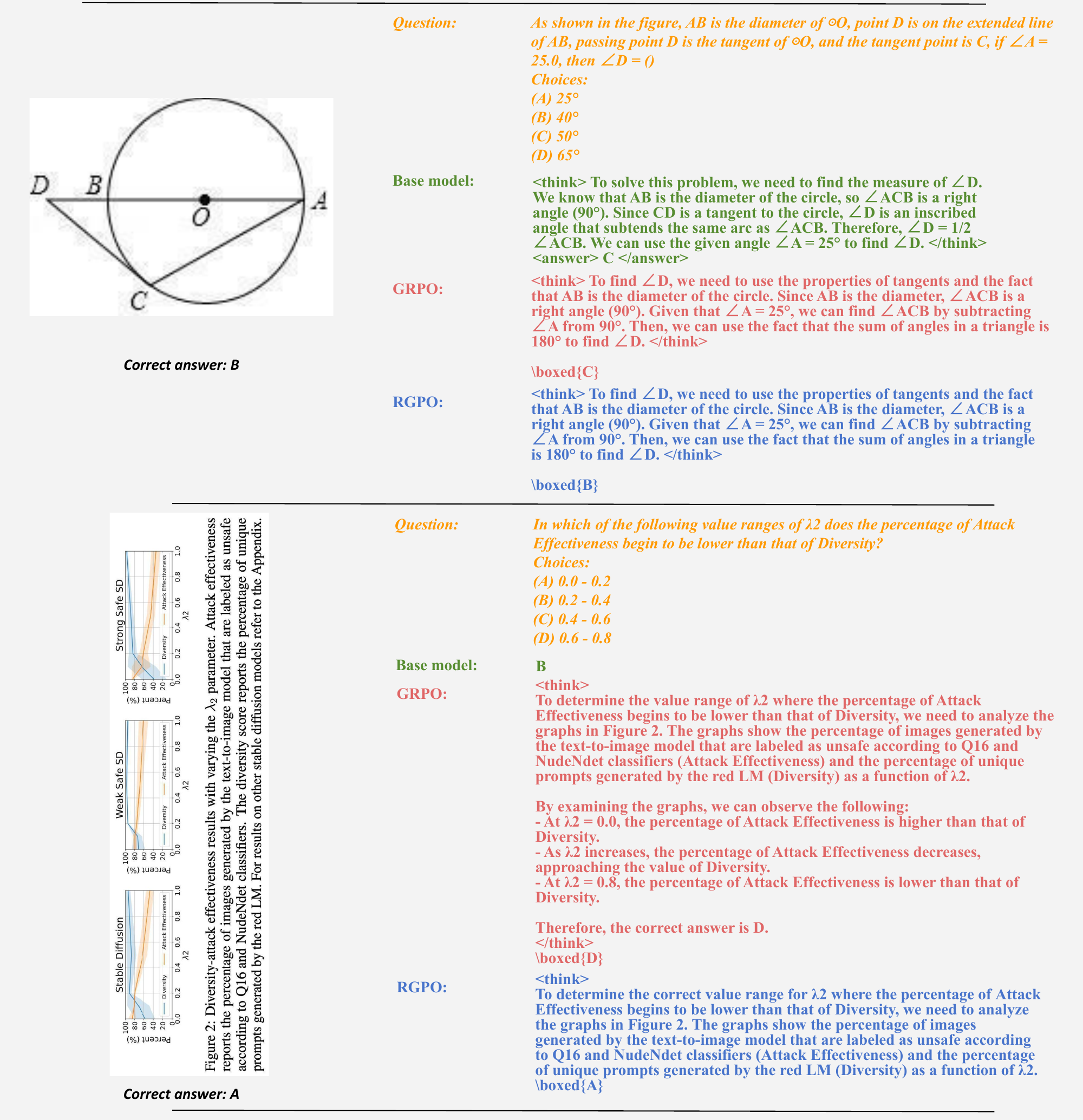}
    \caption{\textbf{Qualitative comparison of model responses:} Example outputs from the base model, GRPO, and RGPO.}
    \label{fig:example2}
\end{figure}

\begin{figure}[!ht] 
    \centering
     \includegraphics[width=1\columnwidth, trim=0.cm 6cm 0.cm 0cm,clip]{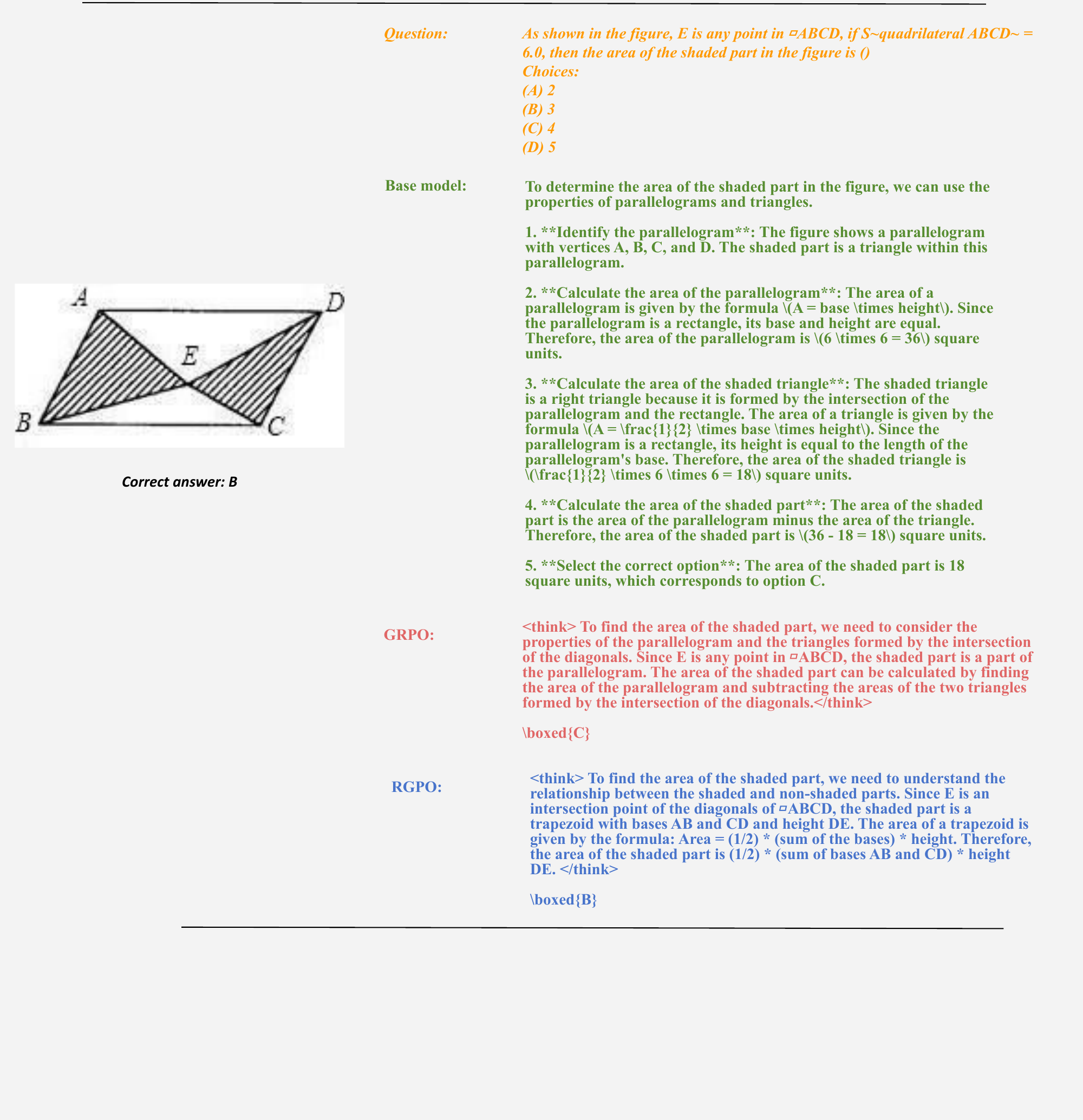}
    \caption{\textbf{Qualitative comparison of model responses:} Example outputs from the base model, GRPO, and RGPO.}
    \label{fig:example3}
\end{figure}

\clearpage
\section{Limitations}
\label{sec:limitations}

In its standard form, RGPO assumes a reference rationale or another informative guidance source, and its effectiveness depends on the quality and granularity of that guidance. The feedback-only MM-HELIX variant relaxes the need for a solution rationale, but requires task-specific diagnostic verifier feedback. If the revealed prefix contains too much answer-specific information, the method may encourage shortcut learning or memorization rather than transferable reasoning; if it contains too little or is stylistically far from the model's own reasoning distribution, it may fail to rescue hard examples from sparse-reward collapse. Moreover, the reward-gated acceptance rule only guarantees improvement under the available verifier, not genuine reasoning quality, so RGPO can still inherit verifier bias, reward hacking, or spurious formatting gains. The adaptive schedule also introduces additional compute from guided rollouts and auxiliary SFT updates, making the method more expensive than a single-pass PPO or GRPO baseline. However,  we keep the number of rollouts per prompt the same as in the GRPO baseline to ensure a fair comparison.

\section{Broader Impacts}
\label{sec:broader_impacts}

RGPO aims to make reasoning-oriented post-training more sample-efficient by using reference rationales as adaptive training-time scaffolds, which may help smaller language and vision-language models learn from sparse verifiable rewards without relying as heavily on large teacher models, rejection sampling, or fully formatted supervised traces; this could benefit education, scientific analysis, software assistance, mathematical problem solving, and multimodal question answering. However, stronger reasoning ability is inherently dual-use: the same improvements that help benign problem solving may also assist harmful planning, automated persuasion, misinformation generation, cyber misuse, or other strategic tasks. RGPO also improves verifier reward rather than guaranteeing faithful reasoning, so generated rationales may remain incomplete, misleading, or optimized for the reward function rather than for truth, especially when verifiers or reference solutions contain biases, shortcuts, or annotation artifacts. These risks suggest that RGPO should be paired with careful verifier design, dataset auditing, robustness and out-of-distribution evaluation, and appropriate deployment safeguards for high-capability models. The method also adds compute overhead through guided rollouts and auxiliary distillation, although this cost may be offset in some settings if adaptive hinting improves sample efficiency or enables smaller models to reach competitive reasoning performance.
\clearpage